\documentclass{article}

\usepackage[final]{neurips_2025}

\usepackage[utf8]{inputenc} 
\usepackage[T1]{fontenc}    
\usepackage{hyperref}       
\usepackage{url}            
\usepackage{booktabs}       
\usepackage{amsfonts}       
\usepackage{nicefrac}       
\usepackage{microtype}      
\usepackage{xcolor}         


\usepackage{amsmath,amssymb,amsthm}
\usepackage{natbib}
\usepackage{graphicx,here,comment}
\usepackage{multirow}
\usepackage{enumitem}
\usepackage{bm}

\newtheorem{theorem}{Theorem}[section]
\newtheorem{corollary}[theorem]{Corollary}
\newtheorem{lemma}[theorem]{Lemma}
\newtheorem{proposition}[theorem]{Proposition}

\newtheorem{assumption}{Assumption}[section]
\newtheorem{fact}{Fact}[section]

\theoremstyle{definition}
\newtheorem{definition}{Definition}[section]
\newtheorem{remark}{Remark}[section]
\newtheorem{example}{Example}[section]

\usepackage{csquotes}
\usepackage{newtxtext,newtxmath}

\usepackage{caption}
\usepackage{subcaption}
\usepackage{mathtools}
\mathtoolsset{showonlyrefs=true}

\usepackage{algorithm,algpseudocode}

\newcommand{\netsor}{{$\textsc{Netsor}$}}

\title{Infinite-Width Limit of a Single Attention Layer: Analysis via Tensor Programs}

\author{
Mana Sakai$^{1,3}$\quad Ryo Karakida$^{2,3}$\quad Masaaki Imaizumi$^{1,3}$
\vspace{1em}
\\
$^1$The University of Tokyo \\ $^2$National Institute of Advanced Industrial Science and Technology \\ $^3$RIKEN Center for Advanced Intelligence Project
\vspace{1em}
\\
\texttt{mana.sakai.77@gmail.com, karakida.ryo@aist.go.jp,} \\ \texttt{imaizumi@g.ecc.u-tokyo.ac.jp}
}

\begin{document}

\maketitle

\begin{abstract}
In modern theoretical analyses of neural networks, the infinite-width limit is often invoked to justify Gaussian approximations of neuron preactivations (e.g., via neural network Gaussian processes or Tensor Programs). However, these Gaussian-based asymptotic theories have so far been unable to capture the behavior of attention layers, except under special regimes such as infinitely many heads or tailored scaling schemes. In this paper, leveraging the Tensor Programs framework, we rigorously identify the infinite-width limit distribution of variables within a single attention layer under realistic architectural dimensionality and standard $1/\sqrt{n}$-scaling with $n$ dimensionality. We derive the exact form of this limit law without resorting to infinite-head approximations or tailored scalings, demonstrating that it departs fundamentally from Gaussianity. This limiting distribution exhibits non-Gaussianity from a hierarchical structure, being Gaussian conditional on the random similarity scores. Numerical experiments validate our theoretical predictions, confirming the effectiveness of our theory at finite width and accurate description of finite-head attentions. Beyond characterizing a standalone attention layer, our findings lay the groundwork for developing a unified theory of deep Transformer architectures in the infinite-width regime.
\end{abstract}

\section{Introduction}

A useful approach to understanding the complex probabilistic behavior of neural networks is through the study of parameter distributions in the infinite-width limit. Notable examples include the neural network Gaussian process (NNGP) \cite{lee2018deep,hron2020infinite}, which approximates the limit of stochastic parameter distributions with Gaussian processes; the neural tangent kernel (NTK) \cite{jacot2018neural}, which represents the model near the initial value with kernel functions; mean field theory, which describes the update of parameter distributions \cite{mei2018mean}; and the Tensor Program, developed by \cite{yang2019scaling,yang2020tensor2,yang2021tensor2b,yang2020tensor3,yang2021tensor4,yang2023tensor4b,yang2022tensor5,yang2024tensor6}, which is a general probabilistic analytical framework that unifies the representation of the infinite-width limit for a wide range of neural architectures and multiple layers.
These methods provide probabilistic models that closely approximate the complex phenomena of neural networks.

One challenge in the studies is to properly represent the attention mechanisms used in Transformers~\cite{vaswani2017attention}, which frequently appear in recent large-scale architectures \cite{devlin2019bert,achiam2023gpt,zhao2023survey}.
Unlike ordinary multi-layer perceptrons, an attention layer has interactions between query and key variables by multiplication, which makes the infinite-width limit distribution considerably more complex as shown in \cite{hron2020infinite}, for example.
To avoid this difficulty, the existing studies have mainly limited themselves to two special settings to compute the parameter distribution of the limit:
(i) \textit{Infinite-head regime}: \cite{hron2020infinite} considers the NNGP for the attention mechanism with infinite heads, resulting in a Gaussian approximation.
(ii) \textit{$1/n$-scaling regime}: the Tensor Programs \cite{yang2019tensor1} approximate the multiplication by changing the scale of the input variables for attention layers from $1/\sqrt{n}$ to $1/n$, where $n$ is a dimension of variables.
However, these simplifications compromise the expressiveness and structure of actual attention mechanisms (and consequently, of Transformers).
Specifically, the limit distribution at the infinite-head is not an effective approximation of the actual attention mechanisms because it differs significantly from the finite-head case.
Also, the $1/n$-scaling regime makes all similarity scores converge to zero in the infinite-width limit, making the model equivalent to not measuring similarity between key and query vectors.
Therefore, the limit parameter distribution of the attention mechanism is still under development.

In this study, we investigate the infinite-width limit distribution of outputs of a single attention layer under the common scaling and number of heads. To achieve this, we apply the Tensor Programs framework and analyze a new class of variables defined by the multiplication of intermediate variables, and derive a corresponding limit distribution. This new class of variables allows us to represent the dot-product score by the multiplication of keys and queries in the attention mechanism.

We summarize our contributions as follows:
\begin{itemize}
    \item \textit{Non-Gaussian limiting distribution}: We study a distribution of outputs of a single attention layer with the $1/\sqrt{n}$-scaling and finite heads, and demonstrate that in the infinite-width limit, the output distribution converges to a hierarchical Gaussian distribution, which is a type of non-Gaussian. Specifically, the limiting distribution is a Gaussian conditional on the random similarity score, and this score variable itself converges to a Gaussian.
    \item \textit{Consistency with numerical experiments}: Our experiments justify that our theoretical limit distribution accurately captures the non-Gaussian behaviors exhibited by attention mechanisms. Specifically, even when the width is finite, our theory proves sufficiently accurate, provided that the dimension is large enough.
    \item \textit{Novel proof technique with dot-products}: We develop a novel proof technique focused on analyzing the similarity score variables by the dot-products of an attention layer. More concretely, we first show a convergence of the score variables to their limiting distributions, and then prove a conditional weak convergence of the outputs of an attention layer. This analysis characterizes the output distribution of an attention layer by incorporating the intrinsic randomness from the dot-product.
\end{itemize}

\subsection{Related Works}

The concentration of measures in neural networks plays a fundamental role in both theoretical analysis and practical applications of machine learning. Initially, studies in this area aimed to characterize the feedforward signal propagation in wide neural networks with random weights, inspired by statistical mechanics \cite{Amari1977neural, sompolinsky1988chaos}. The outputs of such random neural networks converge to a Gaussian process, and the computation of signal propagation reduces to the composition of kernel functions of Gaussian processes utilized in machine learning \cite{neal1996bayesian,williams1996computing,daniely2016toward,lee2018deep,matthews2018gaussian}. They are often referred to as the NNGP.
Since the NNGP kernel captures intrinsic inductive biases of architectures, its prediction performance correlates well with trained networks across different architectures \cite{lee2018deep}.
Moreover, one can interpret NNGP as a network with random initialization of optimization, which naturally leads to quantitative insight into desirable weight scales to avoid exploding/vanishing signal and gradient problems \cite{poole2016exponential,schoenholz2017deep}.

Depending on the network architecture, various types of kernels can be obtained, not only for fully-connected neural networks, but also convolutional neural networks (CNNs) \cite{novak2019bayesian}, skip connections \cite{yang2017mean}, naive or gated recurrent neural networks (RNNs) \cite{chen2018dynamical,yang2019tensor1}, and more \cite{yang2019mean,gao2023wide}. While some classical works assumed random Gaussian weights generated in an i.i.d. manner, recent research has shown that the same NNGP can be derived even in networks with non-Gaussian \cite{golikov2022non} or weakly correlated weights \cite{saada2024beyond}. The NNGP can also be obtained for a network with a narrow bottleneck layer sandwiched between wide layers \cite{agrawal2020wide}.

Two pioneering studies have investigated the NNGP of self-attention layers \cite{yang2019tensor1,hron2020infinite}. Initially, \cite{yang2019tensor1} pointed out that an unconventional scaling factor of $1/n$ in the softmax function enables straightforward NNGP evaluation. Greg Yang has introduced the theoretical framework of Tensor Programs \cite{yang2019scaling,yang2020tensor2,yang2020tensor3}, systematically composing kernel functions, including NNGPs, for modern neural networks. Transformers with the $1/n$-scaling fall within the scope of the applicability of Tensor Programs. However, the $1/n$-scaling is rarely used in theoretical or practical contexts, leaving the more realistic $1/\sqrt{n}$-scaling unresolved. To attack this problem, \cite{hron2020infinite} analyzed self-attention by varying the number of heads. They numerically demonstrated non-Gaussian behavior emerging in the single-head case due to stochasticity and correlations within the attention matrix, even at infinite embedding dimensions. They further showed that if we take the infinite limit of the number of heads, Gaussian behavior emerges in the self-attention output and defines an NNGP kernel termed infinite attention. Although this infinite attention empirically improved performance on certain NNGP regression benchmarks, its suitability as a theoretical foundation for realistic self-attention remains uncertain. This is because practical attention implementations typically use only 1-128 heads \cite{everett2024scaling}, far fewer than the embedding dimension.

Beyond the classical NNGP and Tensor Programs analyses, several recent works have examined Transformers under the standard $1/\sqrt{n}$ scaling and related asymptotic or dynamical regimes. \cite{dinan2023effective} analyzed Transformers by tracking the first two moments (the kernel) of the signal to characterize propagation at initialization and during training. \cite{cowsik2024geometric} applied mean-field theory to characterize the edge of chaos via forward and backward signal propagation, assuming Gaussianity of the QK product. \cite{bordelon2024infinite} employed dynamical mean field theory to study training dynamics under various infinite limits, including infinite width, heads, and depth, identifying parameterizations that ensure stable feature learning over time. \cite{noci2023shaped} modeled signal evolution with a stochastic differential equation, which requires modifying the softmax function.

\section{Preliminary}

\subsection{Notation and Setup of Neural Networks}\label{subsec:setup}

We define the notation for a standard neural network and its usage. In what follows, we denote by $n$ the dimensionality corresponding to the network's width. Although we can vary $n$ across different layers or architectures, for simplicity we here treat every layer as having the same width $n$. A comprehensive summary of the notations used throughout this paper is provided in Appendix~\ref{app:sec:NotationSummary}.

\paragraph{Neural network}
We define notation for standard neural networks, excluding the attention mechanism. In particular, we adopt notations inspired by the framework of Tensor Programs \cite{yang2019scaling,yang2019tensor1}, which allow us to describe a broad class of neural network architectures.

We describe an architecture of feed-forward neural networks as a finite set of $\mathbb{R}^{n}$-valued random vectors $
h^{1},\dots,h^{J}$, which is inductively generated by the following rule. We fix a nonempty subset $\mathcal{V}_{0}\subset\{h^{1},\dots,h^{J}\}$, called the set of initial vectors (input layer). For each index $k$ with $h^{k}\notin\mathcal{V}_{0}$, the vector $h^{k}$ is generated either by matrix multiplication (\textsf{MatMul}) or by a coordinatewise nonlinearity (\textsf{Nonlin}). In the \textsf{MatMul} rule, given a weight matrix $W\in\mathbb{R}^{n\times n}$ and some $j\neq k$, one sets $h^{k}=Wh^{j}$. In the \textsf{Nonlin} rule, given $k\notin\{j_{1},\dots,j_{m}\}\subset[J]$ and a function $\phi:\mathbb{R}^{m}\to\mathbb{R}$, one sets
\[
    h^{k}=\phi(h^{j_{1}},\dots,h^{j_{m}}),\quad
    h^{k}_{\alpha}=\phi(h^{j_{1}}_{\alpha},\dots,h^{j_{m}}_{\alpha})
    \quad(\alpha\in[n])
    .
\]
This type of Tensor Program is called \netsor, and it covers a broad range of neural network architectures, including a perceptron layer, a convolutional layer, a recurrent layer, and many others \cite{yang2019scaling,yang2019tensor1}.

We present the perceptron layer as a specific neural network represented by \netsor. Suppose $x^{\ell-1}\in\mathbb{R}^{n}$ is the input of the $\ell$-th layer. It generates pre-activation variable $z^{\ell}\in\mathbb{R}^{n}$ and the input of the next layer $x^{\ell}\in\mathbb{R}^{n}$ by
\begin{align}
    z^{\ell}=W^{\ell}x^{\ell-1}
    ,\quad
    x^{\ell}=\phi(z^{\ell})
    ,
\end{align}
where $W^{\ell}\in\mathbb{R}^{n\times n}$ is a weight matrix and $\phi:\mathbb{R}\to\mathbb{R}$ is a (potentially nonlinear) coordinatewise activation function. The vectors $z^{\ell}$ and $x^{\ell}$ are generated by \textsf{MatMul} and \textsf{Nonlin}, respectively.

\paragraph{Multi-head attention}

We define the attention layer. Let $s$ and $H$ denote the spatial dimension and the number of heads, respectively. Suppose $W^{Q,a},W^{K,a},W^{V,a},W^{O,a}\in\mathbb{R}^{n\times n}$ are weight matrices for head $a\in[H]$. With an input sequence of $s$ random vectors $x^{1},\dots,x^{s} \in\mathbb{R}^{n}$, we define $X\in\mathbb{R}^{s\times n}$ as a matrix whose $i$-th row is $x^{i}$, i.e., $X^{\top}=[x^{1}\ \cdots\ x^{s}]$. For each head $a\in[H]$, define
$Q^{(a)} = X(W^{Q,a})^{\top},K^{(a)} = X(W^{K,a})^{\top},V^{(a)} = X(W^{V,a})^{\top}$, so that $Q^{(a)},K^{(a)},V^{(a)}$ are $\mathbb{R}^{s\times n}$ matrices. The scaled dot-product score matrix is given by
\begin{equation}\label{eq:DotProduct}
    G^{(a)}
    =\frac{1}{\sqrt{n}}Q^{(a)}(K^{(a)})^{\top}
    =(p_{i,j}^{(a)})_{i,j\in[s]}
    \in\mathbb{R}^{s\times s},
    \quad
    p_{i,j}^{(a)}
    =\frac{1}{\sqrt{n}}(W^{Q,a}x^{i})^{\top}(W^{K,a}x^{j})
    .
\end{equation}
Here, the common scaling $1/\sqrt{n}$ in the definition of $p_{i,j}^{(a)}$ is a key issue in this study.\footnote{Our analysis focuses on the $1/\sqrt{n}$-scaling, a choice motivated by its prevalence in both practical Transformer implementations and the theoretical literature (e.g., \cite{hron2020infinite,dinan2023effective,cowsik2024geometric}). While some analyses have raised questions about its stability \cite{yang2022tensor5,bordelon2024infinite}, the precise conditions for stable training remain an active area of research and may depend on factors often abstracted away in simplified limits, such as the number of tokens or data-specific statistics. For instance, recent work has shown that data statistics can significantly alter optimal scaling rules \cite{hayou2025optimal}.} Applying the row-wise softmax function to the score matrix, we define
\begin{align}
    A^{(a)}
    =\mathrm{SoftMax}(G^{(a)})
    \in\mathbb{R}^{s\times s},
    \quad
    A^{(a)}_{i,j}
    =\mathrm{SoftMax}_{j}(p^{(a)}_{i,1},\dots,p^{(a)}_{i,s})
    .
\end{align}
We then weight the values to form each head's output $\mathrm{Head}^{(a)}=A^{(a)}V^{(a)}\in\mathbb{R}^{s\times n}$. Finally, we sum across all $H$ heads to recover the full attention output $\mathrm{MultiHead}=H^{-\frac{1}{2}}\sum_{a=1}^{H}\mathrm{Head}^{(a)}(W^{O,a})^{\top}\in\mathbb{R}^{s\times n}$, where each row $\mathrm{MultiHead}_{i\cdot}$ is given by
\begin{equation}\label{eq:MultiHead.row}
    (\mathrm{MultiHead}_{i\cdot})^{\top}
    =\frac{1}{\sqrt{H}}\sum_{a=1}^{H}\sum_{j=1}^{s}W^{O,a}W^{V,a}x^{i}\mathrm{SoftMax}_{j}(p^{(a)}_{i,1},\dots,p^{(a)}_{i,s})
    \in\mathbb{R}^{n}
    \quad(i\in[s])
    .
\end{equation}

\paragraph{Weight initialization}

We set the weight matrices such as $W^{\ell},W^{Q,a},W^{K,a},W^{V,a}$, and $W^{O,a}$, and initial vectors $h \in \mathcal{V}_0$ as follows:
\begin{enumerate}
    \item[(i)] Each weight matrix $W$ is independent. Each $(\alpha,\beta)$ element of $W$ is sampled i.i.d. from $W_{\alpha\beta}\sim N(0,\sigma_{W}^{2}/n)$, where $\sigma_{W}>0$ is a constant that may depend on $W$.
    \item[(ii)] Let $Z^{\mathcal{V}_{0}}=\{Z^{h}:h\in\mathcal{V}_{0}\}\in\mathbb{R}^{|\mathcal{V}_{0}|}$ be a multivariate normal distribution. For each $\alpha\in[n]$, the collection of $\alpha$-th components of all initial vectors in $\mathcal{V}_{0}$, denoted by $\{h_{\alpha}:h\in\mathcal{V}_{0}\}$, is sampled i.i.d. from $Z^{\mathcal{V}_{0}}$.
\end{enumerate}
This setup of weight matrices is common in practice (see \cite{glorot2010understanding} for summary) and also identical to that in the existing Tensor Programs \cite{yang2019scaling,yang2019tensor1,yang2020tensor3}.

\subsection{Distribution of Attention Mechanism in Previous Setup}

We discuss the existing challenges on characterizing the output distribution of the attention layer, which exhibits non-Gaussian behavior. To derive this distribution, two regimes are typically considered: the choice of scaling in the similarity computation, and the number of heads.

\paragraph{$1/n$-scaling regime}

\cite{yang2019tensor1} studies an attention layer whose scaling term $1/\sqrt{n}$ in the scaled dot-product of Eq.~\eqref{eq:DotProduct} is replaced by $1/n$. In this regime, the Tensor Programs framework can show that the outputs of the attention layer converge to a Gaussian distribution: as $n\to\infty$, it holds that
\begin{align}
    \textstyle
    \mathrm{MultiHead}_{i\alpha}\stackrel{d}{\longrightarrow}N(0,\kappa^{2}),
    \quad(\alpha\in[n])
    ,
\end{align}
with some $\kappa>0$ (see Appendix A and Theorem E.8 in \cite{yang2019tensor1} for details).
Intuitively, in the $1/n$-scaling regime, the dot-product score $p_{i,j}^{(a)}$ converges to $0$ for \textit{every} pair $(i,j)\in[s]^{2}$ and head $a\in[H]$, which simplifies the non-Gaussian behavior of the attention outputs.

\paragraph{Infinite-heads regime}

\cite{hron2020infinite} considers the regime in which many dimensionalies diverge to infinite, then shows the convergence of the attention outputs to Gaussian.
Specifically, it is shown that as $n,H\to\infty$, it holds that
\begin{align}
\textstyle
    \mathrm{MultiHead}_{i\alpha}\stackrel{d}{\longrightarrow}N(0,(\kappa')^{2}),
    \quad(\alpha\in[n])
    ,
\end{align}
with some $\kappa'>0$ (see Theorem 3 in \cite{hron2020infinite}).
The variance $(\kappa')^{2}$ is described by a covariance of the nonlinearly transformed dot-product score $p_{i,j}^{(a)}$.
In the result, by letting the number of heads $H$ grow to infinite, the complex non-Gaussian effects from the dot-product score are smoothed out, yielding a convenient Gaussian distribution.

While these approximations are analytically appealing, they have notable limitations. In practice, Transformers typically use $1/\sqrt{n}$-scaling (see Equation (1) in \cite{vaswani2017attention}) and a finite number of heads, resulting in the frequently observed non-Gaussian behaviors of attention layers. Although the above regimes simplify the behavior into a Gaussian form for tractability, a more precise theory is needed to capture the true behavior of attention layers in practice.

\section{Main Theorem}

\subsection{Limiting Distribution}

We introduce the limiting distribution for two types of quantities: variables generated within the \netsor\ program, and the scalar dot-products that arise in attention mechanisms. The convergence of these variables to their limiting distributions is formally established in our main result, Theorem~\ref{thm:MainTheorem}.

\begin{definition}[Limiting Distribution] \label{def:limiting_distribution}
    \textbf{(A) Limiting distribution for vectors in the \netsor\ program:} For each vector $h\in\mathbb{R}^{n}$ in the \netsor\ program, there exist a corresponding random variable $Z^h$ as follows:
    \begin{enumerate}
        \setlength{\parskip}{0cm} 
        \setlength{\itemsep}{0cm} 
        \item[(i)] If $h$ is an initial vector from $\mathcal{V}_0$, then $Z^{h}$ follows the distribution specified in Section~\ref{subsec:setup}.
        \item[(ii)] If $g^{1},\dots,g^{k}$ are generated by \textsf{MatMul}, then $(Z^{g^{1}},\dots,Z^{g^{k}})$ is a zero-mean Gaussian vector. Specifically, for $g^{i}=W^{i}h^{i}$ and $g^{j}=W^{j}h^{j}$, their covariance is given by
        \[
            \mathrm{Cov}(Z^{g^{i}},Z^{g^{j}})
            =\begin{cases}
                0&(\text{ if }W^{i}\text{ and }W^{j}\text{ are different matrices}),\\
                \sigma_{W}^{2}\mathbb{E}[Z^{h^{i}}Z^{h^{j}}]&
                (\text{ if }W^{i}\text{ and }W^{j}\text{ are the same matrix})
                .
            \end{cases}
        \]
        \item[(iii)] If $h$ is generated by \textsf{Nonlin}, $h=\phi(h^{1},\dots,h^{k})$, then $Z^{h}$ is defined as $Z^{h}=\phi(Z^{h^{1}},\dots,Z^{h^{k}})$.
    \end{enumerate}
    \textbf{(B) Limiting distribution for scalar dot-products:} For each scalar $p_{i} =(g^{i,1})^{\top}g^{i,2}/\sqrt{n}$, where $g^{i,1}$ and $g^{i,2}$ are outputs of \textsf{MatMul} within the \netsor\ program, we define $\mathring{p}_{i}$ as follows:
    \begin{enumerate}
        \setlength{\parskip}{0cm} 
        \setlength{\itemsep}{0cm} 
        \item[(iv)] $(\mathring{p}_{1},\dots,\mathring{p}_{r})$ is a zero-mean Gaussian vector. This vector is statistically independent of all $Z^{h}$ variables defined in (A). The covariance of $(\mathring{p}_{1},\dots,\mathring{p}_{r})$ is given by
        \[
            \mathrm{Cov}(\mathring{p}_{i},\mathring{p}_{k})
            =\mathbb{E}[Z^{g^{i,1}}Z^{g^{i,2}}Z^{g^{k,1}}Z^{g^{k,2}}]
            .
        \]
    \end{enumerate}
\end{definition}

In Definition~\ref{def:limiting_distribution}, the formulations for the limiting random variables $Z^{h}$ associated with initial vectors, \textsf{MatMul} outputs, and \textsf{Nonlin} outputs are developed by the existing Tensor Programs framework \citep{yang2019tensor1}. Specifically, the limiting distribution of initial vectors follows that given in Section~\ref{subsec:setup}; variables generated by \textsf{MatMul} converge to Gaussian distribution; and variables generated by \textsf{Nonlin} are nonlinear transformations of the corresponding limiting distributions.

The distinct aspect of our analysis lies in the treatment of the limiting distribution for the scalar dot-products $(\mathring{p}_{1},\dots,\mathring{p}_{r})$, which represent pre-softmax attention scores. While the marginal Gaussian distribution of these scores is consistent with that in the infinite-head limit (see Theorem 3 in \cite{hron2020infinite}), our framework explicitly defines them as a Gaussian vector that is \textit{independent} of the limiting distributions $Z^{h}$ within the \netsor\ program. This independence facilitates the computation of the limiting distribution of the attention outputs, as it allows us to treat the attention scores separately from the other variables in the \netsor\ program.

\subsection{Statement}

In this section, we show the convergence of the distribution of variables in the presence of an attention layer as the main result.
Here, we refer to a function as pseudo-Lipschitz if it is pseudo-Lipschitz of order $d$ for some $d\in[2,\infty)$. See Definition~\ref{def:pseudo-Lipschitz} for the definition of pseudo-Lipschitz functions.

As a preparation, we first introduce the results of the convergence of the \netsor\ program without an attention layer.
The following theorem is a slight simplification of Theorem 5.4 in \cite{yang2019tensor1}.\footnote{\cite{yang2019tensor1} considers controlled functions instead of pseudo-Lipschitz functions, the former of which is a generalization of pseudo-Lipschitz functions.}

\begin{fact}[\netsor\ Master Theorem \cite{yang2019tensor1}]\label{fact:NetsorMasterTheorem}
    Consider a \netsor\ program. Suppose all initial vectors and weight matrices are sampled as explained in Section~\ref{subsec:setup}, and all nonlinearities used in \textsf{Nonlin} are pseudo-Lipschitz. For any positive integer $k$, let $h^{1},\dots,h^{k}$ be any vectors in the \netsor\ program. Then, for any pseudo-Lipschitz function $\psi:\mathbb{R}^{k}\to\mathbb{R}$, we have
    \[
        \frac{1}{n}\sum_{\alpha=1}^{n}\psi(h^{1},\dots,h^{k})
        \stackrel{a.s.}{\longrightarrow}\mathbb{E}[\psi(Z^{h^{1}},\dots,Z^{h^{k}})]
    \]
    as $n\to\infty$. Here, $Z^{h^{1}},\dots,Z^{h^{k}}$ are defined in Definition~\ref{def:limiting_distribution}, (i)--(iii).
\end{fact}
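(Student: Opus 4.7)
The plan is to prove the statement by induction on the number of operations used to build the \netsor\ program, with the main work concentrated on \textsf{MatMul} steps that reuse a weight matrix; the \textsf{Nonlin} case reduces to continuous-mapping arguments, and the base case is a straightforward law of large numbers. Throughout, I would fix a pseudo-Lipschitz $\psi$ of order $d$ and track the empirical averages $\frac{1}{n}\sum_{\alpha=1}^{n}\psi(h^{1}_{\alpha},\dots,h^{k}_{\alpha})$ against their conjectured limits $\mathbb{E}[\psi(Z^{h^{1}},\dots,Z^{h^{k}})]$, arguing convergence jointly for all tuples so that later steps may appeal to previously-established limits.

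For the base case, when all of $h^{1},\dots,h^{k}$ are initial vectors in $\mathcal{V}_{0}$, their coordinates are i.i.d. samples from the Gaussian $Z^{\mathcal{V}_{0}}$ specified in Section~\ref{subsec:setup}; the strong law of large numbers applied to $\psi$ (whose pseudo-Lipschitz growth gives finite $d$-th moments) yields almost sure convergence. The \textsf{Nonlin} step is also routine: if $h=\phi(h^{1},\dots,h^{m})$ is added and $\phi$ is pseudo-Lipschitz, then $\psi(\dots,h,\dots)=\tilde{\psi}(\dots,h^{1},\dots,h^{m},\dots)$ for a pseudo-Lipschitz $\tilde{\psi}$, and the inductive hypothesis applied to $\tilde{\psi}$ gives the result with limit $\mathbb{E}[\psi(\dots,\phi(Z^{h^{1}},\dots,Z^{h^{m}}),\dots)]$, matching Definition~\ref{def:limiting_distribution}(iii).

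The core of the argument is the \textsf{MatMul} step, in which a new vector $g=Wh$ is appended. I would use the \emph{Gaussian conditioning} trick: conditionally on the sigma-algebra $\mathcal{F}$ generated by every previously formed vector involving $W$, say by the constraints $WY=U$ and $W^{\top}\tilde{Y}=\tilde{U}$ coming from all prior products with $W$ and $W^{\top}$ inside the program, the matrix $W$ has the same distribution as $E[W\mid\mathcal{F}] + \Pi W'\Pi'$, where $W'$ is an independent i.i.d.\ Gaussian copy and $\Pi,\Pi'$ are the orthogonal projections off the span of past inputs/outputs. Writing $Wh$ accordingly splits $g$ into a \emph{deterministic} part, expressible as a linear combination of previously-formed \textsf{MatMul} outputs whose coefficients are determined by empirical inner products $\frac{1}{n}h^{\top}h^{j}$, and a \emph{fresh} part whose coordinates are conditionally i.i.d.\ Gaussian. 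By the inductive hypothesis, these empirical inner products converge a.s.\ to $\mathbb{E}[Z^{h}Z^{h^{j}}]$, which after algebra produces exactly the covariance prescribed in Definition~\ref{def:limiting_distribution}(ii); conditionally on $\mathcal{F}$ the fresh part is Gaussian with variance converging to the residual variance, and a Gaussian-Poincaré concentration bound upgrades convergence in probability to almost sure convergence of $\frac{1}{n}\sum_{\alpha}\psi(\dots,g_{\alpha},\dots)$ to the claimed limit.

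The main obstacle is controlling the coefficients that arise from past uses of $W$: when the program reuses $W$ many times, the projection $\Pi$ acts on a space whose dimension grows with the program, and one must show that the empirical Gram matrix of past inputs to $W$ is asymptotically invertible (or handle its pseudoinverse with care) so that the conditional decomposition is well-defined and its coefficients stabilize. This requires invoking the inductive hypothesis on a matrix-valued pseudo-Lipschitz functional and verifying that the limiting covariance remains nondegenerate on the relevant subspace, with degenerate cases handled by a continuity/perturbation argument. Once this linear-algebraic bookkeeping is in place, the desired convergence follows, and by applying the result to $\psi$ together with auxiliary test functions one obtains the full master-theorem statement as displayed.
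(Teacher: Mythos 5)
The paper does not prove this statement: it is labelled a \emph{Fact}, and the surrounding text (including the footnote) makes clear it is imported verbatim (``a slight simplification of Theorem 5.4 in \cite{yang2019tensor1}''). There is thus no ``paper's own proof'' to compare against---the paper's treatment is a citation, not an argument.

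That said, your sketch is a fair high-level outline of the argument Yang actually gives in \cite{yang2019tensor1}: induction over the program, SLLN for initial vectors, pseudo-Lipschitz composition for \textsf{Nonlin} (which the present paper proves separately as Proposition~\ref{prop:pseudo-Lipschitz.composition}), and the Gaussian conditioning trick for \textsf{MatMul}, with the rank/Gram-matrix degeneracy identified as the principal technical nuisance. Two small cautions. First, you mention conditioning on constraints $W^{\top}\tilde Y=\tilde U$; plain \netsor\ (as opposed to $\textsc{Netsor}^{\top}$) never forms $W^{\top}$-products, so for the statement as quoted only $WY=U$ constraints arise, which simplifies the conditional decomposition. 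Second, the Gaussian--Poincar\'e inequality by itself controls variances and hence gives convergence in probability; to obtain the stated almost-sure convergence one needs summable tail bounds (e.g.\ Gaussian concentration/log-Sobolev together with Borel--Cantelli, or the moment bookkeeping Yang uses), so that step as written is slightly under-justified. Beyond these points, the proposal is a reasonable reconstruction of the cited proof rather than a genuinely different route.
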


Building on the \netsor\ master theorem, we now present our main result. Before that, we introduce some assumptions specific to the attention layer.

\begin{assumption}\label{asmptn:Attention}
    Let $r$ and $m$ be positive integers that satisfy $m\ge2r$. Suppose $g^{1},\dots,g^{m}\in\mathbb{R}^{n}$ are vectors in the \netsor\ program generated by \textsf{MatMul}. We assume that a subset $\{g^{i,j}:i\in[r],\ j\in[2]\}\subset\{g^{1},\dots,g^{m}\}$ can be expressed as, without loss of generality,\footnote{The Tensor Programs framework generates vectors inductively, precluding circular dependencies (see Section~\ref{subsec:setup}). Consequently, for each $x^{i,j}=\phi^{i,j}(g^{1},\dots,g^{m})$, any arguments from $\{g^{1},\dots,g^{m}\}$ that effectively contribute to the output $x^{i,j}$ (i.e., those upon which the function $\phi^{i,j}$ actually depends) must be defined prior to $g^{i,j}=W^{i,j}x^{i,j}$ in this inductive sequence.}
    \[
        g^{i,j}=W^{i,j}x^{i,j},\quad
        x^{i,j}=\phi^{i,j}(g^{1},\dots,g^{m})
        \quad(i\in[r],\ j\in[2])
        ,
    \]
    where each $\phi^{i,j}$ is a bounded and pseudo-Lipschitz function. The weight matrices $W^{i,j}\in\mathbb{R}^{n\times n}$ satisfy two conditions:
    \begin{enumerate}
        \setlength{\parskip}{0cm} 
        \setlength{\itemsep}{0cm} 
        \item[(a)] $\{W^{i,j}:i\in[r],\ j\in[2]\}$ is specific to the generation of $\{g^{i,j} : i\in[r],\ j\in[2]\}$ and is not used for any $g\in \{g^{1},\dots,g^{m}\}\setminus\{g^{i,j}:i\in[r],\ j\in[2]\}$.
        \item[(b)] It is permissible for $W^{i,j}$ to be the same matrix as $W^{i',j'}$ unless $i=i',j\neq j'$ (e.g., $W^{1,1}$ could be the same matrix as $W^{2,1}$).
    \end{enumerate}
    Finally, we define the scalar dot-products $\{p_{i}:i\in[r]\}$ by
    \[
        p_{i}=\frac{1}{\sqrt{n}}(g^{i,1})^{\top}g^{i,2} \quad(i\in[r]).
    \]
\end{assumption}

\begin{theorem}\label{thm:MainTheorem}
    Consider a \netsor\ program, and suppose all nonlinearities used in \textsf{Nonlin} are pseudo-Lipschitz. We adopt the settings and notations from Assumption~\ref{asmptn:Attention}, which defines vectors $g^{1},\dots,g^{m}$ and scalar dot-products $p_{1},\dots,p_{r}$.
    Further, suppose all initial vectors and weight matrices are sampled as explained in Section~\ref{subsec:setup}. Now, let $h^{1},\dots,h^{k}\in\mathbb{R}^{n}$ be vectors whose elements are given by
    \[
        h_{\alpha}^{j}=\varphi^{j}(g_{\alpha}^{1},\dots,g_{\alpha}^{m},p_{1},\dots,p_{r})
        \quad(\alpha\in[n],\ j\in[k])
        ,
    \]
    where each $\varphi^{j}$ is a pseudo-Lipschitz function. Then, for any bounded and pseudo-Lipschitz function $\psi:\mathbb{R}^{k}\to\mathbb{R}$, we have
    \[
        \frac{1}{n}\sum_{\alpha=1}^{n}\psi(h_{\alpha}^{1},\dots,h_{\alpha}^{k})
        \stackrel{d}{\longrightarrow}\mathbb{E}[\psi(Z^{h^{1}},\dots,Z^{h^{k}})\mid\mathring{p}_{1},\dots,\mathring{p}_{r}]
    \]
    as $n\to\infty$. Here, $Z^{h^{j}}$ is given by $Z^{h^{j}}=\varphi^{j}(Z^{g^{1}},\dots,Z^{g^{m}},\mathring{p}_{1},\dots,\mathring{p}_{r})$ for $j \in [k]$, and $(Z^{g^{1}},\dots,Z^{g^{m}},\mathring{p}_{1},\dots,\mathring{p}_{r})$ is defined in Definition~\ref{def:limiting_distribution}.
\end{theorem}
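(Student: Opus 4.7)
The plan is to split Theorem \ref{thm:MainTheorem} into two essentially independent pieces: (i) a joint central-limit theorem showing that $(p_1,\dots,p_r)$ converges to the Gaussian vector $(\mathring{p}_1,\dots,\mathring{p}_r)$ of Definition \ref{def:limiting_distribution}(iv) and is asymptotically independent of all coordinate-wise averages of the \netsor\ program; and (ii) a transfer step converting Fact \ref{fact:NetsorMasterTheorem}, which handles deterministic scalars, into the random-scalar statement of the theorem. Piece (i) carries the genuinely new content, while (ii) is an exchange-of-limits argument in the spirit of Slutsky's theorem.

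For piece (i), I condition on all randomness in the program other than the ``fresh'' matrices $\{W^{i,j}\}_{i\in[r],j\in[2]}$ from Assumption \ref{asmptn:Attention}. By condition (a) these matrices influence only the $g^{i,j}$'s, so the conditioning freezes every $x^{i,j}$; by condition (b), within each $i$ one has $W^{i,1}\ne W^{i,2}$, so conditionally $g^{i,1}$ and $g^{i,2}$ are independent centered Gaussian vectors with i.i.d.\ rows. Writing
\[
    p_i=\frac{1}{\sqrt{n}}\sum_{\alpha=1}^{n}g^{i,1}_\alpha g^{i,2}_\alpha,
\]
the summands are products of independent conditional Gaussians, so the Cram\'er--Wold device together with a Lindeberg-type conditional CLT yields joint asymptotic Gaussianity of $(p_1,\dots,p_r)$. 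Isserlis' theorem applied to $\mathrm{Cov}(p_i,p_k\mid\mathrm{rest})$, combined with Fact \ref{fact:NetsorMasterTheorem} on the inner-product moments $\langle x^{i,j},x^{k,\ell}\rangle/n$, reproduces exactly the covariance formula of Definition \ref{def:limiting_distribution}(iv). Because the CLT fluctuations live only in the fresh weights while the empirical averages over the rest of the \netsor\ program concentrate on deterministic limits, this joint convergence is automatically with $(\mathring{p}_1,\dots,\mathring{p}_r)$ independent of $(Z^{g^{1}},\dots,Z^{g^{m}})$.

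For piece (ii), define
\[
    \Psi_q(g^{1},\dots,g^{m}):=\psi\!\left(\varphi^{1}(g^{1},\dots,g^{m},q),\dots,\varphi^{k}(g^{1},\dots,g^{m},q)\right),
\]
which for each fixed $q\in\mathbb{R}^r$ is bounded (as $\psi$ is) and pseudo-Lipschitz in $g$. Fact \ref{fact:NetsorMasterTheorem} yields the pointwise limit $n^{-1}\sum_\alpha\Psi_q(g^{1}_\alpha,\dots,g^{m}_\alpha)\to F(q):=\mathbb{E}[\Psi_q(Z^{g^{1}},\dots,Z^{g^{m}})]$ almost surely. The pseudo-Lipschitz dependence of $\Psi_q$ on $q$ together with Fact \ref{fact:NetsorMasterTheorem} applied to polynomial moments $n^{-1}\sum_\alpha\|(g^{1}_\alpha,\dots,g^{m}_\alpha)\|^{D}$ gives equicontinuity in probability of $q\mapsto n^{-1}\sum_\alpha\Psi_q(g_\alpha)$ on compact sets; combined with pointwise convergence on a countable dense set, this upgrades to uniform convergence on compacts. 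Substituting $q=(p_1,\dots,p_r)$, which is tight by piece (i), gives $n^{-1}\sum_\alpha\psi(h^{1}_\alpha,\dots,h^{k}_\alpha)-F(p_1,\dots,p_r)\xrightarrow{P}0$. Since piece (i) also provides $F(p_1,\dots,p_r)\xrightarrow{d}F(\mathring{p}_1,\dots,\mathring{p}_r)$ by continuity of $F$, the proof concludes by identifying $F(\mathring{p})=\mathbb{E}[\psi(Z^{h^{1}},\dots,Z^{h^{k}})\mid\mathring{p}_1,\dots,\mathring{p}_r]$, which follows from the independence between $\mathring{p}$ and $Z^{g}$ established in piece (i).

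The main obstacle is piece (i): matching the exact covariance of Definition \ref{def:limiting_distribution}(iv) requires careful bookkeeping of which $W^{i,j}$'s coincide, since each sharing pattern contributes a different pairing in Isserlis' formula, and one must simultaneously verify that the limiting Gaussian $(\mathring{p}_1,\dots,\mathring{p}_r)$ is independent of everything produced before the $g^{i,j}$'s. A secondary obstacle in piece (ii) is preventing the polynomial tails of the random $p_i$'s from spoiling the uniform convergence; this is where the boundedness assumption on $\psi$ (and the boundedness imposed on $\phi^{i,j}$ in Assumption \ref{asmptn:Attention}) comes in.
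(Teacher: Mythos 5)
Your proposal is essentially correct and reaches the theorem by the same two--step strategy the paper uses, but with a different technique for the central limit theorem. For the CLT step, the paper does not condition on the non-fresh randomness: it writes $\sum_i t_i p_i = n^{-1/2}\sum_\alpha X_\alpha$, shows $\{X_\alpha\}_\alpha$ is an exchangeable (de Finetti) sequence (Lemma~\ref{lem:PvarLimitDist.Exchangeability}), and invokes Blum's exchangeable CLT (Lemma~\ref{lem:ExchangeableCLT}) after verifying the moment conditions $\mathbb{E}[X_1 X_2]=0$, $\mathbb{E}[X_1^2 X_2^2]\to\sigma_*^4$, $\mathbb{E}[|X_1|^3]=o(\sqrt n)$ by explicit Gaussian moment bookkeeping together with Fact~\ref{fact:NetsorMasterTheorem} for the $x$-inner-product limits. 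Your conditional-Lindeberg-plus-Isserlis route should also work, but it requires one extra step you glossed over: the conditional variance $\Sigma(x)$ is random, so to lift the conditional Gaussian limit to an unconditional one, you need that $\Sigma(x)\to\Sigma^*$ almost surely (which indeed follows from Fact~\ref{fact:NetsorMasterTheorem}) together with dominated convergence of the conditional characteristic functions; the exchangeable CLT packages this more cleanly. You should also double-check your conditioning: Assumption~\ref{asmptn:Attention}'s footnote allows $x^{i,j}$ to depend on earlier fresh $g^{i',j'}$, in which case conditioning away only the fresh weight matrices does not freeze every $x^{i,j}$ simultaneously (the paper's Lemma~\ref{lem:PvarLimitDist.Exchangeability} implicitly sidesteps this too, so it is not a defect unique to your argument, but it is worth noting). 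Your piece (ii) is essentially identical to the paper's $S_1/S_2$ decomposition: the paper truncates $\bm{p}_{1:r}$ to a compact set $K$ (using its tightness as a Gaussian limit), establishes uniform convergence of the deterministic error $\Phi_n(\bm{a})$ over $K$ via Lemma~\ref{lem:SupremumConvergence}, and then handles the $\bm{p}_{1:r}\to\mathring{\bm{p}}_{1:r}$ replacement via continuous mapping. Your equicontinuity reasoning is in fact a more explicit account of why the paper's uniform-convergence lemma applies here, and the two arguments are interchangeable. One small accounting point: the independence in Definition~\ref{def:limiting_distribution}(iv) is a \emph{definition}, not something the CLT proves; what the proof actually needs (and what you correctly identify) is that $n^{-1}\sum_\alpha\psi(g_\alpha,\bm{a})$ concentrates on a deterministic limit $F(\bm{a})$ for each fixed $\bm{a}$, so that the composition $F(\bm{p}_{1:r})\to F(\mathring{\bm{p}}_{1:r})$ finishes the argument.
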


It may be noted that in the statement of Theorem~\ref{thm:MainTheorem}, the boundedness of $\phi^{i,j}$ as in Assumption~\ref{asmptn:Attention} is not essential, and we can drop this condition. However, we keep it for simplicity of the proof.
Also, note that in this statement, $\mathring{p}_{1},\dots,\mathring{p}_{r}$ are random variables with positive variance, and the convergence in distribution ${\to}^d$ described in the statement refers to convergence to the distribution of them. Unlike the conventional master theorem derived from Tensor Programs \cite{yang2019scaling,yang2019tensor1}, this theorem simultaneously characterizes the randomness of the ordinary variables like $g^{1},\dots,g^{m}$, along with the effects of finite-dimensional random variables $\mathring{p}_{1},\dots,\mathring{p}_{r}$.

Theorem~\ref{thm:MainTheorem} implies that, due to the randomness of  $(\mathring{p}_{1},\dots,\mathring{p}_{r})$, the overall outputs resemble a hierarchical distribution, where the variance of one limiting distribution may depend on a realization of another limiting distribution $(\mathring{p}_{1},\dots,\mathring{p}_{r})$.
This result differs from existing results by the Tensor Programs where the variance depends on moments of the other distributions. This result leads to a non-Gaussian limiting distribution in the attention case, whose details will be provided in Example~\ref{eg:MultiHead}.

The following corollary is an analogue of Theorem~A.5 in \cite{yang2020tensor3}.

\begin{corollary}[Coordinatewise Convergence]\label{cor:coordinatewise}
    Assume the same premise as in Theorem~\ref{thm:MainTheorem}. Then, for all $\alpha\ge1$, we have
    \[
        (h_{\alpha}^{1},\dots,h_{\alpha}^{k})
        \stackrel{d}{\longrightarrow}(Z^{h^{1}},\dots,Z^{h^{k}})
        .
    \]
\end{corollary}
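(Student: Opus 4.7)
The plan is to reduce the claim to the empirical-average convergence supplied by Theorem~\ref{thm:MainTheorem} via a coordinate-exchangeability argument. Two observations drive everything: (a) for bounded pseudo-Lipschitz $\psi$, the scalar random variable $\frac{1}{n}\sum_{\alpha=1}^{n}\psi(h_{\alpha}^{1},\dots,h_{\alpha}^{k})$ is uniformly bounded by $\|\psi\|_{\infty}$, so weak convergence upgrades to convergence of expectations; and (b) the coordinates $\alpha\in[n]$ of all vectors produced by the \netsor\ program, together with the scalar dot-products $p_{i}$, form an exchangeable family.

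First I would establish the exchangeability claim. Fix a permutation $\sigma\in S_{n}$ with matrix $P_{\sigma}$. Each Gaussian weight matrix $W$ with i.i.d.\ $N(0,\sigma_{W}^{2}/n)$ entries satisfies $P_{\sigma}WP_{\sigma}^{\top}\stackrel{d}{=}W$, and each initial vector has i.i.d.\ coordinates, so the joint law of the initial vectors is invariant under $h\mapsto P_{\sigma}h$. Proceeding inductively along the \netsor\ program, the \textsf{MatMul} rule $g=Wh$ commutes with $P_{\sigma}$ in distribution when $\sigma$ is applied simultaneously to input and output, and the \textsf{Nonlin} rule is coordinatewise, hence trivially equivariant. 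Crucially, the scalars $p_{i}=\frac{1}{\sqrt{n}}(g^{i,1})^{\top}g^{i,2}$ are inner products and therefore permutation-invariant. Hence the joint distribution of $\bigl((g_{\alpha}^{1},\dots,g_{\alpha}^{m})_{\alpha\in[n]},(p_{i})_{i\in[r]}\bigr)$ is invariant under $\alpha\mapsto\sigma(\alpha)$, and since $h_{\alpha}^{j}=\varphi^{j}(g_{\alpha}^{1},\dots,g_{\alpha}^{m},p_{1},\dots,p_{r})$, the coordinate tuples $\bigl((h_{\alpha}^{1},\dots,h_{\alpha}^{k})\bigr)_{\alpha=1}^{n}$ are exchangeable. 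In particular, the marginal law of $(h_{\alpha}^{1},\dots,h_{\alpha}^{k})$ does not depend on $\alpha$.

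Next I would take expectations in Theorem~\ref{thm:MainTheorem}. Since $\psi$ is bounded, the sequence $\frac{1}{n}\sum_{\alpha}\psi(h_{\alpha}^{1},\dots,h_{\alpha}^{k})$ is uniformly bounded and converges in distribution, so its means converge:
\[
\mathbb{E}\Bigl[\tfrac{1}{n}\sum_{\alpha=1}^{n}\psi(h_{\alpha}^{1},\dots,h_{\alpha}^{k})\Bigr]\longrightarrow\mathbb{E}\bigl[\mathbb{E}[\psi(Z^{h^{1}},\dots,Z^{h^{k}})\mid\mathring{p}_{1},\dots,\mathring{p}_{r}]\bigr]=\mathbb{E}[\psi(Z^{h^{1}},\dots,Z^{h^{k}})].
\]
By exchangeability the left-hand side equals $\mathbb{E}[\psi(h_{1}^{1},\dots,h_{1}^{k})]$ for every $n$. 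Because every bounded Lipschitz function on $\mathbb{R}^{k}$ is bounded pseudo-Lipschitz, and bounded Lipschitz test functions characterize weak convergence on $\mathbb{R}^{k}$, we obtain $(h_{1}^{1},\dots,h_{1}^{k})\stackrel{d}{\longrightarrow}(Z^{h^{1}},\dots,Z^{h^{k}})$, and by exchangeability the same limit holds for every fixed $\alpha$.

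The step I expect to require the most care is the exchangeability argument: one must verify that every primitive operation used to define the $g^{i}$ is covariant under $P_{\sigma}$ in distribution, including the specially marked \textsf{MatMul} steps of Assumption~\ref{asmptn:Attention} where weight matrices may or may not be shared across the dot-product pairs, and that the $p_{i}$ enter $h_{\alpha}^{j}$ only through $\varphi^{j}$ as permutation-invariant arguments. Once this is nailed down, the passage from Theorem~\ref{thm:MainTheorem} to the corollary is the classical observation that a finitely exchangeable sequence whose empirical measure converges weakly must have each marginal converging to the mean of the limiting random measure.
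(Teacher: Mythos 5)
Your proof is correct and follows essentially the same route as the paper: identify $\mathbb{E}[\psi(h_{\alpha}^{1},\dots,h_{\alpha}^{k})]$ with the expectation of the empirical average, use Theorem~\ref{thm:MainTheorem} together with boundedness of $\psi$ to pass to the limit of the means, and conclude by Portmanteau. You additionally make explicit the coordinate-exchangeability (via the joint symmetry $W\mapsto P_{\sigma}WP_{\sigma}^{\top}$, $h\mapsto P_{\sigma}h$ for initial vectors, and the permutation-invariance of the dot-products $p_{i}$), which the paper uses silently in its first equality $\mathbb{E}[\psi(h_{\alpha}^{1},\dots,h_{\alpha}^{J})]=\mathbb{E}\bigl[\tfrac{1}{n}\sum_{\alpha}\psi(h_{\alpha}^{1},\dots,h_{\alpha}^{J})\bigr]$, and you replace the paper's explicit clipping function $f$ with the equivalent observation that a uniformly bounded sequence converging in distribution has converging means.
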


We present specific multi-head attention results as an example of the application of Corollary~\ref{cor:coordinatewise}.
This example implies that the output of the attention mechanisms is described by a non-Gaussian distribution. Specifically, when $y^{i}$ is an output of a multi-head attention layer as $\mathrm{MultiHead}_{i\cdot}$ defined in Eq.~\eqref{eq:MultiHead.row}, $(Z^{y^1},\dots,Z^{y^s})$ follows a hierarchical Gaussian distribution whose conditioning $(\mathring{p}_1,\dots,\mathring{p}_r)=\{\mathring{p}_{i,j}^{(a)}:i,j\in[s],\ a\in[H]\}$ itself is a random variable, causing $(Z^{y^1},\dots,Z^{y^s})$ to follow a non-Gaussian distribution overall. We provide the details as follows:

\begin{example}[Multi-Head Attention]\label{eg:MultiHead}
    We consider the multi-head attention in Eq.~\eqref{eq:MultiHead.row}. Recall that $s$ is the spatial dimension and $H$ is the number of heads. We sample new weight matrices $W^{Q,a},W^{K,a},W^{V,a},W^{O,a}\in\mathbb{R}^{n\times n}$ for $a\in[H]$. Let $x^{1},\dots,x^{s}\in \mathbb{R}^n$ be vectors within the \netsor\ program. We assume these input vectors are generated by \textsf{Nonlin}, where the nonlinearity is bounded and pseudo-Lipschitz. For $j\in[s],\ a\in[H]$, we define the value vectors $v^{a,j}=W^{V,a}x^{j}\in\mathbb{R}^{n}\quad(j\in[s],\ a\in[H])$ and its further transform $\tilde{v}^{a,j}=W^{O,a}v^{a,j}\in\mathbb{R}^{n}$. Finally, with the score $p_{i,j}^{(a)}$ in Eq.~\eqref{eq:DotProduct}, we rewrite an element of the multi-head attention Eq.~\eqref{eq:MultiHead.row} as
    \begin{align*}
        y_{\alpha}^{i}
        &=\frac{1}{\sqrt{H}}\sum_{a=1}^{H}\sum_{j=1}^{s}\mathrm{SoftMax}_{j}(p_{i,1}^{(a)},\dots,p_{i,s}^{(a)})\tilde{v}_{\alpha}^{a,j} \\
        &=: \varphi^{i}\left(\{\tilde{v}_{\alpha}^{a,j}:j\in[s],\ a\in[H]\},\ \{p_{i,j}^{(a)}:i,j\in[s],\ a\in[H]\}\right),
    \end{align*}
    by introducing functions $\varphi^{i}\ (i\in[s])$,
    each of which is a pseudo-Lipschitz function.\footnote{See Proposition~\ref{prop:pseudo-Lipschitz.composition} and Lemma~\ref{lem:pseudo-Lipschitz.multiplication}. Note that $\mathrm{SoftMax}$ is Lipschitz so that it is also pseudo-Lipschitz (Fact~\ref{fact:pseudo-Lipschitz.Inclusion}).} Then, by Corollary~\ref{cor:coordinatewise}, we have
    \[
        (y_{\alpha}^{1},\dots,y_{\alpha}^{s})
        \stackrel{d}{\longrightarrow}(Z^{y^{1}},\dots,Z^{y^{s}})
        \quad(n\to\infty)
        ,
    \]
    where $Z^{y^{i}}$ is defined by $Z^{y^{i}}=H^{-\frac{1}{2}}\sum_{a=1}^{H}\sum_{j=1}^{s}\mathrm{SoftMax}_{j}(\mathring{p}_{i,1}^{(a)},\dots,\mathring{p}_{i,s}^{(a)})Z^{\tilde{v}^{a,j}}$. By Definition~\ref{def:limiting_distribution}, $\{Z^{\tilde{v}^{a,j}}:j\in[s],\ a\in[H]\}$ and $\{\mathring{p}_{i,j}^{(a)}:i,j\in[s],\ a\in[H]\}$ satisfy the following.
    \begin{itemize}
        \setlength{\parskip}{0cm} 
        \setlength{\itemsep}{0cm} 
        \item $\{Z^{\tilde{v}^{a,j}}:j\in[s],\ a\in[H]\}$ is independent of $\{\mathring{p}_{i,j}^{(a)}:i,j\in[s],\ a\in[H]\}$.
        \item $\{Z^{\tilde{v}^{a,j}}:j\in[s],\ a\in[H]\}$ is jointly Gaussian with zero mean. For $a,a'\in[H]$ and $j,j'\in[s]$, the covariance is given by
        \[
            \mathrm{Cov}(Z^{\tilde{v}^{a,j}},Z^{\tilde{v}^{a',j'}})
            =
            \begin{cases}
                0
                &(a\neq a')
                ,\\
                \sigma_{W^{O,a}}^{2}\sigma_{W^{V,a}}^{2}\mathbb{E}[Z^{x^{j}}Z^{x^{j'}}]
                &(a=a')
                .
            \end{cases}
        \]
        \item $\{\mathring{p}_{i,j}^{(a)}:i,j\in[s],\ a\in[H]\}$ is jointly Gaussian with zero mean. For $a,a'\in[H]$ and $i,i',j,j'\in[s]$, the covariance is given by
        \[
            \mathrm{Cov}(\mathring{p}_{i,j}^{(a)},\mathring{p}_{i',j'}^{(a')})
            =
            \begin{cases}
                0
                &(a\neq a')
                ,\\
                \sigma_{W^{Q,a}}^{2}\sigma_{W^{K,a}}^{2}\mathbb{E}[Z^{x^{i}}Z^{x^{i'}}]\mathbb{E}[Z^{x^{j}}Z^{x^{j'}}]
                &(a=a')
                .
            \end{cases}
        \]
    \end{itemize}
    Note that $\{Z^{x^{j}}:j\in[s]\}$ can be computed by Definition~\ref{def:limiting_distribution}.
\end{example}

The pseudocode for this example is presented in Algorithm~\ref{alg:MultiHead} in Appendix~\ref{app:subsec:ExperimentalSetup}.

\begin{remark}
    The non-Gaussianity of the limiting distribution of the attention layer highlights a significant challenge for theoretical analysis, since many existing frameworks, such as NNGP and Tensor Programs, are developed under regimes where the limiting distributions are Gaussian (or can be expressed as Gaussian components with correction terms). Extending these tools to rigorously handle such non-Gaussian behaviors remains an important open problem, and we believe our work provides a concrete starting point for such future developments.
\end{remark}

\begin{remark}
    Our result in Example~\ref{eg:MultiHead} has a close relationship with the result of \cite{hron2020infinite}. First, the limiting distribution of the attention scores in our framework perfectly matches the Gaussian distribution in their infinite-head limit. Second, the variance of our finite-head non-Gaussian output distribution theoretically matches the variance of their infinite-head Gaussian output distribution. This alignment suggests that while the infinite-head limit correctly captures the second-order statistics of the output, our finite-head analysis provides a more complete picture by characterizing the non-Gaussian aspects of the distribution that are not captured by second-order statistics alone.
\end{remark}

\begin{remark}
    We can show that the limiting distribution $Z^{y^{i}}$ is sub-Gaussian, provided that bounded nonlinearities are used for the attention scores (e.g., the softmax function as in Example~\ref{eg:MultiHead}). In other words, although $Z^{y^{i}}$ cannot be approximated by a single Gaussian distribution with mean $\mathbb{E}(Z^{y^{i}})$ and variance $\mathrm{Var}(Z^{y^{i}})$, its tail probability decays at a Gaussian rate, possibly with a larger variance parameter. A detailed discussion is provided in Appendix~\ref{app:sec:SubGaussianity}.
\end{remark}

\section{Proof Sketch of Theorem~\ref{thm:MainTheorem}}\label{sec:ProofOutline}

We sketch the proof of Theorem~\ref{thm:MainTheorem}. First, observe that it is equivalent to the following statement:

\begin{theorem}\label{thm:MainTheorem2}
    Assume the same premise as in Theorem~\ref{thm:MainTheorem}.
    Then, for any bounded and pseudo-Lipschitz function $\psi$, we have
    \[
        p
    \]
    as $n\to\infty$, where $(Z^{g^{1}},\dots,Z^{g^{m}},\mathring{p}_{1},\dots,\mathring{p}_{r})$ is defined in Definition~\ref{def:limiting_distribution}.
\end{theorem}

Clearly, Theorem~\ref{thm:MainTheorem} implies Theorem~\ref{thm:MainTheorem2}. Conversely, applying Proposition~\ref{prop:pseudo-Lipschitz.composition} establishes the opposite direction.

In what follows, we outline the proof of Theorem~\ref{thm:MainTheorem2}. Hereafter, $\bm{x}_{1:k}$ denotes the vector $(x_{1},\dots,x_{k})$, and likewise, $\bm{x}_{\alpha}^{1:k}$ denotes the vector $(x_{\alpha}^{1},\dots,x_{\alpha}^{k})$. Fix a bounded and pseudo-Lipschitz function $\psi$. By Lemma~\ref{lem:Portmanteau}, it suffices to show that, for any bounded and Lipschitz function $f$,
\[
    \left|\mathbb{E}f\left(\frac{1}{n}\sum_{\alpha=1}^{n}\psi(\bm{g}_{\alpha}^{1:m},\bm{p}_{1:r})\right)-\mathbb{E}f\left(\mathbb{E}[\psi(Z^{\bm{g}^{1:m}},\mathring{\bm{p}}_{1:r})\mid\mathring{\bm{p}}_{1:r}]\right)\right|
    \to0
\]
holds. Note that by Fact~\ref{fact:NetsorMasterTheorem}, we have
\begin{equation}\label{eq:Moments(M)}
    \frac{1}{n}\sum_{\alpha=1}^{n}\tilde{\psi}(\bm{g}_{\alpha}^{1:m})
    \stackrel{a.s.}{\longrightarrow}\mathbb{E}[\tilde{\psi}(Z^{\bm{g}^{1:m}})]
\end{equation}
for any pseudo-Lipschitz function $\tilde{\psi}$. Using this result, we first show that $\bm{p}_{1:r}$ converges in distribution to $\mathring{\bm{p}}_{1:r}$, which is a Gaussian vector defined by Definition~\ref{def:limiting_distribution} (Appendix~\ref{app:subsec:WeakConvergence}). Then, we consider
\[
    \left|\mathbb{E}f\left(\frac{1}{n}\sum_{\alpha=1}^{n}\psi(\bm{g}_{\alpha}^{1:m},\bm{p}_{1:r})\right)-\mathbb{E}f\left(\mathbb{E}[\psi(Z^{\bm{g}^{1:m}},\mathring{\bm{p}}_{1:r})\mid\mathring{\bm{p}}_{1:r}]\right)\right|
    \le S_{1}+S_{2}
    ,
\]
where $S_{1}$ and $S_{2}$ are given by
\begin{align*}
    S_{1}
    &=\left|\mathbb{E}f\left(\frac{1}{n}\sum_{\alpha=1}^{n}\psi(\bm{g}_{\alpha}^{1:m},\bm{p}_{1:r})\right)-\mathbb{E}f\left(\mathbb{E}\left[\psi(Z^{\bm{g}^{1:m}},\bm{p}_{1:r})\mid\bm{p}_{1:r}\right]\right)\right|
    ,\\
    S_{2}
    &=\left|\mathbb{E}f\left(\mathbb{E}\left[\psi(Z^{\bm{g}^{1:m}},\bm{p}_{1:r})\mid\bm{p}_{1:r}\right]\right)-\mathbb{E}f\left(\mathbb{E}[\psi(Z^{\bm{g}^{1:m}},\mathring{\bm{p}}_{1:r})\mid\mathring{\bm{p}}_{1:r}]\right)\right|
    .
\end{align*}
We separately show $S_{1}\to0$ (Appendix~\ref{app:subsec:S1to0}) and $S_{2}\to0$ (Appendix~\ref{app:subsec:S2to0}).

\section{Simulation and Discussion}\label{sec:Simulation}

We perform simulations to validate the infinite-width limit distributions derived in Example~\ref{eg:MultiHead}. Details on the simulation setting can be found in Appendix~\ref{app:sec:SimulationDetails}. All simulation codes are available at \url{https://github.com/manasakai/infinite-width-attention}.

\subsection{Effect of Finite Width}

\begin{figure}[tbp]
    \centering
    \begin{subcaptionbox}{Distribution of $y_{1}^{1}$ and $Z^{y^{1}}$.}[0.45\textwidth]
        {\includegraphics[width=\linewidth]{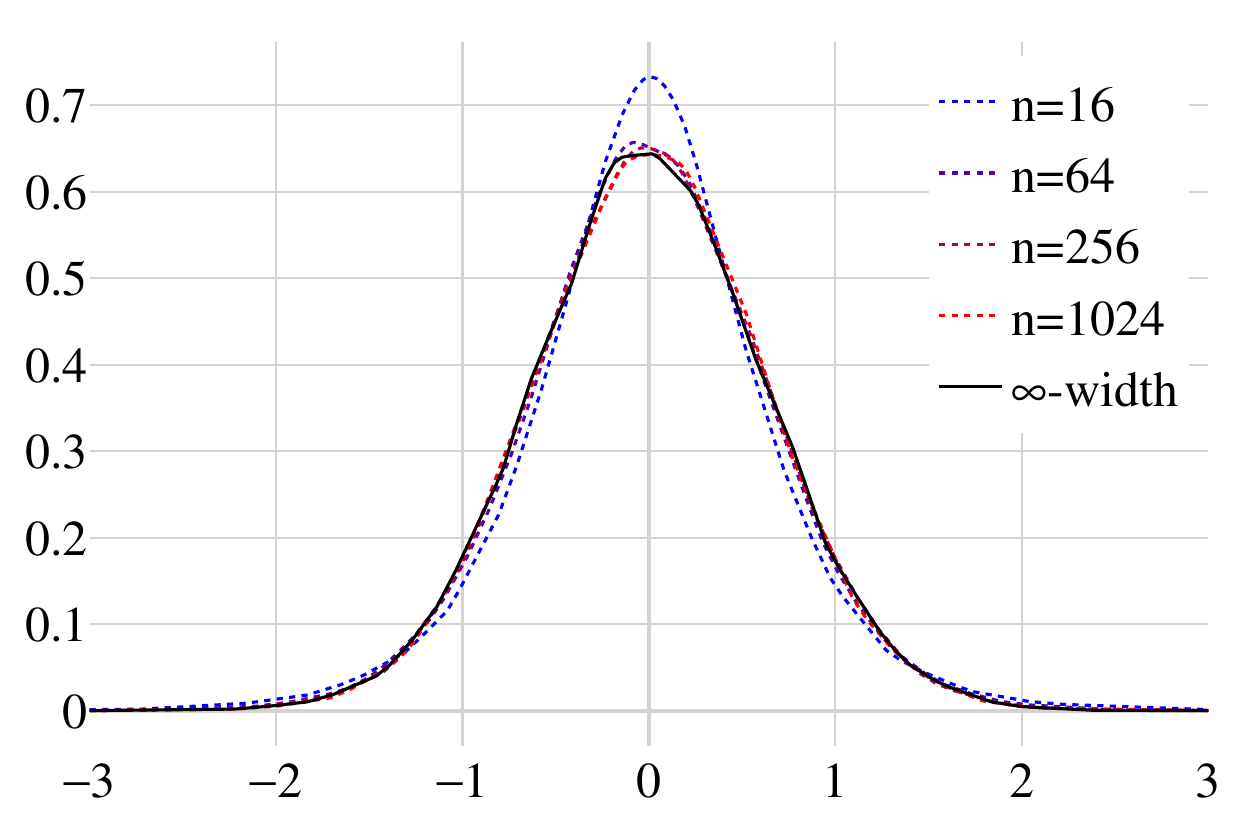}}
    \end{subcaptionbox}
    \hspace{0.05\textwidth}
    \begin{subcaptionbox}{KL divergence with error bars.}[0.45\textwidth]
        {\includegraphics[width=\linewidth]{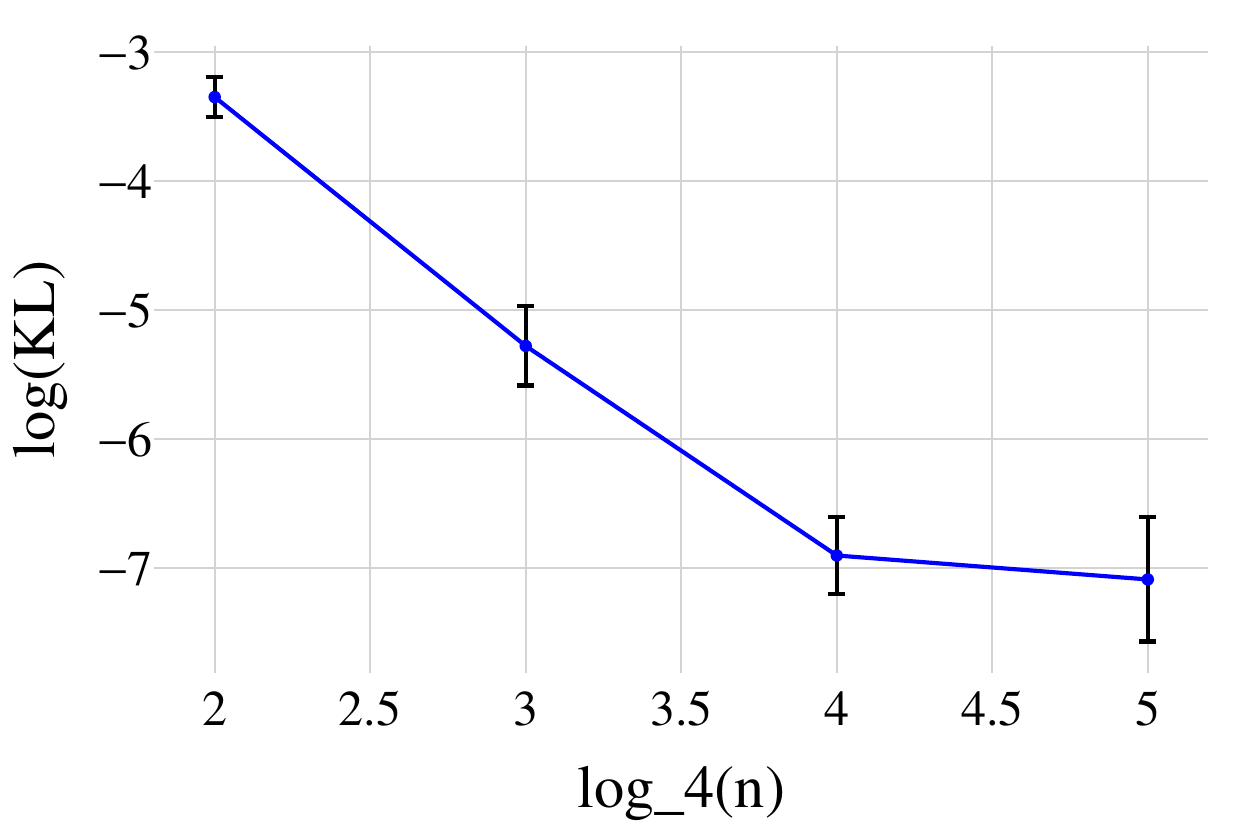}}
    \end{subcaptionbox}
    \caption{Comparison of the distribution of the attention output $y_{1}^{1}$ and its infinite-width limit $Z^{y^{1}}$ in Example~\ref{eg:MultiHead}. \textbf{(a)} Kernel density estimates of the empirical distribution (via Monte Carlo sampling) of $y_{1}^{1}$ for widths $n\in\{16,64,256,1024\}$ (dashed lines) alongside that of $Z^{y^{1}}$ (solid line), showing the convergence of the finite-width distribution to its limit. \textbf{(b)} Average of the log-KL divergence $\log\mathrm{KL}(\mathrm{Dist}(y_{1}^{1})\|\mathrm{Dist}(Z^{y^{1}}))$ over 10 independent trials, plotted against $\log_{4}(n)$ with error bars indicating one standard deviation, confirming a decreasing trend.}
    \label{fig:vary_n_kl}
    \vskip-0.2in
\end{figure}

To assess how well the infinite-width theory of Theorem~\ref{thm:MainTheorem} aligns with finite-width multi-head attention behavior and to verify that discrepancies diminish as width grows, we perform 10 independent experiments for each width $n\in\{16,64,256,1024\}$. In each trial, we draw samples of the multi-head output $y_{1}^{1}$ with $H=2$ in Example~\ref{eg:MultiHead}, estimate its density via kernel density estimation, and compute the KL divergence to its theoretical limit $Z^{y^{1}}$, the density of which is also approximated via Monte Carlo sampling. Figure~\ref{fig:vary_n_kl}(a) plots the estimated densities of our first trial, showing that the density of $y_{1}^{1}$ converges rapidly to that of $Z^{y^{1}}$ as $n$ increases. Figure~\ref{fig:vary_n_kl}(b) quantifies this convergence by plotting the average log-KL divergence against $\log_{4}(n)$, with error bars showing one standard deviation across the 10 trials, demonstrating a consistent decay with growing width.

\subsection{Scalings of the Dot-product Score and Finiteness of Heads}

\begin{figure}[tbp]
    \centering
    \begin{subfigure}[t]{0.45\textwidth}
        \centering
        \includegraphics[width=\linewidth]{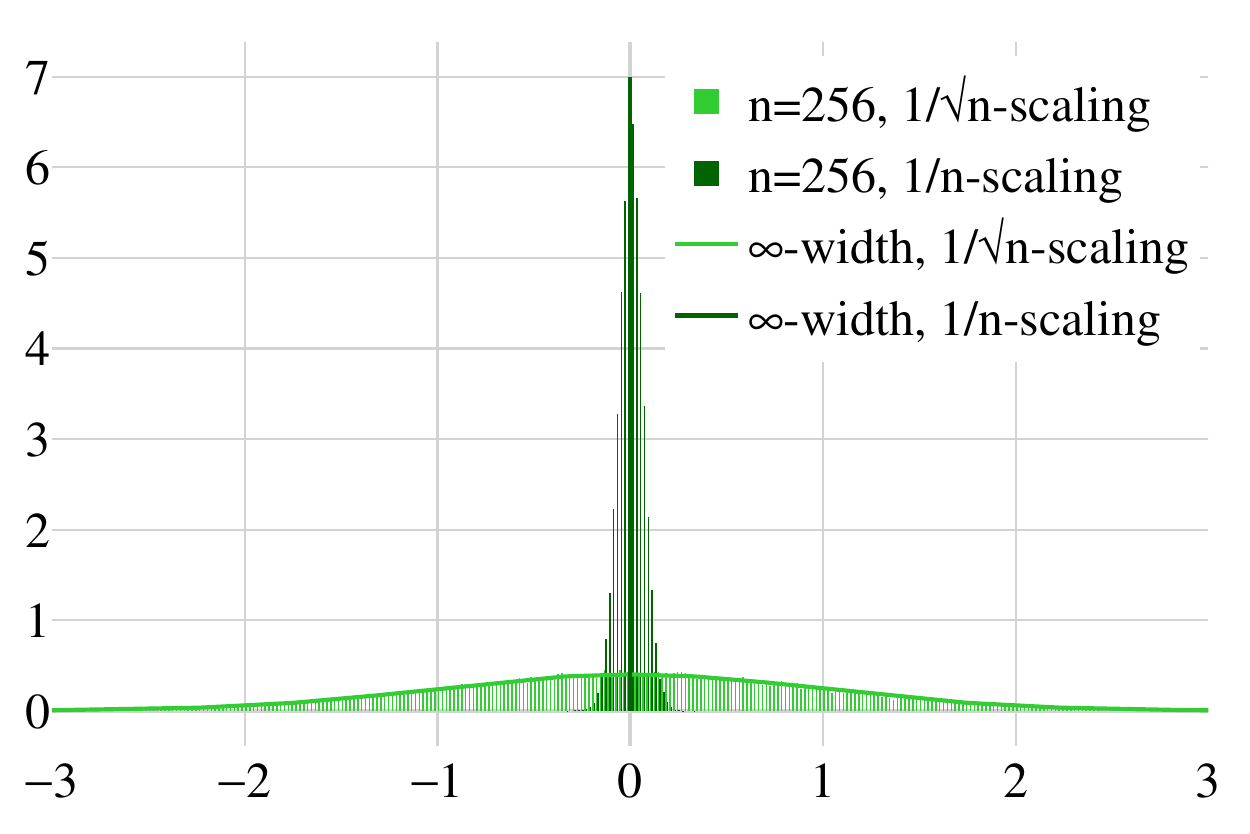}
        \subcaption{Dot-product score under different scalings.}
        \label{fig:score_dist}
    \end{subfigure}
    \hspace{0.05\textwidth}
    \begin{subfigure}[t]{0.45\textwidth}
        \centering
        \includegraphics[width=\linewidth]{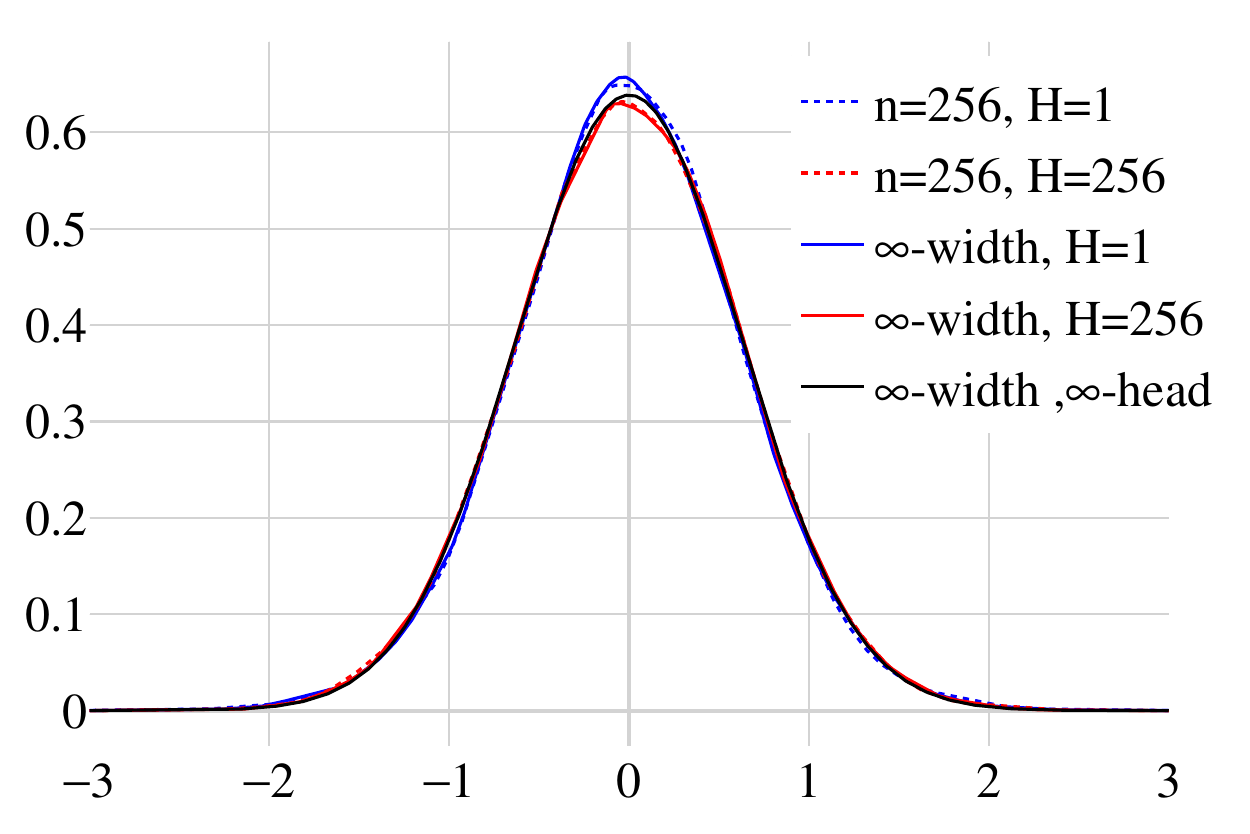}
        \subcaption{Varying head count of the attention output.}
        \label{fig:vary_H}
    \end{subfigure}
    \caption{Visualization of the dot-product score $p_{1,1}^{(1)}$ and attention output $y_{1}^{1}$, as defined in Example~\ref{eg:MultiHead}, comparing finite-width behavior to their infinite-width limits.
    \textbf{(a)} Histogram of the empirical distribution of $p_{1,1}^{(1)}$ for $n=256$ alongside the plot of its infinite-width limit distribution $\mathring{p}_{1,1}^{1}$, under two scaling schemes; $1/\sqrt{n}$ and $1/n$. The $1/n$-scaled score collapses to zero in the infinite-width limit, while the $1/\sqrt{n}$-scaled score retains a nondegenerate distribution.
    \textbf{(b)} Kernel density estimates of the empirical distribution of $y_{1}^{1}$ for $n=256$ (dashed lines) alongside the plot of its infinite-width limit distribution $Z^{y^{1}}$ (solid lines), varying head counts $H\in\{1,256\}$. The black solid line represents the density of the infinite-head limit distribution from \cite{hron2020infinite}.
    This demonstrates that our theoretical prediction remains accurate even when $H$ grows, and it approaches the infinite-head limit.}
    \label{fig:score_vary_H}
    \vskip-0.2in
\end{figure}

We next investigate two facets of finite-width behavior in Example~\ref{eg:MultiHead}, which are the effect of different scaling rules on the variability of the dot-product score and the impact of a finite number of heads on the attention output. For comparison, \cite{yang2019tensor1} assumes $1/n$-scaling, and the infinite-head result by \cite{hron2020infinite} applies only in the limit $n,H\to\infty$.

Figure~\ref{fig:score_vary_H}(a) shows the histogram of the empirical distribution of the dot-product score $p_{1,1}^{(1)}$ at width $n=256$ under the two scaling schemes $1/\sqrt{n}$ and $1/n$, together with the infinite-width limit distribution $\mathring p_{1,1}^{1}$. Even at this moderate width, the $1/n$-scaled scores are tightly concentrated around zero, whereas the $1/\sqrt{n}$-scaled scores remain spread out in a nondegenerate fashion, confirming that only the latter preserves variability away from zero at large $n$.

Figure~\ref{fig:score_vary_H}(b) reports histograms of the attention output $y_{1}^{1}$ for $n=256$ with head counts $H=1$ and $H=256$, overlaid with the infinite-width densities of $Z^{y^{1}}$, which are approximated via Monte Carlo sampling. The close agreement between finite-width histograms and their infinite-width limit curves, even when $H$ grows on the same order as $n$, demonstrates the robustness of our infinite-width approximation in the growing number of heads. Furthermore, it is observed that as $H$ increases, our non-Gaussian distribution approaches the Gaussian distribution from \cite{hron2020infinite}.

Beyond the choice of scaling, our work provides a general perspective on the number of attention heads: our theory provides an accurate characterization for both finite and large numbers of heads, effectively subsuming the large-$H$ regime within a single framework. This robustness is confirmed by our experiments, which show our theoretical predictions remain highly accurate as $H$ grows large (Figure~\ref{fig:score_vary_H}(b)), as well as in low-rank attention setting where the number of heads $H$ increases proportionally with the network width $n$ (Appendix~\ref{app:subsec:SimLowrank}).

\section{Conclusion}

In this paper, we rigorously analyzed the infinite-width limit distribution of outputs from a single attention layer using the Tensor Programs framework. Specifically, we theoretically showed and empirically confirmed that the attention outputs converge to a non-Gaussian distribution under realistic conditions with finite heads and standard $1/\sqrt{n}$-scaling.

Looking forward, we believe our framework can serve as a foundation for future extensions to deep Transformer architectures. We predict that when layers are stacked as in Transformers, not only attention layers but also MLP layers converge to non-Gaussian distributions in the infinite-width limit. The non-Gaussianity of attention layers (and possibly MLP layers) is not merely a theoretical curiosity, but it has profound implications for the learning dynamics of attention-based models. For instance, the presence of higher-order moments associated with such non-Gaussianity could influence signal propagation by preserving feature variability across layers, thereby reducing the risk of rank collapse. Furthermore, the anisotropy induced by non-Gaussianity may lead to more irregular curvature in the optimization landscape, possibly affecting the convergence properties of training dynamics (e.g., through sharper gradients or more prominent saddle regions). Consequently, we argue that a new framework distinct from existing Gaussian-based approaches is essential for analyzing deep architectures. Our work provides an important first step in this direction.

As a limitation of this study, our analysis assumes a constant sequence length and batch size, following the Tensor Programs framework. Extending the framework to handle growing sequence lengths is an important direction for future research.

\section*{Acknowledgements}

We are grateful to the reviewers for their insightful comments. We also thank Yuta Matano for helpful discussions and for pointing out the sub-Gaussianity of the limiting distribution of the attention output. Mana Sakai was supported by RIKEN Junior Research Associate Program. Ryo Karakida was supported by JST FOREST (Grant No. JPMJFR226Q) and JSPS KAKENHI (Grant No. 22H05116). Masaaki Imaizumi was supported by JSPS KAKENHI (Grant No. 24K02904), JST CREST (Grant No. JPMJCR21D2), and JST FOREST (Grant No. JPMJFR216I).

\bibliographystyle{alpha}
\bibliography{main}

\clearpage
\appendix

\section{Notation Summary}\label{app:sec:NotationSummary}

This section summarizes the key notations. We adopt two main conventions for simplicity. First, for finite-width vectors and matrices, superscripts serve as indices to distinguish variables (e.g., $x^{i}\in\mathbb{R}^{n},\ W^{i}\in\mathbb{R}^{n\times n}$), while subscripts denote their components (e.g., $x^{i}_{\alpha},W_{\alpha\beta}^{i}\in\mathbb{R}$). Second, the dependence on the network width $n$ for all finite-width variables is kept implicit  (e.g., we write $x^{i}$ instead of $x^{i}(n)$). For other notations, please refer to Table~\ref{tab:notation}.

\begin{table}[h]
    \caption{General Notations}
    \label{tab:notation}
    \centering
    \begin{tabular}{ll}
        \toprule
        Symbol & Description \\
        \midrule
        $n$ & The dimensionality of vector spaces, corresponding to the network's width. \\
        $H$ & The number of heads in the multi-head attention mechanism. \\
        $s$ & The spatial dimension of input sequences (i.e., sequence length). \\
        $W$ & A generic weight matrix in $\mathbb{R}^{n \times n}$ with elements $W_{\alpha\beta} \sim \mathcal{N}(0, \sigma_W^2/n)$. \\
        $Z^h$ & A random variable for the infinite-width limit of a vector $h$. \\
        $\mathring{p}$ & A random variable for the infinite-width limit of a scalar dot-product $p$. \\
        $\bm{x}_{1:k}$ & The vector $(x_{1},\dots,x_{k})$. \\
        $\bm{x}_{\alpha}^{1:k}$ & The vector $(x_{\alpha}^{1},\dots,x_{\alpha}^{k})$. \\
        \bottomrule
    \end{tabular}
\end{table}

\section{Simulation Details and Supplementary Analysis}\label{app:sec:SimulationDetails}

\subsection{General Experimental Setup}\label{app:subsec:ExperimentalSetup}

Unless otherwise noted, the simulations presented in this paper set the spatial dimension to $s=4$. The core experimental setup follows that described in Example~\ref{eg:MultiHead}, which is outlined in Algorithm~\ref{alg:MultiHead}.

\begin{algorithm}[htb]
    \caption{Multi-Head Attention (Example~\ref{eg:MultiHead})}
    \label{alg:MultiHead}
    \begin{algorithmic}
        \Require $\{x^{i}\}_{i\in[s]}$ \Comment{$\mathbb{R}^{n}$ input vectors for a sequence of length $s$}
        \Require $\{W^{Q,a},W^{K,a},W^{V,a},W^{O,a}\}_{a\in[H]}$ \Comment{$\mathbb{R}^{n\times n}$ weight matrices for $H$ heads}
        \State
        \For{$a\in[H]$}
            \For{$i\in[s]$}
                \State $q^{a,i}\gets W^{Q,a}x^{i}$ \Comment{\textsf{MatMul}: Query vectors}
                \State $k^{a,i}\gets W^{K,a}x^{i}$ \Comment{\textsf{MatMul}: Key vectors}
                \State $v^{a,i}\gets W^{V,a}x^{i}$ \Comment{\textsf{MatMul}: Value vectors}
                \State $\tilde{v}^{a,i}\gets W^{O,a}v^{a,{i}}$ \Comment{\textsf{MatMul}: Output-projected value vectors}
            \EndFor
        \EndFor
        \State
        \For{$a\in[H]$}
            \For{$i\in[s]$}
                \For{$j\in[s]$}
                    \State $p_{i,j}^{(a)}\gets(q^{a,i})^{\top}k^{a,j}/\sqrt{n}$ \Comment{Scaled dot-product scores}
                \EndFor
            \EndFor
        \EndFor
        \State
        \For{$i\in[s]$}
            \State $y^{i}\gets\frac{1}{\sqrt{H}}\sum_{a=1}^{H}\sum_{j=1}^{s}\mathrm{SoftMax}_{j}(p_{i,1}^{(a)},\dots,p_{i,s}^{(a)})\tilde{v}^{a,j}$ \Comment{Attention output vectors}
        \EndFor
        \State
        \Ensure $\{y^{i}\}_{i\in[s]}$ \Comment{$\mathbb{R}^{n}$ output vectors}
    \end{algorithmic}
\end{algorithm}

Specifically, let $x^{1},\dots,x^{s}$ be outputs of \textsf{Nonlin}. These are obtained by applying a clipping activation function $\psi$ to preceding vectors $h^{1},\dots,h^{s}\in\mathbb{R}^{n}$:
\begin{equation}\label{eq:ClipActiv}
    x_{\alpha}^{i}
    =\psi(h_{\alpha}^{i})
    =-C1\{h_{\alpha}^{i}<-C\}+h_{\alpha}^{i}1\{-C\le h_{\alpha}^{i}\le C\}+C1\{h_{\alpha}^{i}>C\}
    \quad(\alpha\in[n],\ i\in[s])
    ,
\end{equation}
where $1\{\cdot\}$ denotes the indicator function and $C$ is a positive constant. The vectors $h^{1},\dots,h^{s}$ are outputs of \textsf{MatMul}, defined by
\[
    h^{i}=W^{i}h
    \quad(i\in[s])
    .
\]
Each element of the initial vector $h\in\mathbb{R}^{n}$ is sampled independently from a standard normal distribution.

For all weight matrices involved in the attention mechanism $W^{Q,a},W^{K,a},W^{V,a},W^{O,a}$ and the matrices $W^{i}$ generating $x^{i}$, we set
\[
    \sigma_{W^{Q,a}}^{2}
    =\sigma_{W^{K,a}}^{2}
    =\sigma_{W^{V,a}}^{2}
    =\sigma_{W^{O,a}}^{2}
    =\sigma_{W^{i}}^{2}
    =1
    .
\]
The elements of these weight matrices are independently sampled from $\mathcal{N}(0, \sigma_{W}^{2}/n)$, where $\sigma_{W}^{2}$ is the respective variance (here, 1).

Under this setup, the input vectors $x^{j}$ to the attention layer are designed such that their infinite-width limits $Z^{x^{j}}$ are uncorrelated for $j\neq j'$, i.e.,
\[
    \mathbb{E}[Z^{x^{j}}Z^{x^{j'}}]
    =0
    \quad(j,j'\in[s],\ j\neq j')
    .
\]
Furthermore, the infinite-width limit $Z^{h^{i}}$ is a standard normal random variable, leading to
\[
    \mathbb{E}[(Z^{x^{i}})^{2}]
    =\mathbb{E}[(\psi(Z^{h^{i}}))^{2}]
    =2C^{2}(1-\Phi(C))-2C\phi(C)+2\Phi(C)-1
    ,
\]
where $C$ is the constant used in the clipping activation, and $\Phi$ and $\phi$ are the cumulative distribution function and probability density function of the standard normal distribution, respectively. As $C\to\infty$, this value converges to $1$. In our experiments, we set $C=100$, and we approximate $\mathbb{E}[(Z^{x^{i}})^{2}]$ by $1$. Consequently, the covariances of the limiting variables for the vectors $\tilde{v}^{a,j}$ and dot-product scores $\mathring{p}_{i,j}^{(a)}$ simplify as described in Example~\ref{eg:MultiHead},
\[
    \mathrm{Cov}(Z^{\tilde{v}^{a,j}},Z^{\tilde{v}^{a',j'}})
    =
    \begin{cases}
        \mathbb{E}[(Z^{x^{j}})^{2}]
        \approx1
        &(a=a', \ j=j')
        ,\\
        0
        &(\text{otherwise})
        ,
    \end{cases}
\]
and
\[
    \mathrm{Cov}(\mathring{p}_{i,j}^{(a)},\mathring{p}_{i',j'}^{(a')})
    =
    \begin{cases}
        \left(\mathbb{E}[(Z^{x^{i}})^{2}]\right)^{2}
        \approx1
        &(a=a',\ i=i',\ j=j')
        ,\\
        0
        &(\text{otherwise})
        .
    \end{cases}
\]

To estimate the empirical distributions of finite-width attention outputs and their corresponding infinite-width limits, we employ Monte Carlo sampling. For each such estimation, $50,000$ samples are drawn, unless otherwise noted. Kernel density estimation (KDE) is used to visualize these empirical distributions.

\subsection{Analysis of Low-Rank Attention}\label{app:subsec:SimLowrank}

\subsubsection{Specific Setup for Low-Rank Attention}

In practice, large-scale Transformers typically assume a specific embedding dimensionality for multi-head self-attention  layers. For head counts $H$, the embedding dimension $n$ is set linearly as $n = H n_{H}$, where $n_H$ denotes the head dimension and determines the sizes of weight matrices as $W^{Q,a}, W^{K,a}, W^{V,a} \in \mathbb{R}^{n_{H}\times n}$. Thus, the QK product becomes low-rank relative to the embedding dimension $n$, and the scaling factor is given by $1/\sqrt{n_H}$ as follows:
\[
    p_{i,j}^{(a)}
    =\frac{1}{\sqrt{n_H}}(W^{Q,a}x^{i})^\top(W^{K,a}x^{j})
    \quad(i,j\in[s],\ a\in[H])
    .
\]
For example, the original Transformer architecture sets $H=8$ and $n_H=64$. Large-scale models often increase the number of heads to be on the order of the hidden embedding dimension \cite{everett2024scaling}, as seen in GPT-3 (175B parameters), which sets $H=96$ and $n_H=128$.

Additionally, since the output from each head is also of dimension $n_H$ through the value matrix, an output weight $W^{O,a}\in \mathbb{R}^{n\times n_{H}}$ is applied to map it back to the $n$-dimensional input for the subsequent layer:
\[
    y^{i}
    =\frac{1}{\sqrt{H}}\sum_{a=1}^{H}\sum_{j=1}^{s}\mathrm{SoftMax}_{j}(p_{i,1}^{(a)},\dots,p_{i,s}^{(a)}) W^{O,a}W^{V,a}x^{j}
    \quad(i\in[s])
    .
\]
Note that, to ensure the dot-product scores and the attention outputs are of order 1, the weight matrices are randomly initialized with the following scales:
\[
    W_{\alpha\beta}^{Q,a},W_{\alpha\beta}^{K,a},W_{\alpha\beta}^{V,a} \sim N(0,\sigma_W^2/n)
    ,\qquad
    W_{\alpha\beta}^{O,a} \sim N(0,\sigma_W^2/n_H) \quad (\alpha, \beta \in [n])
    .
\]
For simplicity, we set $\sigma_{W}=1$.

\subsubsection{Results and Discussion}

\begin{figure}[tbp]
    \centering
    \begin{subcaptionbox}{Distribution of $y_{1}^{1}$ and $Z^{y^{1}}$.}[0.45\textwidth]
        {\includegraphics[width=\linewidth]{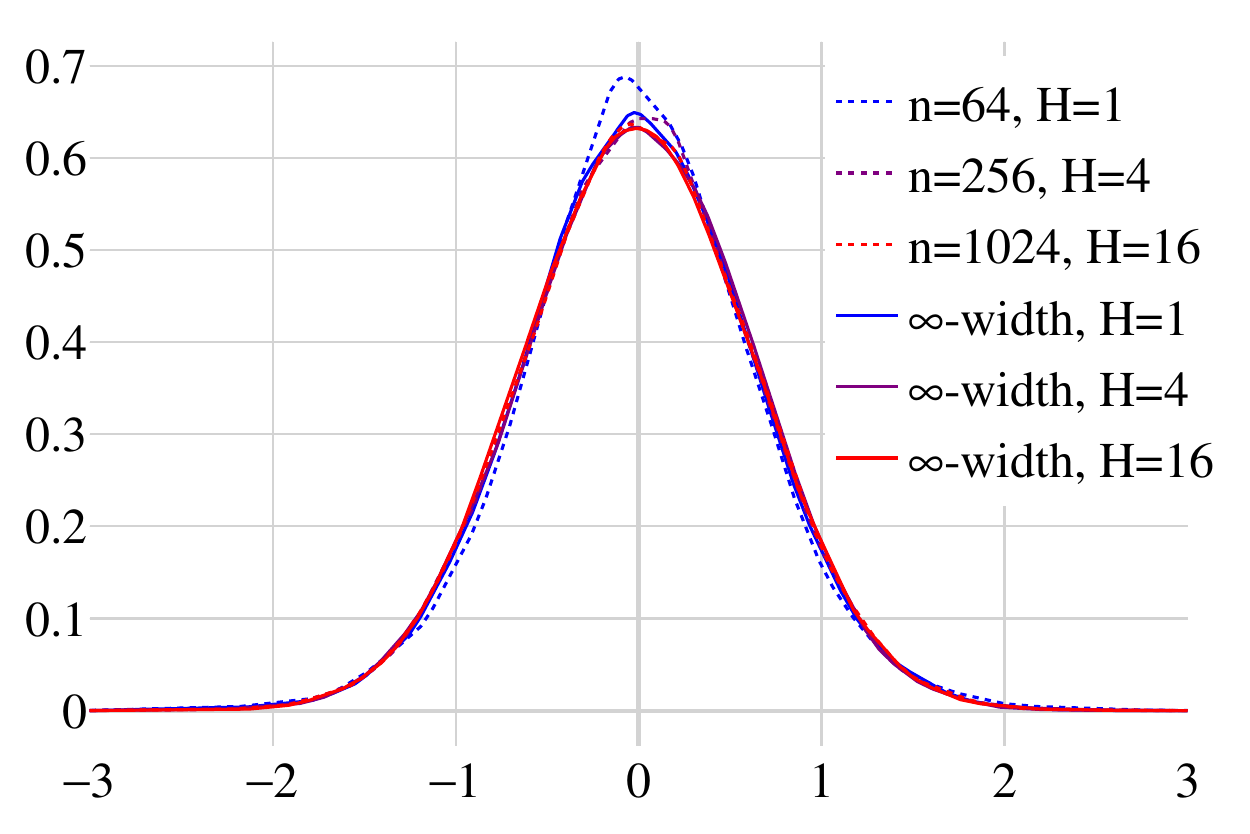}}
    \end{subcaptionbox}
    \hspace{0.05\textwidth}
    \begin{subcaptionbox}{KL divergence with error bars.}[0.45\textwidth]
        {\includegraphics[width=\linewidth]{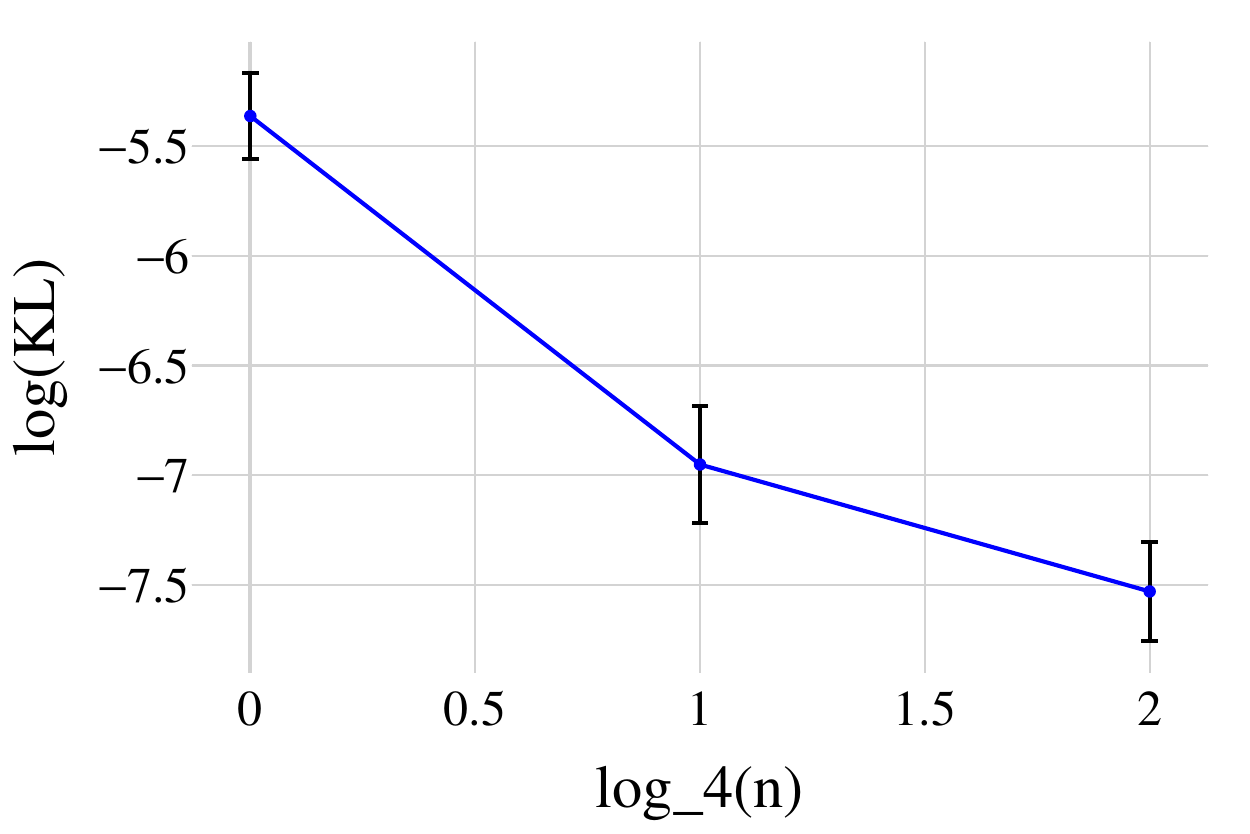}}
    \end{subcaptionbox}
    \caption{Comparison of the distribution of the attention output $y_{1}^{1}$ and its infinite-width limit $Z^{y}$ under the low-rank setting. \textbf{(a)} Kernel density estimates of the empirical distribution (via Monte Carlo sampling) of $y_{1}^{1}$ for various widths $n$ and head counts $H$ (with $n_H=n/H=64$ is fixed, dashed lines) alongside that of $Z^{y}$ (solid lines). \textbf{(b)} Average of the log-KL divergence $\log\mathrm{KL}(\mathrm{Dist}(y_{1}^{1})\|\mathrm{Dist}(Z^{y^{1}}))$ over 10 independent trials, plotted against $\log_{4}(n)$ with error bars indicating one standard deviation.}
    \label{fig:low_rank}
\end{figure}

Finally, we investigate the behavior of the attention output $y^{1}$ in the low-rank regime described above, where the number of heads $H$ increases proportionally with $n$. Fixing the head dimension $n_{H}=n/H=64$, we perform 10 independent experiments for each $(n,H)\in\{(64,1),(256,4),(1024,16)\}$. Figure~\ref{fig:low_rank}(a) shows the estimated densities of our first trial, and Figure~\ref{fig:low_rank}(b) plots the average log-KL divergence between the distribution of $y_{1}^{1}$ and the corresponding infinite-width limit distribution, with error bars showing one standard deviation across the 10 trials. These figures show the convergence to the infinite-width limit as $n$ and $H$ increase proportionally.

Notably, even in these practically relevant settings employing low-rank attention (where head-specific projections are $n_{H}\times n$ or $n\times n_{H}$ rather than the $n\times n$ matrices primarily assumed in our formal derivations in Theorem~\ref{thm:MainTheorem}), our infinite-width framework continues to provide an excellent approximation. This agreement suggests that the core principles of convergence captured by our theory extend robustly to common architectural variants like low-rank attention (with appropriate scaling considerations), underscoring the practical utility of our theoretical predictions under these structural assumptions common in modern attention models.

\subsection{Additional Experiments on Robustness}\label{app:subsec:AdditionalExperiments}

To further validate the robustness of our theoretical predictions, we conducted additional experiments by varying key hyperparameters. These experiments investigate the impact of the spatial dimension (i.e., the number of tokens) and the choice of activation function.

\subsubsection{Varying the Spatial Dimension}

\begin{figure}[tbp]
    \centering
    \begin{subcaptionbox}{Distribution of $y_{1}^{1}$ and $Z^{y^{1}}$.}[0.45\textwidth]
        {\includegraphics[width=\linewidth]{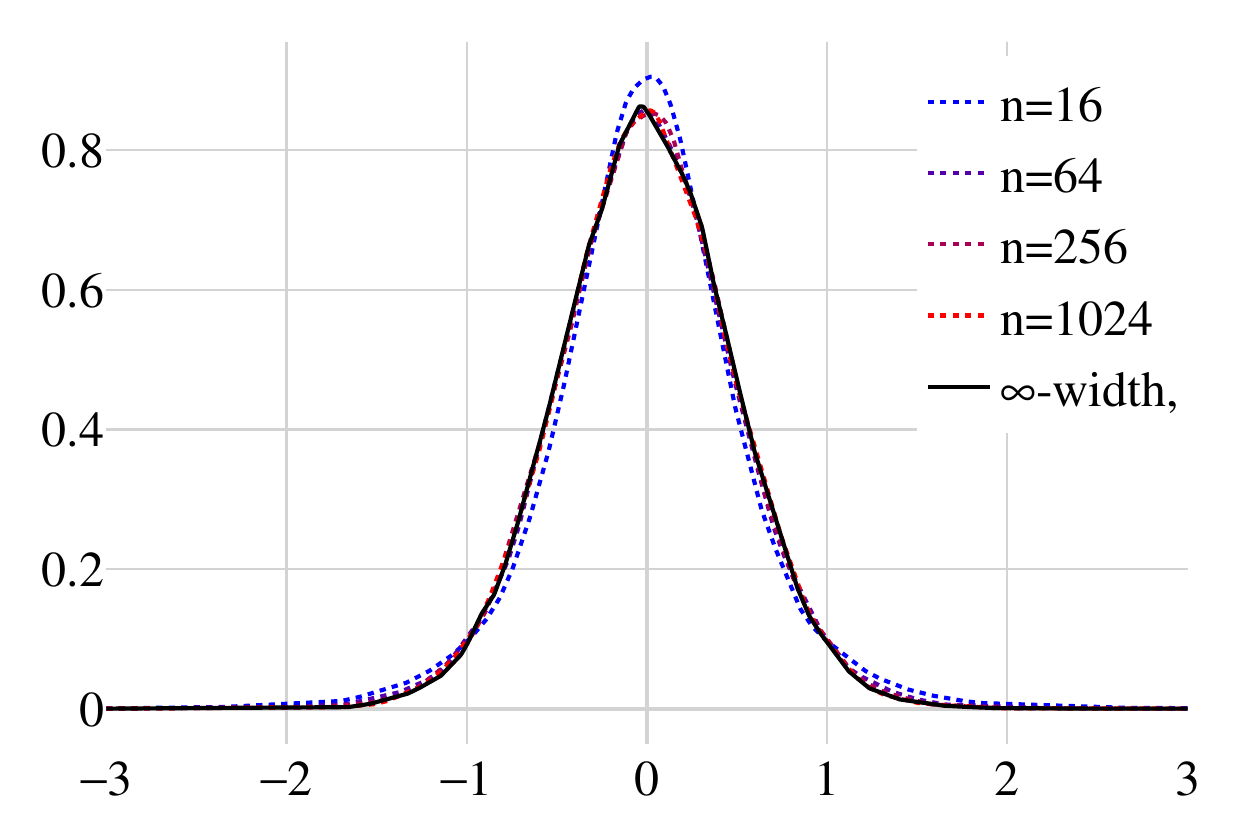}}
    \end{subcaptionbox}
    \hspace{0.05\textwidth}
    \begin{subcaptionbox}{KL divergence with error bars.}[0.45\textwidth]
        {\includegraphics[width=\linewidth]{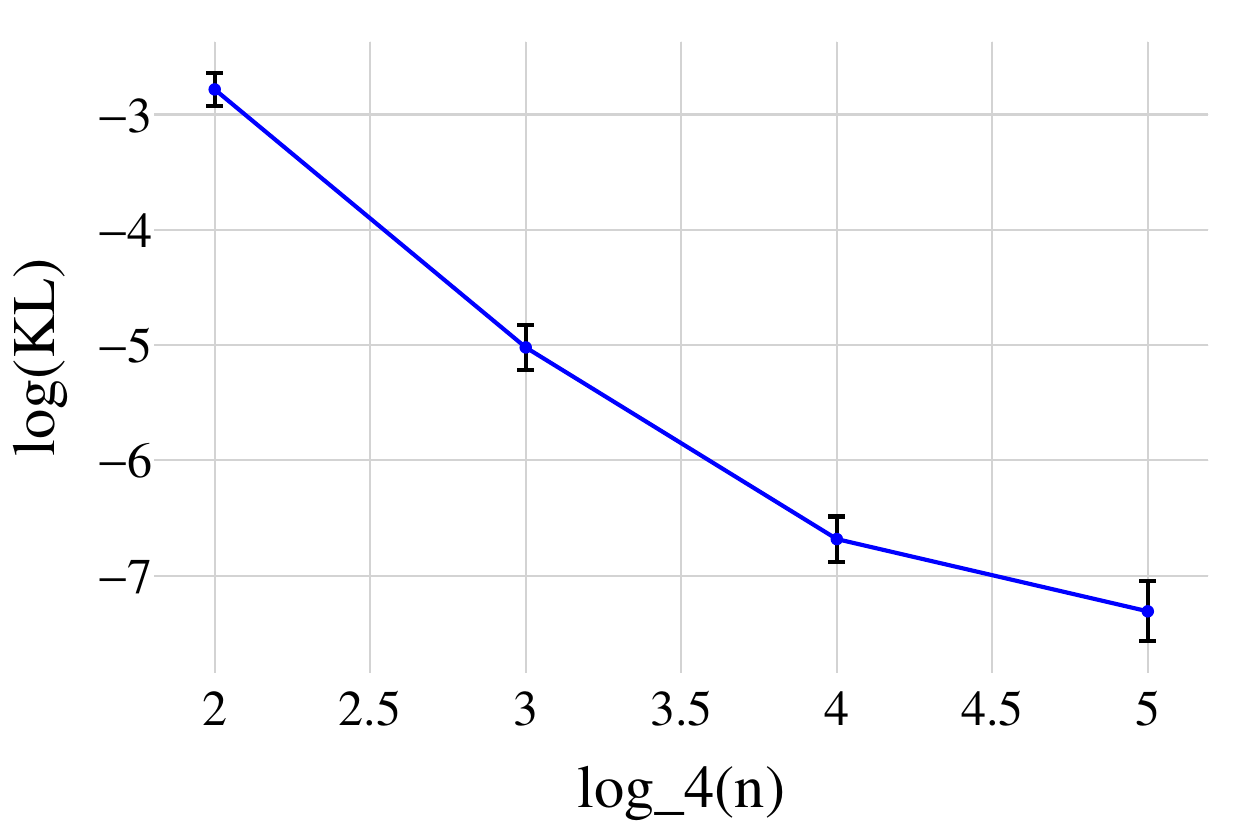}}
    \end{subcaptionbox}
    \caption{Comparison of the distribution of the attention output $y_{1}^{1}$ and its infinite-width limit $Z^{y^{1}}$ when $s=8$. \textbf{(a)} Kernel density estimates of the empirical distribution (via Monte Carlo sampling) of $y_{1}^{1}$ for widths $n\in\{16,64,256,1024\}$ (dashed lines) alongside that of $Z^{y^{1}}$ (solid line). \textbf{(b)} Average of the log-KL divergence $\log\mathrm{KL}(\mathrm{Dist}(y_{1}^{1})\|\mathrm{Dist}(Z^{y^{1}}))$ over 10 independent trials, plotted against $\log_{4}(n)$ with error bars indicating one standard deviation.}
    \label{fig:vary_n_kl_s8}
\end{figure}

Our main experiments in Section~\ref{sec:Simulation} were conducted with a spatial dimension of $s=4$. Here, we present the results for an increased spatial dimension of $s=8$. All other experimental settings remain identical to those described in Appendix~\ref{app:subsec:ExperimentalSetup}. The results, shown in Figure~\ref{fig:vary_n_kl_s8}, demonstrates that our theory remains accurate even when the number of tokens is changed. This suggests that the convergence to the theoretical limit is robust to changes in the sequence length.

\subsubsection{Varying the Activation Function}

\begin{figure}[tbp]
    \centering
    \begin{subcaptionbox}{Distribution of $y_{1}^{1}$ and $Z^{y^{1}}$.}[0.45\textwidth]
        {\includegraphics[width=\linewidth]{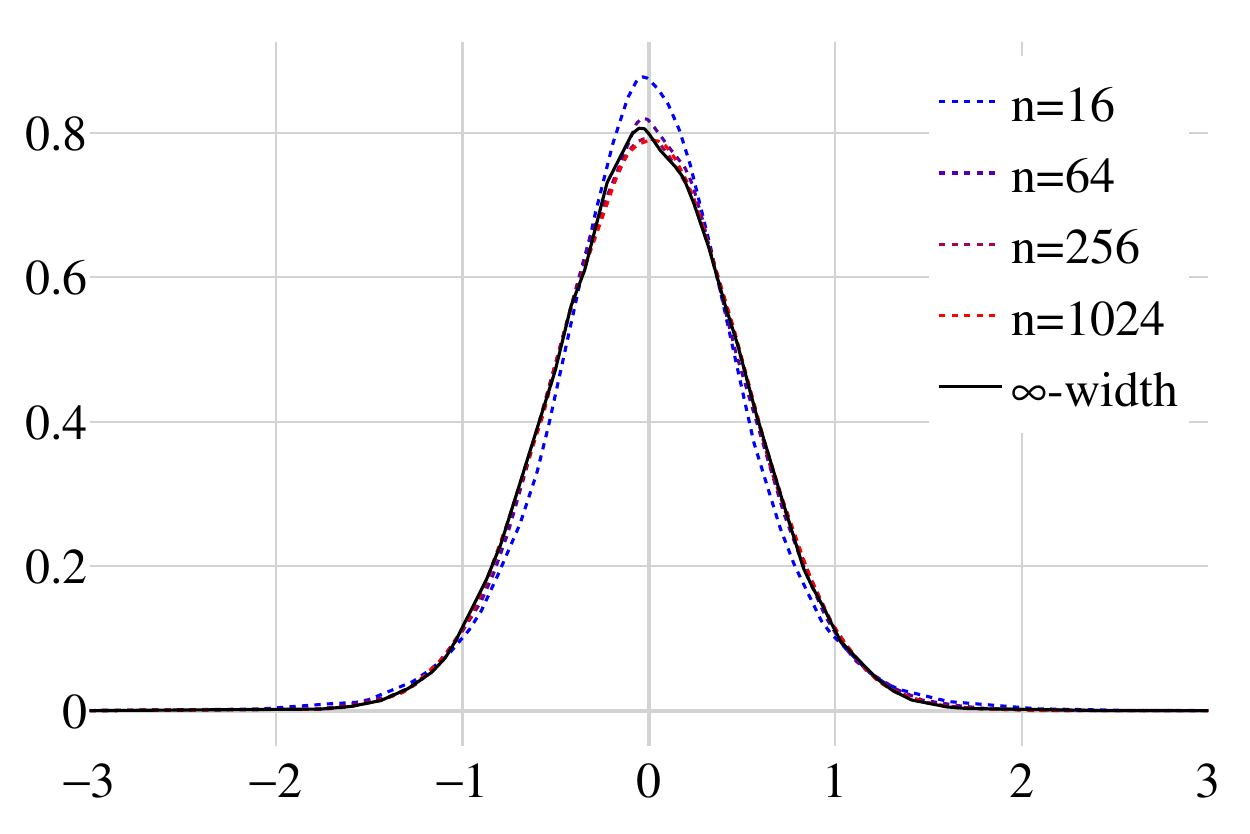}}
    \end{subcaptionbox}
    \hspace{0.05\textwidth}
    \begin{subcaptionbox}{KL divergence with error bars.}[0.45\textwidth]
        {\includegraphics[width=\linewidth]{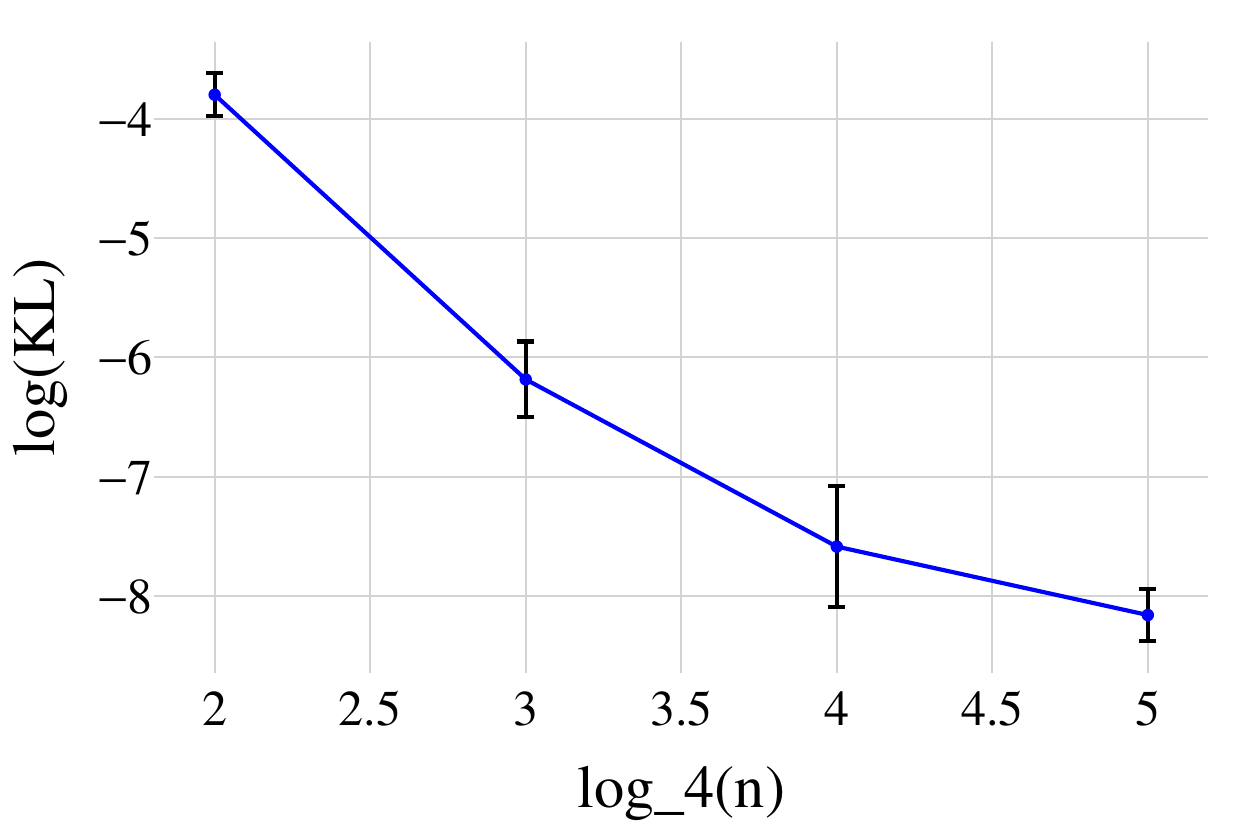}}
    \end{subcaptionbox}
    \caption{Comparison of the distribution of the attention output $y_{1}^{1}$ and its infinite-width limit $Z^{y}$ with ReLU activation function. \textbf{(a)} Kernel density estimates of the empirical distribution (via Monte Carlo sampling) of $y_{1}^{1}$ for various widths $n$ and head counts $H$ (with $n_H=n/H=64$ is fixed, dashed lines) alongside that of $Z^{y}$ (solid lines). \textbf{(b)} Average of the log-KL divergence $\log\mathrm{KL}(\mathrm{Dist}(y_{1}^{1})\|\mathrm{Dist}(Z^{y^{1}}))$ over 10 independent trials, plotted against $\log_{4}(n)$ with error bars indicating one standard deviation.}
    \label{fig:vary_n_kl_relu}
\end{figure}

The experiments in the main text utilize a clipping activation function. To ensure our findings are not specific to this choice, we also perform experiments with the ReLU activation function, i.e., $\psi(h_{\alpha}^{i})=h_{\alpha}^{i}\vee0$ in Eq.~\eqref{eq:ClipActiv}, which is widely used in modern neural networks.\footnote{We note that the ReLU function is not bounded and consequently, our theory does not directly apply. However, as mentioned in the main text, the boundedness assumption is not essential to our theory.} The covariances of the limiting variables for the vectors $\tilde{v}^{a,j}$ and the dot-product scores $\mathring{p}_{i,j}^{(a)}$ are
\[
    \mathrm{Cov}(Z^{\tilde{v}^{a,j}},Z^{\tilde{v}^{a',j'}})
    =1\{a=a'\}\left[\frac{1}{2\pi}+\left(\frac{1}{2}-\frac{1}{2\pi}\right)1\{j=j'\}\right]
\]
and
\[
    \mathrm{Cov}(\mathring{p}_{i,j}^{(a)},\mathring{p}_{i',j'}^{(a')})
    =1\{a=a'\}\left[\frac{1}{2\pi}+\left(\frac{1}{2}-\frac{1}{2\pi}\right)1\{i=i'\}\right]\left[\frac{1}{2\pi}+\left(\frac{1}{2}-\frac{1}{2\pi}\right)1\{j=j'\}\right]
    .
\]
For each experiment, we draw 100,000 samples for both the finite-width attention output and its corresponding infinite-width limit, an increase from 50,000. The experimental setup is otherwise identical to that described in Appendix~\ref{app:subsec:ExperimentalSetup}.

Figure~\ref{fig:vary_n_kl_relu} confirms a strong agreement between the empirical distributions and our theoretical predictions. This indicates that our theory is robust to this change in activation function, strengthening its applicability to a broader range of practical model architectures.

\section{Mathematical Tools}

\subsection{Basics}

\begin{lemma}\label{lem:lpNorm.Monotonicity}
	For $1\le m\le\infty$ and $a_{1},\dots,a_{k}\in\mathbb{R}$, we have
	\[
		\left|\sum_{i=1}^{k}a_{i}\right|^{m}\le\left(\sum_{i=1}^{k}|a_{i}|\right)^{m}\le k^{m-1}\sum_{i=1}^{k}|a_{i}|^{m}
    .
	\]
\end{lemma}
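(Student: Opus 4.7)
The plan is to prove the two inequalities separately using elementary tools. For the first inequality, I would invoke the triangle inequality to get $\lvert\sum_{i=1}^{k} a_{i}\rvert \le \sum_{i=1}^{k}\lvert a_{i}\rvert$, and then raise both sides to the power $m$; this preserves the inequality since both quantities are nonnegative and $x\mapsto x^{m}$ is nondecreasing on $[0,\infty)$ for $m\ge 1$.

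For the second inequality, the cleanest approach is via Jensen's inequality applied to the convex function $\phi(x)=x^{m}$ on $[0,\infty)$, which is convex precisely because $m\ge 1$. Taking the uniform probability measure on the multiset $\{\lvert a_{1}\rvert,\dots,\lvert a_{k}\rvert\}$, Jensen yields
\[
\left(\frac{1}{k}\sum_{i=1}^{k}\lvert a_{i}\rvert\right)^{m} \le \frac{1}{k}\sum_{i=1}^{k}\lvert a_{i}\rvert^{m},
\]
and multiplying through by $k^{m}$ gives the stated bound. Equivalently, one can apply H\"older's inequality with conjugate exponents $m$ and $m/(m-1)$ to the vectors $(1,\dots,1)$ and $(\lvert a_{1}\rvert,\dots,\lvert a_{k}\rvert)$, obtaining $\sum_{i}\lvert a_{i}\rvert \le k^{(m-1)/m}(\sum_{i}\lvert a_{i}\rvert^{m})^{1/m}$, which becomes the claim after raising to the $m$-th power.

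The boundary case $m=\infty$ I would handle by passing to the limit $m\to\infty$ in the finite-$m$ statement, or, interpreting $x^{\infty}$ under the usual convention as a supremum, by the direct chain $\lvert\sum_{i}a_{i}\rvert \le \sum_{i}\lvert a_{i}\rvert \le k\max_{i}\lvert a_{i}\rvert$. No genuine obstacle is expected here: the lemma is a classical power-mean/triangle-inequality statement and the entire argument reduces to two short invocations of standard inequalities, with the only mild care being the convention for $m=\infty$.
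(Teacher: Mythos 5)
Your proof is correct and follows essentially the same route as the paper: triangle inequality for the first bound, then Jensen's inequality applied to the convex map $x\mapsto x^{m}$ on $[0,\infty)$ for the second. The H\"older variant and the remarks on the $m=\infty$ convention are reasonable but not needed; the paper does not address $m=\infty$ explicitly either.
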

\begin{proof}
    The first inequality is an application of the triangle inequality. The second inequality follows from Jensen's inequality. Since $m\ge1$, the function $x\mapsto x^{m}$ on $[0,\infty)$ is convex, and thus Jensen's inequality implies
	\[
		\left(\sum_{i=1}^{k}|a_{i}|\right)^{m}
		=k^{m}\left(\frac{1}{k}\sum_{i=1}^{k}|a_{i}|\right)^{m}
		\le k^{m}\frac{1}{k}\sum_{i=1}^{k}|a_{i}|^{m}
		=k^{m-1}\sum_{i=1}^{k}|a_{i}|^{m}
	\]
    as desired.
\end{proof}

\begin{lemma}[Portmanteau lemma (Lemma 2.2 in \cite{vaart1998asymptotic})]\label{lem:Portmanteau}
    The following conditions are equivalent.
    \begin{enumerate}
        \renewcommand{\labelenumi}{(\roman{enumi})}
        \item $X_{n}\stackrel{d}{\longrightarrow}X$.
        \item $\mathbb{E}[f(X_{n})]\to\mathbb{E}[f(X)]$ for all bounded and continuous function $f$.
        \item $\mathbb{E}[f(X_{n})]\to\mathbb{E}[f(X)]$ for all bounded and Lipschitz function $f$.
        \item $\mathrm{P}(X_{n}\in B)\to\mathrm{P}(X\in B)$ for all Borel sets $B$ with $\mathrm{P}(X\in\delta B)=0$, where $\delta B$ denotes the boundary of $B$.
    \end{enumerate}
\end{lemma}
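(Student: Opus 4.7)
The plan is to establish the implications through the cycle (ii) $\Rightarrow$ (iii) $\Rightarrow$ (i) $\Rightarrow$ (ii), and then treat (i) $\Leftrightarrow$ (iv) separately. The step (ii) $\Rightarrow$ (iii) is immediate, since every bounded Lipschitz function is a fortiori bounded and continuous.

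For (iii) $\Rightarrow$ (i), I would reduce to convergence of the cumulative distribution function $F_{X_{n}}$ to $F_{X}$ at continuity points. Fix a continuity point $x$ of $F_{X}$ and $\epsilon>0$, and define the bounded Lipschitz sandwich functions
$$f_{\epsilon}^{-}(t)=\min\bigl(1,\max\bigl(0,(x-t)/\epsilon\bigr)\bigr),\qquad f_{\epsilon}^{+}(t)=f_{\epsilon}^{-}(t-\epsilon),$$
which satisfy $\mathbf{1}_{(-\infty,\,x-\epsilon]}\le f_{\epsilon}^{-}\le\mathbf{1}_{(-\infty,\,x]}\le f_{\epsilon}^{+}\le\mathbf{1}_{(-\infty,\,x+\epsilon]}$. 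Sandwiching $F_{X_{n}}(x)$ between $\mathbb{E}[f_{\epsilon}^{-}(X_{n})]$ and $\mathbb{E}[f_{\epsilon}^{+}(X_{n})]$, applying (iii) as $n\to\infty$, and sending $\epsilon\to0$ using continuity of $F_{X}$ at $x$, yields $F_{X_{n}}(x)\to F_{X}(x)$, which is the definition of (i).

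The main obstacle is (i) $\Rightarrow$ (ii). Here I would invoke the Skorokhod representation theorem to realize $Y_{n}\stackrel{d}{=}X_{n}$ and $Y\stackrel{d}{=}X$ on a common probability space with $Y_{n}\to Y$ almost surely. For any bounded continuous $f$, continuity gives $f(Y_{n})\to f(Y)$ almost surely, and the dominated convergence theorem (with the constant dominating function $\|f\|_{\infty}$) passes the limit into the expectation, yielding $\mathbb{E}[f(Y_{n})]\to\mathbb{E}[f(Y)]$; since expectations depend only on the distribution, this is the desired conclusion. An elementary alternative that avoids Skorokhod is to first extract tightness of $\{X_{n}\}$ from (i), then approximate $f$ uniformly on a large compact interval by step functions whose breakpoints all lie at continuity points of $F_{X}$, and finally reduce to the one-dimensional CDF convergence from (i).

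Finally, for (i) $\Leftrightarrow$ (iv): the direction (iv) $\Rightarrow$ (i) follows by taking $B=(-\infty,x]$, whose boundary $\{x\}$ is $P_{X}$-null precisely when $x$ is a continuity point of $F_{X}$. Conversely, given a Borel set $B$ with $\mathrm{P}(X\in\partial B)=0$, I would, using the already-established (i) $\Rightarrow$ (iii), sandwich $\mathbf{1}_{B}$ between the bounded Lipschitz functions $g_{\epsilon}(t)=\min(1,d(t,B^{\mathrm{c}})/\epsilon)\le\mathbf{1}_{B^{\circ}}$ and $h_{\epsilon}(t)=\max(0,1-d(t,\bar{B})/\epsilon)\ge\mathbf{1}_{\bar{B}}$. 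Applying (iii) to these sandwich functions gives $\mathbb{E}[g_{\epsilon}(X)]\le\liminf_{n}\mathrm{P}(X_{n}\in B)\le\limsup_{n}\mathrm{P}(X_{n}\in B)\le\mathbb{E}[h_{\epsilon}(X)]$; sending $\epsilon\to0$ and using $\mathrm{P}(X\in\partial B)=0$ collapses both bounds to $\mathrm{P}(X\in B)$.
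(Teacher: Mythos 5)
The paper does not prove this lemma at all: it is imported verbatim as a standard fact (Lemma~2.2 of van der Vaart, 1998), so there is no in-paper argument to compare against, and your proposal should be judged as a standalone reconstruction of the textbook proof. Judged that way, it is correct and follows the classical route: (ii)$\Rightarrow$(iii) is trivial; your sandwich functions $f_\epsilon^\pm$ correctly squeeze the CDF between $F_X(x-\epsilon)$ and $F_X(x+\epsilon)$ to get (iii)$\Rightarrow$(i); (i)$\Rightarrow$(ii) via Skorokhod representation plus dominated convergence is legitimate (and in one dimension Skorokhod is elementary via quantile transforms, so there is no circularity), with your step-function alternative being the more self-contained option; and the (i)$\Leftrightarrow$(iv) argument with $g_\epsilon(t)=\min(1,d(t,B^{\mathrm{c}})/\epsilon)$ and $h_\epsilon(t)=\max(0,1-d(t,\bar B)/\epsilon)$ is exactly the standard continuity-set argument, since $g_\epsilon\uparrow\mathbf{1}_{B^\circ}$ and $h_\epsilon\downarrow\mathbf{1}_{\bar B}$ and $\mathrm{P}(X\in\partial B)=0$ collapses the bounds. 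Two minor caveats: the paper applies the lemma to random vectors (e.g.\ $\bm{p}_{1:r}\in\mathbb{R}^{r}$), while your (iii)$\Rightarrow$(i) and (iv)$\Rightarrow$(i) steps are written one-dimensionally through the CDF at a point $x$; the same scheme works in $\mathbb{R}^{k}$ (orthants $(-\infty,x_1]\times\cdots\times(-\infty,x_k]$ and product sandwich functions), but you should say so, since continuity points of a multivariate CDF require slightly more care. Also, invoking Skorokhod is heavier machinery than the lemma itself; since you sketch the tightness-plus-approximation alternative anyway, that would be the cleaner primary argument.
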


\begin{fact}\label{fact:E[op(1)].convergence.UI}
    Suppose $\{X_{n}\}_{n\in\mathbb{N}}$ is a sequence of integrable random variables that converges in probability to $X$. Then the following statements are equivalent.
    \begin{enumerate}
        \item[(i)] The sequence $\{X_{n}\}_{n\in\mathbb{N}}$ is uniformly integrable.
        \item[(ii)] $\mathbb{E}(|X_{n}|)\to\mathbb{E}(|X|)<\infty$.
    \end{enumerate}
\end{fact}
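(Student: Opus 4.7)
The plan is to prove the two implications separately, pivoting on the elementary decomposition $\mathbb{E}|X_{n}|=\mathbb{E}[|X_{n}|\wedge M]+\mathbb{E}[(|X_{n}|-M)^{+}]$. The truncated part $|X_{n}|\wedge M$ is uniformly bounded, so in-probability convergence upgrades to $L^{1}$-convergence there, while the tail part $\mathbb{E}[(|X_{n}|-M)^{+}]$ is precisely what uniform integrability controls. Both directions reduce to playing these two observations against each other.

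For (i) $\Rightarrow$ (ii), I would first argue that $X$ is integrable. Uniform integrability supplies an $M$ with $\sup_{n}\mathbb{E}[|X_{n}|1\{|X_{n}|>M\}]\le 1$, whence $\sup_{n}\mathbb{E}|X_{n}|\le M+1$; extracting an almost-surely convergent subsequence and applying Fatou gives $\mathbb{E}|X|<\infty$. Since any single integrable variable is uniformly integrable, the family $\{X_{n}-X\}$ inherits uniform integrability. Writing
\[
    \mathbb{E}|X_{n}-X|\le\varepsilon+\mathbb{E}[|X_{n}-X|1\{|X_{n}-X|>\varepsilon\}]
\]
and using $\mathrm{P}(|X_{n}-X|>\varepsilon)\to 0$ together with uniform integrability to annihilate the tail yields $L^{1}$-convergence, from which $\big|\mathbb{E}|X_{n}|-\mathbb{E}|X|\big|\le \mathbb{E}|X_{n}-X|\to 0$ follows immediately.

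For (ii) $\Rightarrow$ (i), the pivotal inequality is $|X_{n}|1\{|X_{n}|>2a\}\le 2(|X_{n}|-a)^{+}$, which rearranges to
\[
    \mathbb{E}[|X_{n}|1\{|X_{n}|>2a\}]\le 2\bigl(\mathbb{E}|X_{n}|-\mathbb{E}[|X_{n}|\wedge a]\bigr).
\]
Because $|X_{n}|\wedge a\to|X|\wedge a$ in probability and the truncations are uniformly bounded by $a$, a bounded-convergence argument (passing along subsequences, or directly noting that bounded families are automatically uniformly integrable) gives $\mathbb{E}[|X_{n}|\wedge a]\to\mathbb{E}[|X|\wedge a]$. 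Combining this with the hypothesis $\mathbb{E}|X_{n}|\to\mathbb{E}|X|$, I get $\limsup_{n}\mathbb{E}[|X_{n}|1\{|X_{n}|>2a\}]\le 2\mathbb{E}[(|X|-a)^{+}]$, and the right-hand side tends to $0$ as $a\to\infty$ by dominated convergence since $|X|$ is integrable. Since any finite collection of integrable variables is uniformly integrable, absorbing the initial indices $n<N$ into a larger truncation threshold produces a uniform bound $\sup_{n}\mathbb{E}[|X_{n}|1\{|X_{n}|>M\}]<\varepsilon$ for all sufficiently large $M$.

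The proof is largely bookkeeping once the truncation inequality is in hand. The only conceptually delicate point is the promotion of in-probability convergence to convergence of expectations for the bounded truncations $|X_{n}|\wedge a$; handling this cleanly without circularity (since Vitali-type theorems are exactly what we are proving in general) is the main obstacle. I would resolve it by appealing to the simpler, self-contained fact that a uniformly bounded sequence converging in probability converges in $L^{1}$, which needs only dominated convergence along subsequences.
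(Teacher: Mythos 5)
The paper never proves this statement: it is listed as a background \emph{Fact} and used as a known result (it is the standard equivalence, under convergence in probability, between uniform integrability and convergence of the first absolute moments --- a Vitali/Scheff\'e-type theorem), so there is no in-paper proof to compare against. Your argument is correct and self-contained. The truncation identity $\mathbb{E}|X_{n}|=\mathbb{E}[|X_{n}|\wedge a]+\mathbb{E}[(|X_{n}|-a)^{+}]$, the inequality $|X_{n}|1\{|X_{n}|>2a\}\le 2(|X_{n}|-a)^{+}$, and the subsequence-plus-dominated-convergence device for passing $\mathbb{E}[|X_{n}|\wedge a]\to\mathbb{E}[|X|\wedge a]$ are all sound, and absorbing the finitely many initial indices into a larger threshold correctly finishes (ii) $\Rightarrow$ (i); Fatou along an a.s.\ convergent subsequence correctly gives $\mathbb{E}|X|<\infty$ in (i) $\Rightarrow$ (ii). Two steps you state tersely deserve a line each in a written-out version: that $\{X_{n}-X\}$ is uniformly integrable requires the (easy, standard) fact that a sum of uniformly integrable families is uniformly integrable, via $|X_{n}-X|1\{|X_{n}-X|>2K\}\le 2|X_{n}|1\{|X_{n}|>K\}+2|X|1\{|X|>K\}$; and ``annihilating the tail'' $\mathbb{E}[|X_{n}-X|1\{|X_{n}-X|>\varepsilon\}]$ needs the uniform-absolute-continuity consequence of uniform integrability, i.e.\ the split
\[
    \mathbb{E}\bigl[|X_{n}-X|1\{|X_{n}-X|>\varepsilon\}\bigr]
    \le\mathbb{E}\bigl[|X_{n}-X|1\{|X_{n}-X|>K\}\bigr]+K\,\mathrm{P}(|X_{n}-X|>\varepsilon)
    ,
\]
with $K$ chosen first and $n$ sent to infinity afterwards. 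With those two routine details made explicit, the proof is complete.
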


\begin{remark}
    If $X_{n}$ converges to $0$ in probability, then by Fact~\ref{fact:E[op(1)].convergence.UI}, we have
    \[
        \mathbb{E}(|X_{n}|)=o(1)
        \quad\iff\quad
        \{X_{n}\}_{n\in\mathbb{N}}\text{ is uniformly integrable}
        .
    \]
    Moreover, since $|\mathbb{E}(X_{n})|\le\mathbb{E}(|X_{n}|)$, it follows that $\mathbb{E}(X_{n})=o(1)$.
\end{remark}

\begin{fact}\label{fact:UI.sufficient.condition}
    Suppose there exists $\delta>1$ such that $\sup_{n}\mathbb{E}(|X_{n}|^{\delta})<\infty$. Then the sequence $\{X_{n}\}_{n\in\mathbb{N}}$ is uniformly integrable.
\end{fact}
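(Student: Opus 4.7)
The plan is to verify the standard definition of uniform integrability directly, namely that
\[
    \lim_{K\to\infty}\sup_{n\in\mathbb{N}}\mathbb{E}\bigl[|X_{n}|\,\mathbf{1}\{|X_{n}|>K\}\bigr]=0,
\]
by exploiting the extra $\delta-1$ powers of $|X_n|$ available on the tail event. The key observation is that on $\{|X_{n}|>K\}$, one has $|X_{n}|\le|X_{n}|^{\delta}/K^{\delta-1}$, since $|X_n|^{\delta-1}>K^{\delta-1}$ there and $\delta-1>0$. This is the only place where the hypothesis $\delta>1$ is used, and it is what turns an $L^{\delta}$ bound into a tail-integral bound.

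Using this pointwise inequality inside the truncated expectation, I would write
\[
    \mathbb{E}\bigl[|X_{n}|\,\mathbf{1}\{|X_{n}|>K\}\bigr]
    \le\frac{1}{K^{\delta-1}}\mathbb{E}\bigl[|X_{n}|^{\delta}\,\mathbf{1}\{|X_{n}|>K\}\bigr]
    \le\frac{1}{K^{\delta-1}}\mathbb{E}\bigl[|X_{n}|^{\delta}\bigr]
    \le\frac{M}{K^{\delta-1}},
\]
where $M:=\sup_{n}\mathbb{E}[|X_{n}|^{\delta}]<\infty$ by assumption. Taking the supremum over $n$ on the left and then sending $K\to\infty$, the right-hand side tends to $0$ because $\delta-1>0$. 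This establishes the defining property of uniform integrability and completes the argument.

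There is essentially no obstacle here; the only subtlety is the correct handling of the indicator to pick up the factor $K^{-(\delta-1)}$, and writing the Chebyshev/Markov-style tail bound in the form that preserves uniformity in $n$. The statement is a textbook consequence of the de la Vallée Poussin criterion (with the test function $t\mapsto t^{\delta}$), and the direct estimate above is the shortest route.
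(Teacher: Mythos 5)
Your argument is correct: the pointwise bound $|X_n|\le |X_n|^{\delta}/K^{\delta-1}$ on $\{|X_n|>K\}$ gives the uniform tail estimate $\sup_n\mathbb{E}[|X_n|\mathbf{1}\{|X_n|>K\}]\le M K^{-(\delta-1)}\to 0$, which is exactly the defining property of uniform integrability. The paper states this as a Fact without proof, since it is a standard textbook result (a special case of the de la Vall\'ee Poussin criterion), and your direct Markov-type estimate is the standard and complete justification.
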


\subsection{Pseudo-Lipschitz Functions}

\begin{definition}[Pseudo-Lipschitz functions \citep{bayati2011dynamics}]\label{def:pseudo-Lipschitz}
    Let $d>1$ be a constant. A function $f:\mathbb{R}^{k}\to\mathbb{R}$ is pseudo-Lipschitz of order $d$ if there exists a constant $C>0$ such that, for all $x,y\in\mathbb{R}^{k}$,
    \[
        |f(x)-f(y)|\le C\|x-y\|(1+\|x\|^{d-1}+\|y\|^{d-1})
    \]
    holds.
\end{definition}

\begin{fact}\label{fact:pseudo-Lipschitz.Inclusion}
    The following statements hold.
    \begin{enumerate}
        \renewcommand{\labelenumi}{(\roman{enumi})}
        \item A Lipschitz function is pseudo-Lipschitz of order $d$ for all $d>1$.
        \item A pseudo-Lipschitz function (of any given order) is continuous.
    \end{enumerate}
\end{fact}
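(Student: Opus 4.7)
The plan is to verify each of the two claims by a direct, elementary substitution into Definition~\ref{def:pseudo-Lipschitz}; no other machinery from the paper is needed. In particular, both parts reduce to pointwise manipulations of the defining inequality, so the proof will fit comfortably in a few lines each.

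For (i), I would take $f:\mathbb{R}^{k}\to\mathbb{R}$ Lipschitz with some constant $L$, so that $|f(x)-f(y)|\le L\|x-y\|$ for all $x,y\in\mathbb{R}^{k}$. The key observation is that the polynomial factor $1+\|x\|^{d-1}+\|y\|^{d-1}$ appearing in the pseudo-Lipschitz bound is at least $1$ for every $x,y$ and every $d>1$. Chaining these two estimates yields
\[
    |f(x)-f(y)|\le L\|x-y\|\le L\|x-y\|\bigl(1+\|x\|^{d-1}+\|y\|^{d-1}\bigr),
\]
which is precisely the statement that $f$ is pseudo-Lipschitz of order $d$ with the same constant $L$, for every $d>1$.

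For (ii), I would fix an arbitrary $x\in\mathbb{R}^{k}$ and let $f$ be pseudo-Lipschitz of order $d$ with constant $C$. Restricting attention to $y$ in the closed unit ball around $x$, one has $\|y\|\le\|x\|+1$, so the pseudo-Lipschitz estimate reduces to a genuine local Lipschitz estimate
\[
    |f(y)-f(x)|\le M(x)\,\|y-x\|,\qquad M(x):=C\bigl(1+\|x\|^{d-1}+(\|x\|+1)^{d-1}\bigr),
\]
where $M(x)$ is a finite function of $x$ alone. Continuity of $f$ at $x$ is immediate, and since $x$ was arbitrary, $f$ is continuous on all of $\mathbb{R}^{k}$.

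Neither part presents any real obstacle: (i) is essentially tautological once one notices the polynomial prefactor is at least $1$, and (ii) merely extracts local Lipschitz continuity from the pseudo-Lipschitz estimate. The only hypothesis playing a non-trivial role is $d>1$, which ensures the exponent $d-1$ is strictly positive so that the prefactor in (i) genuinely exceeds $1$ and that $\|y\|^{d-1}$ is a continuous, locally bounded function of $y$ in (ii).
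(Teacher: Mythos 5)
Your proof is correct, and in fact the paper states this as a \textsc{Fact} without supplying any proof at all, so there is nothing in the source to compare your argument against. Both parts follow exactly as you say: for (i) the polynomial prefactor $1+\|x\|^{d-1}+\|y\|^{d-1}$ is bounded below by $1$, so a Lipschitz bound with constant $L$ is trivially also a pseudo-Lipschitz bound with the same constant; for (ii) restricting $y$ to the closed unit ball about a fixed $x$ turns the pseudo-Lipschitz inequality into a genuine local Lipschitz estimate with constant $M(x)=C\bigl(1+\|x\|^{d-1}+(\|x\|+1)^{d-1}\bigr)<\infty$, which immediately gives continuity at $x$. The one thing you could tighten is a small remark in (ii): the hypothesis $d>1$ is not actually needed for the continuity conclusion (any $d\ge 1$ works, and even $0<d<1$ would, since $t\mapsto t^{d-1}$ is locally bounded on $\{t\ge 0\}$ once you treat $0^{d-1}$ with care), whereas the paper restricts attention to $d\ge 2$ throughout so the point is moot. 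Your reasoning is sound and self-contained, and fills a gap the authors left as routine.
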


In this paper, we refer to a function as pseudo-Lipschitz if it is pseudo-Lipschitz of order $d$ for some $d\in[2,\infty)$.

\begin{proposition}\label{prop:pseudo-Lipschitz.composition}
    Suppose $f:\mathbb{R}^{k}\to\mathbb{R}$ and $g_{i}:\mathbb{R}^{\ell}\to\mathbb{R}$ $(i\in[k])$ are pseudo-Lipschitz. Then the function $h:\mathbb{R}^{\ell}\to\mathbb{R}$ defined by $h(x)=f(g_{1}(x),\dots,g_{k}(x))$ is also pseudo-Lipschitz.
\end{proposition}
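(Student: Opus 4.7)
The plan is to execute a straightforward chain-rule computation on the definition of pseudo-Lipschitz. Let $f$ be pseudo-Lipschitz of order $d_f$ with constant $C_f$, each $g_i$ pseudo-Lipschitz of order $d_i$ with constant $C_i$, and set $d_g=\max_i d_i$. Writing $G(x)=(g_1(x),\ldots,g_k(x))$, so that $h(x)=f(G(x))$, I would first apply the pseudo-Lipschitz bound for $f$ at $G(x)$ and $G(y)$:
\[
    |h(x)-h(y)|\le C_f\,\|G(x)-G(y)\|\bigl(1+\|G(x)\|^{d_f-1}+\|G(y)\|^{d_f-1}\bigr),
\]
and then control each factor on the right in terms of $\|x\|$ and $\|y\|$ using the pseudo-Lipschitz hypothesis on the $g_i$'s.

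The two estimates I would record first are: (i) a polynomial growth bound $|g_i(x)|\le C(1+\|x\|^{d_i})$, obtained by applying Definition~\ref{def:pseudo-Lipschitz} to $g_i$ with $y=0$ and absorbing $|g_i(0)|$ into the constant; this yields $\|G(x)\|\lesssim 1+\|x\|^{d_g}$ after summing over $i\in[k]$ and using Lemma~\ref{lem:lpNorm.Monotonicity} to pass between $\ell^1$ and $\ell^2$ norms; and (ii) a Lipschitz-type estimate $\|G(x)-G(y)\|\lesssim \|x-y\|\bigl(1+\|x\|^{d_g-1}+\|y\|^{d_g-1}\bigr)$, obtained by summing the individual pseudo-Lipschitz bounds for $g_1,\ldots,g_k$ and dominating each $\|x\|^{d_i-1}$ by $1+\|x\|^{d_g-1}$.

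Substituting both estimates into the displayed bound and expanding $(1+\|x\|^{d_g})^{d_f-1}\lesssim 1+\|x\|^{d_g(d_f-1)}$ (and likewise for $y$), the result is a sum of terms of the form $\|x-y\|\cdot\|x\|^{a}\|y\|^{b}$ with $a+b\le (d_g-1)+d_g(d_f-1)=d_gd_f-1$. Each such cross term is reduced to pure powers via Young's inequality $\|x\|^{a}\|y\|^{b}\le\|x\|^{a+b}+\|y\|^{a+b}$, after which the entire right-hand side fits the template $C\|x-y\|(1+\|x\|^{D}+\|y\|^{D})$ with $D=d_gd_f-1$. Setting $d:=d_gd_f$ and noting $d\ge 4\ge 2$ since $d_f,d_g\ge 2$, we conclude that $h$ is pseudo-Lipschitz of order $d$.

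There is no real obstacle here beyond bookkeeping: the only care required is to track constants (which depend on $k$, $C_f$, $C_1,\ldots,C_k$, $d_f$, $d_g$) and to consistently use Lemma~\ref{lem:lpNorm.Monotonicity} together with Young's inequality when converting between $\ell^1$ sums, $\ell^2$ norms, and products of powers. The order inflates multiplicatively as $d_f\cdot d_g$, which is the natural cost of composing two pseudo-Lipschitz maps.
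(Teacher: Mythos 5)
Your proof is correct and follows essentially the same approach as the paper's: apply the pseudo-Lipschitz bound of $f$ at $G(x),G(y)$, then control $\|G(x)-G(y)\|$ and $\|G(x)\|$ (and its powers) via the pseudo-Lipschitz bounds and polynomial growth of the $g_i$, passing between norms with Lemma~\ref{lem:lpNorm.Monotonicity} and handling cross terms with the weighted AM--GM/Young inequality. The resulting order $d_f d_g$ also matches the paper's $d+d_0(d+1)$ once one translates between the two notational conventions for orders.
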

\begin{proof}
    Suppose $f$ is pseudo-Lipschitz of order $d_{0}+1$ and each $g_{i}$ is pseudo-Lipschitz of order $d_{i}+1$. Define a function $g:\mathbb{R}^{\ell}\to\mathbb{R}^{k}$ and a constant $d\ge1$ by
    \[
        g(x)=(g_{1}(x),\dots,g_{k}(x))
        ,\quad
        d=\max\{d_{1},\dots,d_{k}\}
        .
    \]
    Applying the pseudo-Lipschitz bounds for $f$ and each $g_{i}$ gives
    \[
        |h(x)-h(x')|
        \lesssim\|g(x)-g(x')\|\left(1+\|g(x)\|^{d_{0}}+\left\|g(x')\right\|^{d_{0}}\right)
    \]
    and
    \[
        |g_{i}(x)-g_{i}(x')|
        \lesssim\|x-x'\|\left(1+\|x\|^{d_{i}}+\|x'\|^{d_{i}}\right)
        \lesssim\|x-x'\|\left(1+\|x\|^{d}+\|x'\|^{d}\right)
        .
    \]
    The last inequality implies
    \[
        \|g(x)-g(x')\|
        =\left(\sum_{i=1}^{k}|g_{i}(x)-g_{i}(x')|^{2}\right)^{1/2}
        \lesssim\|x-x'\|\left(1+\|x\|^{d}+\|x'\|^{d}\right)
        .
    \]
    On the other hand, since
    \[
        \|g(x)\|^{2}
        =\sum_{i=1}^{k}|g_{i}(x)|^{2}
        \le\left(\sum_{i=1}^{k}|g_{i}(x)|\right)^{2}
    \]
    holds in general, Lemma~\ref{lem:lpNorm.Monotonicity} implies
    \[
        \|g(x)\|^{d_{0}}
        \le\left(\sum_{i=1}^{k}|g_{i}(x)|\right)^{d_{0}}
        \lesssim\sum_{i=1}^{k}|g_{i}(x)|^{d_{0}}
        .
    \]
    The pseudo-Lipschitz property of each $g_{i}$ yields
    \[
        |g_{i}(x)|
        \le|g_{i}(0)|+|g_{i}(x)-g_{i}(0)|
        \lesssim1+\|x\|(1+\|x\|^{d_{i}})
        \lesssim1+\|x\|^{d_{i}+1}
        \lesssim1+\|x\|^{d+1}
        .
    \]
    Hence, by Lemma~\ref{lem:lpNorm.Monotonicity}, we have
    \[
        \|g(x)\|^{d_{0}}
        \lesssim(1+\|x\|^{d+1})^{d_{0}}
        \lesssim1+\|x\|^{d_{0}(d+1)}
        .
    \]
    Combining these elements gives
    \begin{align*}
        |h(x)-h(x')|
        &\lesssim\|x-x'\|\left(1+\|x\|^{d}+\|x'\|^{d}\right)
        \left(1+\|x\|^{d_{0}(d+1)}+\|x'\|^{d_{0}(d+1)}\right)
        .
    \end{align*}
    We expand the product as
    \begin{align*}
        &(1+\|x\|^{d}+\|x'\|^{d})(1+\|x\|^{d_{0}(d+1)}+\|x'\|^{d_{0}(d+1)})\\
        &=1+\|x\|^{d}+\|x'\|^{d}+\|x\|^{d_{0}(d+1)}+\|x'\|^{d_{0}(d+1)}
        +\|x\|^{d+d_{0}(d+1)}+\|x'\|^{d+d_{0}(d+1)}\\
        &\quad+\|x\|^{d}\|x'\|^{d_{0}(d+1)}+\|x'\|^{d}\|x\|^{d_{0}(d+1)}
        .
    \end{align*}
    Observe that each of the first seven terms are bounded by $1+\|x\|^{a}+\|x'\|^{a}$, where $a$ is given by
    \[
        a
        =d+d_{0}(d+1)
        .
    \]
    For the remaining two terms, we apply the weighted AM-GM inequality to obtain
    \[
        \|x\|^{d}\|x'\|^{d_{0}(d+1)}
        =(\|x\|^{a})^{\frac{d}{a}}(\|x'\|^{a})^{\frac{d_{0}(d+1)}{a}}
        \le\frac{d}{a}\|x\|^{a}+\frac{d_{0}(d+1)}{a}\|x'\|^{a}
        \le\|x\|^{a}+\|x'\|^{a}
        .
    \]
    The same bound applies to $\|x'\|^{d}\|x\|^{d_{0}(d+1)}$. Therefore the entire product satisfies
    \[
        (1+\|x\|^{d}+\|x'\|^{d})(1+\|x\|^{d_{0}(d+1)}+\|x'\|^{d_{0}(d+1)})
        \lesssim 1+\|x\|^{a}+\|x'\|^{a}
        ,
    \]
    which implies that $h$ is pseudo-Lipschitz of order $a$.
\end{proof}

\begin{lemma}\label{lem:pseudo-Lipschitz.multiplication}
    Define $f:\mathbb{R}^{2}\to\mathbb{R}$ by $f(x,y)=xy$. Then $f$ is pseudo-Lipschitz of order $d$ for every $d\in[2,\infty)$.
\end{lemma}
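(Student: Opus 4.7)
The plan is to exploit the classical product decomposition
\[
    xy-x'y'=(x-x')y+x'(y-y'),
\]
which splits the difference of products into two pieces, each of which contains one factor of the form (coordinate difference) and one factor of the form (original coordinate). The first factor will provide the $\|(x,y)-(x',y')\|$ term required by the pseudo-Lipschitz inequality, and the second factor will, after a harmless enlargement, produce the $1+\|(x,y)\|^{d-1}+\|(x',y')\|^{d-1}$ term.

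First, I would apply the triangle inequality to the identity above and invoke the elementary bounds $|x-x'|,|y-y'|\le\|(x,y)-(x',y')\|$ together with $|y|\le\|(x,y)\|$ and $|x'|\le\|(x',y')\|$. This yields the preliminary estimate
\[
    |xy-x'y'|\le\|(x,y)-(x',y')\|\bigl(\|(x,y)\|+\|(x',y')\|\bigr).
\]
This inequality alone already settles the case $d=2$ (with $C=1$), since $\|(x,y)\|+\|(x',y')\|\le1+\|(x,y)\|+\|(x',y')\|$.

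For general $d\ge2$, the only remaining point is to absorb the linear term $\|(x,y)\|+\|(x',y')\|$ into the required $(d{-}1)$-th power expression. I would handle this by the standard two-case argument: if $\|(x,y)\|\le1$ then $\|(x,y)\|\le1\le1+\|(x,y)\|^{d-1}$, and if $\|(x,y)\|>1$ then $\|(x,y)\|\le\|(x,y)\|^{d-1}$ since $d-1\ge1$. The same dichotomy applies to $\|(x',y')\|$, so together one gets $\|(x,y)\|+\|(x',y')\|\le2\bigl(1+\|(x,y)\|^{d-1}+\|(x',y')\|^{d-1}\bigr)$. Substituting this back produces the desired bound with constant $C=2$, uniformly in $d\in[2,\infty)$.

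There is no real obstacle here; the entire argument is a short algebraic manipulation plus one elementary case split. The only thing worth being careful about is the choice of norm on $\mathbb{R}^2$, but since any two norms on a finite-dimensional space are equivalent, the constant $C$ can be adjusted accordingly without affecting the conclusion.
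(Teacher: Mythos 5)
Your proof is correct and takes essentially the same route as the paper: both use the standard product decomposition (you write $(x-x')y + x'(y-y')$, the paper writes $x(y-y') + (x-x')y'$, a symmetric variant), bound the leftover linear factor by the Euclidean norm, and absorb it into $1+\|\cdot\|^{d-1}$ via the same case split on whether the norm exceeds $1$ (which the paper phrases compactly as $\|(x,y)\|\le 1+\|(x,y)\|^{d-1}$). No substantive difference.
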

\begin{proof}
    For $(x,y),(x',y')\in\mathbb{R}^{2}$, we have
    \begin{align*}
        &|f(x,y)-f(x',y')|
        =|xy-x'y'|
        =|x(y-y')+(x-x')y'|\\
        &\le|x||y-y'|+|x-x'||y'|
        \le\|(x,y)-(x',y')\|(|x|+|y'|)
        .
    \end{align*}
    Observe that for any $d\ge2$, we have
    \[
        |x|\le\|(x,y)\|\le1+\|(x,y)\|^{d-1}
        ,\quad
        |y'|\le\|(x',y')\|\le1+\|(x',y')\|^{d-1}
        ,
    \]
    and consequently, we have
    \begin{align*}
        |x|+|y'|
        \lesssim1+\|(x,y)\|^{d-1}+\|(x',y')\|^{d-1}
        .
    \end{align*}
    This gives us
    \[
        |f(x,y)-f(x',y')|\lesssim\|(x,y)-(x',y')\|(1+\|(x,y)\|^{d-1}+\|(x',y')\|^{d-1})
        ,
    \]
    which shows that $f$ is pseudo-Lipschitz of order $d$.
\end{proof}

\section{Remaining proofs}

\subsection{Proof of Theorem~\ref{thm:MainTheorem2}}

In this section we provide detailed proofs for the results sketched in Section~\ref{sec:ProofOutline}, thereby completing the proof of Theorem~\ref{thm:MainTheorem2}.

Throughout, $\mathbb{E}[\cdot\mid X]$ denotes the conditional expectation with respect to the $\sigma$-algebra $\sigma(X)$. Since conditional expectations are only defined up to almost-sure equality, we omit ``$\mathrm{a.s.}$'' when writing ``$\stackrel{\mathrm{a.s.}}{=}$'' in this context.

\subsubsection{Weak Convergence of the Dot Products}\label{app:subsec:WeakConvergence}

In this section we prove the following proposition.

\begin{proposition}\label{prop:PvarLimitDist}
    Under the assumptions of Theorem~\ref{thm:MainTheorem2}, the vector $(p_{1},\dots,p_{r})$ converges in distribution to $(\mathring{p}_{1},\dots,\mathring{p}_{r})$, which is the Gaussian vector defined in Definition~\ref{def:limiting_distribution}.
\end{proposition}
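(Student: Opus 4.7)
The plan is to combine the Cram\'er--Wold device with a conditional central limit theorem, leveraging Fact~\ref{fact:NetsorMasterTheorem} to identify the limiting covariance. For the first step, it suffices to show that for every $t = (t_1,\dots,t_r) \in \mathbb{R}^r$ the scalar
\[
    T_n := \sum_{i=1}^r t_i p_i = \frac{1}{\sqrt n}\sum_{\alpha=1}^n Y_\alpha,
    \qquad Y_\alpha := \sum_{i=1}^r t_i\, g^{i,1}_\alpha g^{i,2}_\alpha,
\]
converges in distribution to a centered Gaussian whose variance is $v(t) := \sum_{i,k} t_i t_k\, \mathbb{E}[Z^{g^{i,1}}Z^{g^{i,2}}Z^{g^{k,1}}Z^{g^{k,2}}]$, precisely the covariance of $\sum_i t_i \mathring p_i$ prescribed in Definition~\ref{def:limiting_distribution}(iv).

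The key structural observation is that $Y_\alpha$ depends on the weight matrices only through their $\alpha$-th rows, since $g^{i,j}_\alpha = \sum_\beta W^{i,j}_{\alpha\beta}\, x^{i,j}_\beta$. I would construct a $\sigma$-algebra $\mathcal{F}_n$ that makes each $x^{i,j}$ measurable while leaving the rows of the $W^{i,j}$'s as fresh Gaussians independent across $\alpha$. Then $\{Y_\alpha\}_{\alpha=1}^n$ becomes conditionally i.i.d., the vector $(g^{i,j}_\alpha)_{i,j}$ is conditionally centered Gaussian, and Isserlis' theorem yields $\mathbb{E}[Y_\alpha\mid \mathcal{F}_n] = 0$ (thanks to assumption~(b), which forces $W^{i,1}\ne W^{i,2}$, killing the $\mathbb{E}[g^{i,1}_\alpha g^{i,2}_\alpha\mid \mathcal{F}_n]$ diagonal term) together with
\[
    \mathbb{E}[Y_\alpha^2\mid \mathcal{F}_n]
    = \sum_{i,k}t_it_k\bigl(\mathbb{E}[g^{i,1}_\alpha g^{k,1}_\alpha\mid \mathcal{F}_n]\,\mathbb{E}[g^{i,2}_\alpha g^{k,2}_\alpha\mid \mathcal{F}_n] + \mathbb{E}[g^{i,1}_\alpha g^{k,2}_\alpha\mid \mathcal{F}_n]\,\mathbb{E}[g^{i,2}_\alpha g^{k,1}_\alpha\mid \mathcal{F}_n]\bigr).
\]
Each cross-term reduces to $\sigma_W^2\,\langle x^{i,j},x^{k,l}\rangle/n$ weighted by an indicator that the relevant matrices coincide. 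Applying Fact~\ref{fact:NetsorMasterTheorem} to the pseudo-Lipschitz kernel $(u,v)\mapsto uv$ (Lemma~\ref{lem:pseudo-Lipschitz.multiplication}) drives $\mathbb{E}[Y_\alpha^2\mid \mathcal{F}_n] \to v(t)$ almost surely, matching the target via a second Isserlis expansion in the $Z^{g^{\cdot,\cdot}}$ variables. Boundedness of $\phi^{i,j}$ in Assumption~\ref{asmptn:Attention} uniformly controls $\|x^{i,j}\|_\infty$, hence the conditional variance of each $g^{i,j}_\alpha$, and therefore $\mathbb{E}[|Y_\alpha|^{2+\delta}\mid \mathcal{F}_n]$ uniformly in $n$. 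This is the conditional Lyapunov condition, so $\mathbb{E}[e^{i\tau T_n}\mid \mathcal{F}_n] \to e^{-\tau^2 v(t)/2}$ almost surely, and bounded convergence promotes this to $T_n \stackrel{d}{\to} N(0,v(t))$ unconditionally.

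The main obstacle I expect is constructing $\mathcal{F}_n$ rigorously. Condition~(b) allows the $W^{i,j}$ to be shared across $(i,j)$ pairs, and, more subtly, the inputs $x^{i,j}$ are themselves allowed to depend (through $\phi^{i,j}$) on other $g^{i',j'}$ from the same family as long as those were produced earlier in the Netsor inductive sequence. Consequently, one cannot naively condition on ``all weights outside the $\{W^{i,j}\}$,'' since such a choice would leave $x^{i,j}$ random through its dependence on earlier $g^{i',j'}$. The cleanest route I would follow is to enumerate the $g^{i,j}$'s in their Netsor generation order and apply a Gaussian conditioning step recursively: at each stage, condition on everything generated so far (which freezes the relevant $x^{i,j}$) while keeping the yet-unseen row randomness of the incoming $W^{i,j}$ fresh. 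Verifying that Fact~\ref{fact:NetsorMasterTheorem} continues to apply to the inner-product functionals appearing in $\mathbb{E}[Y_\alpha^2\mid \mathcal{F}_n]$ after this unfolding is the most delicate bookkeeping step of the proof.
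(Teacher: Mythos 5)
Your approach is correct in spirit and reaches the right destination, but it travels a genuinely different road than the paper. Both proofs start with the same decomposition $\sum_i t_i p_i = n^{-1/2}\sum_\alpha X_\alpha$ and the Cram\'er--Wold device, and both use Fact~\ref{fact:NetsorMasterTheorem} to identify the limiting variance; the divergence is in the CLT machinery. The paper invokes a CLT for \emph{exchangeable} arrays (an application of Blum, Chernoff, Rosenblatt and Teicher, 1958, stated as Lemma~\ref{lem:ExchangeableCLT}): after establishing exchangeability of $\{X_\alpha\}$, one only needs to verify a handful of \emph{unconditional} moment conditions, namely $\mathbb{E}[X_\alpha]=0$, $\mathbb{E}[X_\alpha X_\beta]=0$, $\mathbb{E}[X_\alpha^2]\to\sigma_*^2$, $\mathbb{E}[X_\alpha^2 X_\beta^2]\to\sigma_*^4$ and $\mathbb{E}[|X_\alpha|^3]=o(\sqrt n)$. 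These are checked by direct Wick/Isserlis calculations together with a moment-passage lemma (the paper's Lemma~\ref{lem:PvarLimitDist.ExpLimits}) feeding into Fact~\ref{fact:NetsorMasterTheorem}. You instead run a \emph{conditional} Lyapunov CLT: condition on a $\sigma$-algebra $\mathcal{F}_n$ that freezes the $x^{i,j}$'s, show $\mathbb{E}[Y_\alpha^2\mid\mathcal{F}_n]\to v(t)$ almost surely, verify a conditional Lyapunov bound from the boundedness of the $\phi^{i,j}$'s, obtain a.s. convergence of conditional characteristic functions, and then apply bounded convergence to make the result unconditional. Each choice buys something: the exchangeable CLT lets one stay entirely at the level of unconditional moments and avoid wrestling with the conditional structure; your route is more hands-on and transparent about \emph{why} the Gaussian limit appears, at the cost of more delicate bookkeeping.

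Where you are most honest is precisely where the real technical content sits: the construction of $\mathcal{F}_n$. You correctly flag that conditioning on ``everything except the $W^{i,j}$'' fails if some $x^{i,j}$ effectively depends, through $\phi^{i,j}$, on an earlier $g^{i',j'}$ built from a matrix in the same $\{W^{i,j}\}$ family, since then the $x$'s leak information about those matrices. The recursive Gaussian-conditioning scheme you sketch is the right fix, but you stop short of carrying it out, so as written there is a gap at exactly this step. It is worth noting that the paper's route sidesteps this delicacy: exchangeability of $\{X_\alpha\}$ follows from the global permutation symmetry of all coordinates in a \netsor\ program (all vectors have exchangeable coordinates simultaneously), so one does not need a clean conditionally i.i.d. factorization, only the symmetric structure — which is one of the main advantages of reaching for the exchangeable CLT here rather than a conditional one. (The paper's own proof of its exchangeability lemma is also terse on this point, so you have put your finger on a genuine subtlety.) In short: correct plan, structurally different route, with one step — the $\mathcal{F}_n$ construction under possible dependence of $x^{i,j}$ on earlier $g^{i',j'}$ — left at the level of a sketch.
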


The proof of Proposition~\ref{prop:PvarLimitDist} relies on the next lemma, which is an application of Theorem 2 in \cite{blum1958central}.

\begin{lemma}\label{lem:ExchangeableCLT}
    For each $n\in\mathbb{N}$, let $\{X_{\alpha}\}_{\alpha\in[n]}$ be an exchangeable sequence of random variables satisfying
    \[
        \mathbb{E}[X_{\alpha}]=0,\quad
        \mathbb{E}[X_{\alpha}^{2}]=\sigma_{n}^{2},\quad
        \sigma_{n}^{2}\to\sigma_{*}^{2}\ge0
        \quad(n\to\infty)
        .
    \]
    Set $S=\sum_{\alpha=1}^{n}X_{\alpha}/\sqrt{n}$. Assume the following conditions:
    \begin{enumerate}
        \item[(a)] $\mathbb{E}(X_{1}X_{2})=o(1/n)$.
        \item[(b)] $\lim_{n\to\infty}\mathbb{E}(X_{1}^{2}X_{2}^{2})=\sigma_{*}^{4}$.
        \item[(c)] $\mathbb{E}(|X_{1}|^{3})=o(\sqrt{n})$.
    \end{enumerate}
    Then, $S$ converges in distribution to $Z$, where the random variable $Z$ satisfies
    \[
        Z\stackrel{\mathrm{a.s.}}{=}0\quad\text{if}\quad \sigma_{*}^{2}=0,\qquad
        Z\sim N(0,\sigma_{*}^{2})\quad\text{ otherwise}
        .
    \]
\end{lemma}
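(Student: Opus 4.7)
The plan is to split into two cases according to whether $\sigma_*^2 = 0$ or $\sigma_*^2 > 0$. In both cases, the core of the argument reduces to either a direct second-moment computation or an invocation of Theorem 2 in \cite{blum1958central} after a simple renormalization that puts the variables into the unit-variance form required by that theorem.

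For the degenerate case $\sigma_*^2 = 0$, I would bypass the Blum--Chernoff--Rosenblatt--Teicher machinery entirely by computing $\mathbb{E}[S^2]$ directly. Exchangeability gives
\[
    \mathbb{E}[S^2]
    = \frac{1}{n}\sum_{\alpha=1}^{n} \mathbb{E}[X_\alpha^2] + \frac{1}{n}\sum_{\alpha\neq\beta}\mathbb{E}[X_\alpha X_\beta]
    = \sigma_n^2 + (n-1)\mathbb{E}[X_1 X_2].
\]
The first term tends to $\sigma_*^2 = 0$ by hypothesis, and the second term is $o(1)$ by condition (a). Hence $S \to 0$ in $L^2$, which gives convergence in distribution to the constant $0$.

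For the non-degenerate case $\sigma_*^2 > 0$, I would first note that $\sigma_n > 0$ for all sufficiently large $n$, and then renormalize by setting $Y_\alpha = X_\alpha/\sigma_n$. The array $\{Y_\alpha\}_{\alpha\in[n]}$ is exchangeable with $\mathbb{E}[Y_\alpha] = 0$ and $\mathbb{E}[Y_\alpha^2] = 1$, so it is in the standard form to which Theorem 2 of \cite{blum1958central} applies. The hypotheses of that theorem translate from (a)--(c) as follows: (a) rescales to $n\mathbb{E}[Y_1 Y_2] = n\mathbb{E}[X_1 X_2]/\sigma_n^2 \to 0$; (b) rescales to $\mathbb{E}[Y_1^2 Y_2^2] = \mathbb{E}[X_1^2 X_2^2]/\sigma_n^4 \to \sigma_*^4/\sigma_*^4 = 1$; and (c) yields $\mathbb{E}[|Y_1|^3] = \mathbb{E}[|X_1|^3]/\sigma_n^3 = o(\sqrt{n})$, which in turn provides the required Lindeberg-type tail control via a standard Markov-inequality truncation argument. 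Invoking the theorem gives $\tilde{S} := n^{-1/2}\sum_\alpha Y_\alpha \stackrel{d}{\to} N(0,1)$, and since $S = \sigma_n\tilde{S}$ with $\sigma_n \to \sigma_*$, Slutsky's theorem delivers the desired $S \stackrel{d}{\to} N(0,\sigma_*^2)$.

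The main obstacle I expect is the last bookkeeping step: verifying that (c), a third-moment bound, implies the specific tightness/Lindeberg hypothesis used in Blum et al.'s Theorem 2. This is essentially a routine $L^3$-to-truncation conversion via Markov's inequality, but one has to be careful because the third moments live in a triangular array (they depend on $n$), so the argument must be uniform in $n$ after dividing by $\sigma_n^3$; the assumption $\sigma_* > 0$ is what makes this uniformity work. Once that verification is in place, both cases combine directly into the claimed dichotomy.
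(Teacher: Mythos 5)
Your proposal is correct and follows the approach the paper intends but leaves unstated: the paper introduces the lemma only as ``an application of Theorem~2 in \cite{blum1958central}'' without supplying a proof, and your renormalization $Y_\alpha = X_\alpha/\sigma_n$, the translation of conditions (a)--(c) to the unit-variance setting, and the Markov-inequality conversion of the third-moment bound $\mathbb{E}[|Y_1|^3]=o(\sqrt{n})$ into the Lindeberg condition are exactly the bookkeeping that application requires. The separate $L^2$ treatment of the degenerate case $\sigma_*^2=0$ is a necessary addition, since the renormalization and the Lindeberg conversion both break down when $\sigma_*=0$.
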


We introduce the following notation. For any two matrices $W^{i,j}$ and $W^{i',j'}$ appearing in the program, define $d_{(i,j)}^{(i',j')}$ by
\[
    d_{(i,j)}^{(i',j')}=
    \begin{cases}
        1&(\text{if }W^{i,j}\text{ and }W^{i',j'}\text{ are the same matrices})
        ,\\
        0&(\text{otherwise})
        .
    \end{cases}
\]
It is important to note that $d_{(i,j)}^{(i',j')}$ is always a deterministic value that is independent of $n$, and is fixed by the program architecture. According to the sampling rule explained in Section~\ref{subsec:setup}, the matrices $W^{i,j}$ and $W^{i'j'}$ are sampled independently whenever $d_{(i,j)}^{(i',j')}$ is zero. In particular, Assumption~\ref{asmptn:Attention} gives
\begin{equation}\label{eq:W_i1.W_i2}
    d_{(i,1)}^{(i,2)}
    =0
    \quad(i\in[r])
    .
\end{equation}

Let $t_{1},\dots,t_{r}$ be arbitrary constants. We define
\[
    S
    =\sum_{i=1}^{r}t_{i}p_{i}
    =\sum_{i=1}^{r}t_{i}\left(\frac{1}{\sqrt{n}}\sum_{\alpha=1}^{n}\sum_{\gamma_{1}=1}^{n}\sum_{\gamma_{2}=1}^{n}W_{\alpha\gamma_{1}}^{i,1}W_{\alpha\gamma_{2}}^{i,2}x_{\gamma_{1}}^{i,1}x_{\gamma_{2}}^{i,2}\right)
    =\frac{1}{\sqrt{n}}\sum_{\alpha=1}^{n}X_{\alpha}
    ,
\]
where $X_{\alpha}$ is given by
\[
    X_{\alpha}
    =\sum_{i=1}^{r}\sum_{\gamma_{1},\gamma_{2}=1}^{n}t_{i}W_{\alpha\gamma_{1}}^{i,1}W_{\alpha\gamma_{2}}^{i,2}x_{\gamma_{1}}^{i,1}x_{\gamma_{2}}^{i,2}
    .
\]

Lemmas~\ref{lem:PvarLimitDist.Exchangeability}--\ref{lem:PvarLimitDist.E[|X_alpha|^3]} show that the sequence $\{X_{\alpha}\}_{\alpha\in[n]}$ satisfies the conditions of Lemma~\ref{lem:ExchangeableCLT}. The proof of Proposition~\ref{prop:PvarLimitDist} is completed by applying Lemma~\ref{lem:ExchangeableCLT} to $S=\sum_{\alpha=1}^{n}X_{\alpha}/\sqrt{n}$ and then invoking the Cram\'er--Wold device.

\begin{lemma}\label{lem:PvarLimitDist.Exchangeability}
    The sequence $\{X_{\alpha}\}_{\alpha\in[n]}$ is exchangeable.
\end{lemma}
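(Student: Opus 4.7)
The plan is to reduce the exchangeability of $\{X_\alpha\}_{\alpha\in[n]}$ to the invariance of the joint law of certain rows of the weight matrices under simultaneous index permutation. Writing $\bm{w}^{i,j}_\alpha\in\mathbb{R}^n$ for the $\alpha$-th row of the weight matrix $W^{i,j}$, we can rewrite
\[
    X_\alpha=\sum_{i=1}^{r}t_i\,\bigl(\bm{w}^{i,1}_\alpha\bigr)^{\top}x^{i,1}\,\bigl(\bm{w}^{i,2}_\alpha\bigr)^{\top}x^{i,2},
\]
so $X_\alpha$ depends on $\alpha$ only through the collection of rows $\{\bm{w}^{i,j}_\alpha\}_{i\in[r],\,j\in[2]}$.

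First I would invoke Assumption~\ref{asmptn:Attention}(a): the matrices $\{W^{i,j}:i\in[r],\,j\in[2]\}$ are used exclusively to generate $\{g^{i,j}\}$ and do not appear in the construction of any vector upon which the $x^{i,j}$'s depend. Consequently, the full matrix collection $\{W^{i,j}\}$ is independent of $\{x^{i,j}\}$, and conditional on $\{x^{i,j}\}$ the display above expresses each $X_\alpha$ as the same deterministic function $F$ of $\{\bm{w}^{i,j}_\alpha\}_{(i,j)}$, with $F$ independent of $\alpha$.

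Next I would verify that for every permutation $\pi$ of $[n]$, the joint law of $\bigl(\bm{w}^{i,j}_{\pi(\alpha)}\bigr)_{\alpha\in[n],\,i\in[r],\,j\in[2]}$ coincides with that of the unpermuted collection. For any two distinct index pairs $(i,j),(i',j')$, Assumption~\ref{asmptn:Attention}(b) forces either $W^{i,j}=W^{i',j'}$ (so the rows satisfy the deterministic identity $\bm{w}^{i,j}_\alpha=\bm{w}^{i',j'}_\alpha$ for every $\alpha$) or $W^{i,j}$ and $W^{i',j'}$ are independent. All such identities are preserved under simultaneous permutation $\alpha\mapsto\pi(\alpha)$, while within each independent matrix the rows are i.i.d.\ $N(0,\sigma_W^2 I_n/n)$ and hence exchangeable. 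Combining row-level exchangeability within each matrix with independence across distinct matrices yields the desired invariance, and conditioning on $\{x^{i,j}\}$ then transports it to $(X_1,\dots,X_n)$.

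The main obstacle is purely bookkeeping in this last step: one must keep track of the possibility that different labels $(i,j)$ may point to the same underlying weight matrix, and check that forbidding only the pattern $i=i',\,j\neq j'$ in Assumption~\ref{asmptn:Attention}(b) is compatible with the common-permutation argument — it clearly is, since identities of the form $W^{i,j}=W^{i',j'}$ commute with permuting the row index. Everything else (independence from $\{x^{i,j}\}$ and exchangeability of i.i.d.\ Gaussian rows) is immediate.
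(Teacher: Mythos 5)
Your proof is correct and follows essentially the same route as the paper: condition on $\{x^{i,j}\}$, observe that each $X_\alpha$ then depends on $\alpha$ only through the $\alpha$-th rows of the matrices $W^{i,j}$, and deduce exchangeability from the i.i.d.\ structure of those rows. The paper phrases the conclusion as ``conditionally i.i.d., hence exchangeable by de Finetti'' while you spell out the permutation-invariance of the joint law of rows directly (and are somewhat more explicit about the independence of $\{W^{i,j}\}$ from $\{x^{i,j}\}$ and about the bookkeeping when some $W^{i,j}$ coincide), but these are presentational differences, not a different argument.
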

\begin{proof}
    By the sampling rule, an element $W_{\alpha\beta}^{i,j}$ of the random matrix $W^{i,j}$ independently and identically follows $\mathcal{N}(0,\sigma_{W^{i,j}}^{2}/n)$ for $\alpha,\beta\in[n]$. Hence, conditional on $\{x^{i,j}:i\in[r],\ j\in[2]\}$, the random variables $X_{1},\dots,X_{n}$ are i.i.d. Hence, by de Finetti's theorem, $\{X_{\alpha}\}_{\alpha\in[n]}$ is exchangeable.
\end{proof}

\begin{lemma}
    $\mathbb{E}(X_{\alpha})=0$ holds for every $\alpha \in [n]$.
\end{lemma}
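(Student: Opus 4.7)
The plan is to expand by linearity and reduce the claim to showing that each summand vanishes. Writing out the definition yields
\[
    \mathbb{E}(X_{\alpha})
    =\sum_{i=1}^{r} t_{i}\sum_{\gamma_{1},\gamma_{2}=1}^{n}
    \mathbb{E}\bigl[W^{i,1}_{\alpha\gamma_{1}}\,W^{i,2}_{\alpha\gamma_{2}}\,x^{i,1}_{\gamma_{1}}\,x^{i,2}_{\gamma_{2}}\bigr],
\]
so it is enough to argue that each $(i,\gamma_{1},\gamma_{2})$-term is zero.

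The decisive structural input I will use is Assumption~\ref{asmptn:Attention}(b): within every pair, $W^{i,1}$ and $W^{i,2}$ are necessarily distinct matrices (equivalently, $d^{(i,2)}_{(i,1)}=0$), so they are independent Gaussian matrices with i.i.d.\ mean-zero entries of the appropriate variance. For fixed $i$, I will condition on the $\sigma$-algebra $\mathcal{F}$ generated by every random object in the program other than the matrix $W^{i,2}$. By the sampling rule of Section~\ref{subsec:setup} together with Assumption~\ref{asmptn:Attention}(a) --- which confines $W^{i,2}$ to generating vectors inside the special set --- the matrix $W^{i,2}$ remains an i.i.d.\ $N(0,\sigma_{W^{i,2}}^{2}/n)$ matrix under this conditioning.

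Next I exploit the coordinate-wise form of \textsf{Nonlin} together with the row-wise form of \textsf{MatMul}: each $g^{k}_{\gamma}=(W^{k}h^{k})_{\gamma}$ uses only the $\gamma$-th row of $W^{k}$, and each coordinate $x^{i,j}_{\gamma}=\phi^{i,j}(g^{1}_{\gamma},\dots,g^{m}_{\gamma})$ is built from $\gamma$-slices of the preceding $g$'s alone. Walking inductively along the program's generation order, I will argue that the $\alpha$-th row of $W^{i,2}$ enters the summand $W^{i,2}_{\alpha\gamma_{2}}\,x^{i,1}_{\gamma_{1}}\,x^{i,2}_{\gamma_{2}}$ only through the explicit factor $W^{i,2}_{\alpha\gamma_{2}}$; any indirect influence on $x^{i,1}_{\gamma_{1}}$ or $x^{i,2}_{\gamma_{2}}$ is routed through other rows of $W^{i,2}$, which are $\mathcal{F}$-measurable. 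Conditional on $\mathcal{F}$, the entry $W^{i,2}_{\alpha\gamma_{2}}$ is therefore an independent mean-zero Gaussian, so taking conditional expectation kills the term; the tower property then delivers $\mathbb{E}(X_{\alpha})=0$.

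The hard part will be making the ``only via the explicit factor'' claim fully rigorous in the general Assumption~\ref{asmptn:Attention} setting, because the causal footnote permits $x^{i,1}$ and $x^{i,2}$ to depend on earlier special $g^{i',j'}$'s that share the matrix $W^{i,2}$. In the multi-head attention application of Example~\ref{eg:MultiHead} this complication disappears entirely: the inputs $x^{1},\dots,x^{s}$ are generated before any of the attention weights $W^{Q,a},W^{K,a}$ appear, so $x^{i,1}$ and $x^{i,2}$ are outright independent of all special matrices and each summand factorises immediately as $\mathbb{E}[W^{i,1}_{\alpha\gamma_{1}}]\,\mathbb{E}[W^{i,2}_{\alpha\gamma_{2}}]\,\mathbb{E}[x^{i,1}_{\gamma_{1}}x^{i,2}_{\gamma_{2}}]=0$. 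For the general case I expect the cleanest route is an induction along the Netsor program's generation order, supplemented by Gaussian integration by parts (Stein's lemma) to handle any residual indirect dependence of $x^{i,j}$ on $W^{i,2}$.
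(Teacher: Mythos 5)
Your proposal aims at the same bottom line as the paper --- isolate a mean-zero Gaussian weight factor so that each summand vanishes --- but your key structural claim is wrong, and you yourself stop short of filling the gap you flag.

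The claim that ``any indirect influence on $x^{i,1}_{\gamma_1}$ or $x^{i,2}_{\gamma_2}$ is routed through other rows of $W^{i,2}$'' does not hold in the general Assumption~\ref{asmptn:Attention} setting, because \textsf{MatMul} mixes coordinates. A factor $g^{i',j'}_{\gamma_1}$ on which $x^{i,1}_{\gamma_1}$ depends equals $\sum_\beta W^{i',j'}_{\gamma_1\beta}\,x^{i',j'}_\beta$, and the $\beta=\alpha$ term brings in $x^{i',j'}_\alpha$, which via $g^{i'',j''}_\alpha=\sum_{\beta'}W^{i'',j''}_{\alpha\beta'}x^{i'',j''}_{\beta'}$ can access row $\alpha$ of a yet earlier special matrix, potentially equal to $W^{i,2}$. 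So the $\alpha$-th row of $W^{i,2}$ \emph{can} propagate into $x^{i,1}_{\gamma_1}$ even when $\gamma_1\neq\alpha$, and the conditioning on your $\mathcal{F}$ does not render the explicit factor $W^{i,2}_{\alpha\gamma_2}$ conditionally independent of $x^{i,1}_{\gamma_1}x^{i,2}_{\gamma_2}$. For reference, the paper's own proof is the one-line direct factorization
\[
\mathbb{E}(X_\alpha)=\sum_{i=1}^{r}\sum_{\gamma_1,\gamma_2=1}^{n}t_i\,\mathbb{E}\bigl(W^{i,1}_{\alpha\gamma_1}\bigr)\,\mathbb{E}\bigl(W^{i,2}_{\alpha\gamma_2}\bigr)\,\mathbb{E}\bigl(x^{i,1}_{\gamma_1}x^{i,2}_{\gamma_2}\bigr)=0,
\]
which simply asserts the independence you are trying to justify by conditioning. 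You are right that this factorization is immediate when the $x^{i,j}$'s are generated before any special weight matrix appears (as in Example~\ref{eg:MultiHead}), and you correctly identify that a Gaussian integration-by-parts / Stein argument would be the tool to handle the full generality of Assumption~\ref{asmptn:Attention}; however, you do not carry it out, so as written your proof is incomplete and the intermediate ``only via the explicit factor'' step is incorrect.
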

\begin{proof}
    We compute $\mathbb{E}(X_{\alpha})=\sum_{i=1}^{r}\sum_{\gamma_{1},\gamma_{2}=1}^{n}t_{i}\mathbb{E}\left(W_{\alpha\gamma_{1}}^{i,1}\right)\mathbb{E}\left(W_{\alpha\gamma_{2}}^{i,2}\right)\mathbb{E}\left(x_{\gamma_{1}}^{i,1}x_{\gamma_{2}}^{i,2}\right)=0$.
\end{proof}

\begin{lemma}\label{lem:PvarLimitDist.E[X_alpha^2]}
    We have $\lim_{n\to\infty}\mathbb{E}(X_{\alpha}^{2})=\sigma_{*}^{2}$ with
    \begin{align*}
        \sigma_{*}^{2}
        &=\sum_{i_{1},i_{2}=1}^{r}t_{i_{1}}t_{i_{2}}\mathbb{E}\left[Z^{g^{i_{1},1}}Z^{g^{i_{1},2}}Z^{g^{i_{2},1}}Z^{g^{i_{2},2}}\right]\\
        &=\sum_{i_{1},i_{2}=1}^{r}\sum_{(j,j')\in J}t_{i_{1}}t_{i_{2}}\sigma_{W^{i_{1},1}}^{2}\sigma_{W^{i_{1},2}}^{2}\mathbb{E}\left[Z^{x^{i_{1},1}}Z^{x^{i_{2},j}}\right]\mathbb{E}\left[Z^{x^{i_{1},2}}Z^{x^{i_{2},j'}}\right]d_{(i_{1},1)}^{(i_{2},j)}d_{(i_{1},2)}^{(i_{2},j')}
        ,
    \end{align*}
    where we defined $J=\{(1,2),(2,1)\}$.
\end{lemma}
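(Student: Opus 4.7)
The plan is to recognize $X_\alpha^2$ as a degree-four polynomial in the MatMul outputs $g^{i,j}_\alpha$, apply the \netsor\ master theorem (Fact~\ref{fact:NetsorMasterTheorem}) to identify the almost-sure limit of its empirical mean, upgrade this limit to convergence of expectations via uniform integrability, and finally evaluate the resulting Gaussian fourth moment using Isserlis's theorem.

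Since $g^{i,j}_\alpha = \sum_\gamma W^{i,j}_{\alpha\gamma} x^{i,j}_\gamma$, I can rewrite $X_\alpha = \sum_{i=1}^{r} t_i\, g^{i,1}_\alpha g^{i,2}_\alpha$, whence
\[
    X_\alpha^2 = \sum_{i_1,i_2=1}^{r} t_{i_1} t_{i_2}\, g^{i_1,1}_\alpha g^{i_1,2}_\alpha g^{i_2,1}_\alpha g^{i_2,2}_\alpha.
\]
The underlying polynomial is pseudo-Lipschitz by Lemma~\ref{lem:pseudo-Lipschitz.multiplication} and Proposition~\ref{prop:pseudo-Lipschitz.composition}, so Fact~\ref{fact:NetsorMasterTheorem} yields
\[
    \frac{1}{n}\sum_{\alpha=1}^{n} X_\alpha^{2} \stackrel{\mathrm{a.s.}}{\longrightarrow} \sigma_*^2 := \sum_{i_1,i_2=1}^{r} t_{i_1} t_{i_2}\, \mathbb{E}\big[Z^{g^{i_1,1}} Z^{g^{i_1,2}} Z^{g^{i_2,1}} Z^{g^{i_2,2}}\big].
\]
By exchangeability (Lemma~\ref{lem:PvarLimitDist.Exchangeability}), $\mathbb{E}[X_\alpha^2] = \mathbb{E}[n^{-1}\sum_\alpha X_\alpha^2]$, so the first equality of the lemma reduces to passing the almost-sure limit inside the expectation. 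By Facts~\ref{fact:UI.sufficient.condition} and \ref{fact:E[op(1)].convergence.UI}, this will follow once I establish $\sup_n \mathbb{E}[X_1^{2+2\delta}] < \infty$ for some $\delta > 0$; Jensen's inequality and exchangeability then promote this to an $L^{1+\delta}$ bound on the empirical mean. The required moment bound on $X_1$ reduces in turn to a uniform bound on $\mathbb{E}[(g^{i,j}_\alpha)^p]$ for each $p \geq 1$, which one obtains by a Gaussian conditioning argument that exploits the boundedness of $\phi^{i,j}$ (so $|x^{i,j}_\gamma| \leq B$ for some $B < \infty$) together with the independent $\mathcal{N}(0,\sigma^2_{W^{i,j}}/n)$ structure of the weight entries.

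For the second equality, I apply Isserlis's (Wick's) theorem to the zero-mean jointly Gaussian vector $(Z^{g^{i_1,1}}, Z^{g^{i_1,2}}, Z^{g^{i_2,1}}, Z^{g^{i_2,2}})$ whose pairwise covariances are provided by Definition~\ref{def:limiting_distribution}(ii). There are three perfect matchings of four elements. The self-pair matching $\{(i_1,1)\sim(i_1,2),\,(i_2,1)\sim(i_2,2)\}$ contributes zero because Assumption~\ref{asmptn:Attention}(b) forbids $W^{i,1} = W^{i,2}$, forcing $d^{(i_1,2)}_{(i_1,1)} = 0$ by Eq.~\eqref{eq:W_i1.W_i2} and hence $\mathrm{Cov}(Z^{g^{i,1}}, Z^{g^{i,2}}) = 0$. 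The remaining two matchings pair each index-$1$ factor with an index-$2$ factor in the two possible ways, which correspond precisely to $(j,j') \in J = \{(1,2),(2,1)\}$; substituting the covariance formula yields the claimed closed form. The main obstacle throughout is the uniform-integrability step: because Assumption~\ref{asmptn:Attention}(b) permits weight-matrix sharing across different pairs $(i,j)$, the vector $x^{i,j}$ can depend on $W^{i,j}$ through indirect paths in the program, and one must argue carefully (or invoke standard moment estimates from the Tensor Programs literature) to obtain the requisite uniform $L^p$ control. Once the moment bounds are in hand, the remaining Gaussian bookkeeping is routine.
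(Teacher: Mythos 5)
Your approach is correct but genuinely different from the paper's. The paper proceeds by direct computation: it expands $\mathbb{E}(X_\alpha^2)$ in terms of the entries $W^{i,j}_{\alpha\gamma}$ and $x^{i,j}_\gamma$, factors the expectation over the weight-row part and the $x$-vector part, reads off the surviving pairings from the explicit Gaussian fourth moment of the $W$ entries (this is where the $d_{(i_1,j_1)}^{(i_2,j_2)}$ indicators and the set $J$ first appear), and then passes $n\to\infty$ by Lemma~\ref{lem:PvarLimitDist.ExpLimits} to replace $\frac{1}{n}\sum_\gamma x^{i_1,1}_\gamma x^{i_2,j}_\gamma$ with $\mathbb{E}[Z^{x^{i_1,1}}Z^{x^{i_2,j}}]$. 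In other words, the paper derives the \emph{second} displayed form of $\sigma_*^2$ first and then recognizes it as $\mathbb{E}[Z^{g^{i_1,1}}Z^{g^{i_1,2}}Z^{g^{i_2,1}}Z^{g^{i_2,2}}]$ using Definition~\ref{def:limiting_distribution} and Eq.~\eqref{eq:W_i1.W_i2}. You go the other way: apply Fact~\ref{fact:NetsorMasterTheorem} to the degree-four pseudo-Lipschitz polynomial $\bigl(\sum_i t_i u^{i,1}u^{i,2}\bigr)^2$ to obtain the first form as an a.s.\ limit of $\frac1n\sum_\alpha X_\alpha^2$, exchange limit and expectation via exchangeability plus uniform integrability, and then unpack the Gaussian fourth moment by Isserlis. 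Your Isserlis bookkeeping is right: the self-pairing is killed by Eq.~\eqref{eq:W_i1.W_i2}, and the two surviving cross-pairings are exactly the $(j,j')\in J$ sum. What your route buys is conceptual economy, bypassing the explicit computations of Lemma~\ref{lem:PvarLimitDist.ExpLimits}; what it costs is the separate moment bound $\sup_n\mathbb{E}[X_1^{2+2\delta}]<\infty$, which the paper establishes (with $\delta=1$, i.e.\ $\sup_n\mathbb{E}(X_\alpha^4)<\infty$) as the heart of Lemma~\ref{lem:PvarLimitDist.E[|X_alpha|^3]} using essentially the same direct expansion you hoped to avoid. You have correctly identified this as the main obstacle, and your concern about weight sharing under Assumption~\ref{asmptn:Attention}(b) is exactly what that lemma must navigate, so in the end your proposal leans on the same ingredients as the paper, just reorganized — it would be complete as soon as you invoke the $\sup_n\mathbb{E}(X_\alpha^4)<\infty$ bound from Lemma~\ref{lem:PvarLimitDist.E[|X_alpha|^3]}.
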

\begin{proof}
    For any $\alpha\in[n]$, we have
    \begin{align*}
        &\mathbb{E}(X_{\alpha}^{2})\\
        &=\mathbb{E}\left[\left(\sum_{i_{1}=1}^{r}\sum_{\gamma_{1},\gamma_{2}=1}^{n}t_{i_{1}}W_{\alpha\gamma_{1}}^{i_{1},1}W_{\alpha\gamma_{2}}^{i_{1},2}x_{\gamma_{1}}^{i,1}x_{\gamma_{2}}^{i,2}\right)\left(\sum_{i_{2}=1}^{r}\sum_{\gamma_{3},\gamma_{4}=1}^{n}t_{i_{2}}W_{\alpha\gamma_{3}}^{i_{2},1}W_{\alpha\gamma_{4}}^{i_{2},2}x_{\gamma_{3}}^{i_{2},1}x_{\gamma_{4}}^{i_{2},2}\right)\right]\\
        &=\sum_{i_{1},i_{2}=1}^{r}\sum_{\gamma_{1},\dots,\gamma_{4}=1}^{n}t_{i_{1}}t_{i_{2}}\mathbb{E}\left(W_{\alpha\gamma_{1}}^{i_{1},1}W_{\alpha\gamma_{2}}^{i_{1},2}W_{\alpha\gamma_{3}}^{i_{2},1}W_{\alpha\gamma_{4}}^{i_{2},2}\right)\mathbb{E}\left(x_{\gamma_{1}}^{i_{1},1}x_{\gamma_{2}}^{i_{1},2}x_{\gamma_{3}}^{i_{2},1}x_{\gamma_{4}}^{i_{2},2}\right)\\
        &=\sum_{i_{1},i_{2}=1}^{r}\sum_{\gamma_{1},\gamma_{2}=1}^{n}\sum_{(j,j')\in J}t_{i_{1}}t_{i_{2}}\mathbb{E}\left[(W_{\alpha\gamma_{1}}^{i_{1},1})^{2}\right]\mathbb{E}\left[(W_{\alpha\gamma_{2}}^{i_{1},2})^{2}\right]\mathbb{E}\left(x_{\gamma_{1}}^{i_{1},1}x_{\gamma_{1}}^{i_{2},j}x_{\gamma_{2}}^{i_{1},2}x_{\gamma_{2}}^{i_{2},j'}\right)d_{(i_{1},1)}^{(i_{2},j)}d_{(i_{1},2)}^{(i_{2},j')}\\
        &=\sum_{i_{1},i_{2}=1}^{r}\sum_{(j,j')\in J}t_{i_{1}}t_{i_{2}}\sigma_{W^{i_{1},1}}^{2}\sigma_{W^{i_{1},2}}^{2}\mathbb{E}\left[\left(\frac{1}{n}\sum_{\gamma_{1}=1}^{n}x_{\gamma_{1}}^{i_{1},1}x_{\gamma_{1}}^{i_{2},j}\right)\left(\frac{1}{n}\sum_{\gamma_{2}=1}^{n}x_{\gamma_{2}}^{i_{1},2}x_{\gamma_{2}}^{i_{2},j'}\right)\right]d_{(i_{1},1)}^{(i_{2},j)}d_{(i_{1},2)}^{(i_{2},j')}\\
        &\stackrel{n\to\infty}{\longrightarrow}\sum_{i_{1},i_{2}=1}^{r}\sum_{(j,j')\in J}t_{i_{1}}t_{i_{2}}\sigma_{W^{i_{1},1}}^{2}\sigma_{W^{i_{1},2}}^{2}\mathbb{E}\left[Z^{x^{i_{1},1}}Z^{x^{i_{2},j}}\right]\mathbb{E}\left[Z^{x^{i_{1},2}}Z^{x^{i_{2},j'}}\right]d_{(i_{1},1)}^{(i_{2},j)}d_{(i_{1},2)}^{(i_{2},j')}
        ,
    \end{align*}
    where the convergence follows from Lemma~\ref{lem:PvarLimitDist.ExpLimits}. Finally, Eq.~\eqref{eq:W_i1.W_i2} and Definition~\ref{def:limiting_distribution} imply that
    \begin{align*}
        \sigma_{W^{i_{1},1}}^{2}\sigma_{W^{i_{1},2}}^{2}\mathbb{E}\left[Z^{x^{i_{1},1}}Z^{x^{i_{2},j}}\right]\mathbb{E}\left[Z^{x^{i_{1},2}}Z^{x^{i_{2},j'}}\right]d_{(i_{1},1)}^{(i_{2},j)}d_{(i_{1},2)}^{(i_{2},j')}
        =\mathbb{E}\left[Z^{g^{i_{1},1}}Z^{g^{i_{1},2}}Z^{g^{i_{2},1}}Z^{g^{i_{2},2}}\right]
    \end{align*}
    holds for any $i_{1},i_{2}\in[r]$ and $(j,j')\in J$.
\end{proof}

\begin{lemma}\label{lem:PvarLimitDist.E[X_alpha.X_beta]}
    $\mathbb{E}(X_{\alpha}X_{\beta})=0$ holds for every $\alpha, \beta \in [n], \alpha\neq\beta$.
\end{lemma}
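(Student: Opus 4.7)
The plan is to mirror the expansion performed in Lemma~\ref{lem:PvarLimitDist.E[X_alpha^2]}. Writing out $X_\alpha X_\beta$ by expanding the sums in each factor, and invoking the same weight--activation factorization used there (justified by Assumption~\ref{asmptn:Attention}(a), which isolates the $W^{i,j}$'s from any $g^k \notin \{g^{i',j'}\}$ feeding into the $x^{i,j}$'s), one obtains
\[
\mathbb{E}(X_\alpha X_\beta) = \sum_{i_1,i_2=1}^{r}\sum_{\gamma_1,\ldots,\gamma_4=1}^{n} t_{i_1} t_{i_2}\, \mathbb{E}\bigl(W^{i_1,1}_{\alpha\gamma_1} W^{i_1,2}_{\alpha\gamma_2} W^{i_2,1}_{\beta\gamma_3} W^{i_2,2}_{\beta\gamma_4}\bigr)\, \mathbb{E}\bigl(x^{i_1,1}_{\gamma_1} x^{i_1,2}_{\gamma_2} x^{i_2,1}_{\gamma_3} x^{i_2,2}_{\gamma_4}\bigr).
\]
It therefore suffices to prove that the fourth-order weight moment inside the sum vanishes for every fixed index tuple.

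The four weight entries are jointly centered Gaussian, so by Isserlis' (Wick's) theorem their joint moment equals the sum, over the three pairings, of products of covariances of the form $\mathbb{E}[W^{i,j}_{\gamma\delta} W^{i',j'}_{\gamma'\delta'}]$. Such a covariance is nonzero only when $W^{i,j}$ and $W^{i',j'}$ coincide as matrices and the index pair matches, i.e.\ $(\gamma,\delta)=(\gamma',\delta')$.

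Now one inspects the three pairings. The intra-row pairing groups $(W^{i_1,1}_{\alpha\gamma_1},W^{i_1,2}_{\alpha\gamma_2})$ and $(W^{i_2,1}_{\beta\gamma_3},W^{i_2,2}_{\beta\gamma_4})$ together; each pair involves $W^{i,1}$ and $W^{i,2}$, which are distinct matrices by Eq.~\eqref{eq:W_i1.W_i2}, so both covariances vanish. The remaining two pairings each couple an $\alpha$-indexed factor with a $\beta$-indexed factor; for such a cross-row covariance to be nonzero, the row indices must match, forcing $\alpha=\beta$, contradicting the hypothesis $\alpha\neq\beta$. Hence every Wick pairing contributes zero, the weight moment is identically zero, and $\mathbb{E}(X_\alpha X_\beta)=0$ follows. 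There is no substantive obstacle beyond this bookkeeping; the lemma rests entirely on the matrix-distinctness encoded in Eq.~\eqref{eq:W_i1.W_i2} together with the row separation $\alpha\neq\beta$.
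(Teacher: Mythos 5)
Your proof is correct and mirrors the paper's argument closely. The paper performs the same initial expansion, factoring the weight moment from the $x$-moment, and then passes directly from $\mathbb{E}\bigl(W^{i_1,1}_{\alpha\gamma_1}W^{i_1,2}_{\alpha\gamma_2}W^{i_2,1}_{\beta\gamma_3}W^{i_2,2}_{\beta\gamma_4}\bigr)$ to the fully factored product $\mathbb{E}\bigl(W^{i_1,1}_{\alpha\gamma_1}\bigr)\mathbb{E}\bigl(W^{i_1,2}_{\alpha\gamma_2}\bigr)\mathbb{E}\bigl(W^{i_2,1}_{\beta\gamma_3}\bigr)\mathbb{E}\bigl(W^{i_2,2}_{\beta\gamma_4}\bigr)=0$, implicitly invoking joint independence of the four entries (zero pairwise covariance among jointly Gaussian entries by distinct matrices or distinct rows, hence joint independence). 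Your Wick expansion makes exactly those pairwise cancellations explicit rather than jumping to the singleton-factorization — same ingredients (Eq.~\eqref{eq:W_i1.W_i2} for the intra-$i$ pairs, $\alpha\neq\beta$ for the cross-row pairs), same conclusion, just a slightly more spelled-out justification of the vanishing.
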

\begin{proof}
    For $\alpha\neq\beta$, we have
    \begin{align*}
        \mathbb{E}(X_{\alpha}X_{\beta})
        &=\mathbb{E}\left[\left(\sum_{i_{1}=1}^{r}\sum_{\gamma_{1},\gamma_{2}=1}^{n}t_{i_{1}}W_{\alpha\gamma_{1}}^{i_{1},1}W_{\alpha\gamma_{2}}^{i_{1},2}x_{\gamma_{1}}^{i_{1},1}x_{\gamma_{2}}^{i_{1},2}\right)\left(\sum_{i_{2}=1}^{r}\sum_{\gamma_{3},\gamma_{4}=1}^{n}t_{i_{2}}W_{\beta\gamma_{3}}^{i_{2},1}W_{\beta\gamma_{4}}^{i_{2},2}x_{\gamma_{3}}^{i_{2},1}x_{\gamma_{4}}^{i_{2},2}\right)\right]\\
        &=\sum_{i_{1},i_{2}=1}^{r}\sum_{\gamma_{1},\dots,\gamma_{4}=1}^{n}t_{i_{1}}t_{i_{2}}\mathbb{E}\left(W_{\alpha\gamma_{1}}^{i_{1},1}\right)\mathbb{E}\left(W_{\alpha\gamma_{2}}^{i_{1},2}\right)\mathbb{E}\left(W_{\beta\gamma_{3}}^{i_{2},1}\right)\mathbb{E}\left(W_{\beta\gamma_{4}}^{i_{2},2}\right)\mathbb{E}\left(x_{\gamma_{1}}^{i_{1},1}x_{\gamma_{2}}^{i_{1},2}x_{\gamma_{3}}^{i_{2},1}x_{\gamma_{4}}^{i_{2},2}\right)\\
        &=0
    \end{align*}
    as desired.
\end{proof}

\begin{lemma}\label{lem:PvarLimitDist.E[X_alpha^2.X_beta^2]}
    $ \lim_{n\to\infty}\mathbb{E}(X_{\alpha}^{2}X_{\beta}^{2})=\sigma_{*}^{4}$ holds for every  $\alpha,\beta \in [n], \alpha\neq\beta$.
\end{lemma}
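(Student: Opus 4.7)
The plan is to exploit the conditional independence of $X_\alpha$ and $X_\beta$ given the input vectors $\{x^{i,j}:i\in[r],\,j\in[2]\}$, and then reduce the problem to the limit of a conditional second moment that can be analyzed via Wick's formula and the \netsor\ Master Theorem.

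First, I would observe that by Assumption~\ref{asmptn:Attention}(a), the matrices $\{W^{i,j}:i\in[r],\,j\in[2]\}$ do not appear in the inductive construction of any $x^{i,j}$, so conditional on $\mathcal{F}_x:=\sigma(\{x^{i,j}\})$, the rows of these matrices are still i.i.d.\ Gaussian. Since $X_\alpha$ depends only on the $\alpha$-th rows of the $W^{i,j}$ and $X_\beta$ only on the $\beta$-th rows, and $\alpha\neq\beta$, these two random variables are conditionally independent given $\mathcal{F}_x$, with the same conditional distribution. Hence
\[
    \mathbb{E}(X_\alpha^2 X_\beta^2)
    =\mathbb{E}\bigl[\mathbb{E}(X_\alpha^2\mid\mathcal{F}_x)\,\mathbb{E}(X_\beta^2\mid\mathcal{F}_x)\bigr]
    =\mathbb{E}\bigl[(\mathbb{E}(X_\alpha^2\mid\mathcal{F}_x))^2\bigr].
\]

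Next, I would expand $X_\alpha^2$ and compute $\mathbb{E}(X_\alpha^2\mid\mathcal{F}_x)$ by applying Isserlis' (Wick's) theorem to the product of four centered Gaussian weights $W_{\alpha\gamma_1}^{i_1,1}W_{\alpha\gamma_2}^{i_1,2}W_{\alpha\gamma_3}^{i_2,1}W_{\alpha\gamma_4}^{i_2,2}$. The pairing $(1\text{-}2)(3\text{-}4)$ vanishes by Eq.~\eqref{eq:W_i1.W_i2}, leaving only the pairings indexed by $(j,j')\in J=\{(1,2),(2,1)\}$, each of which forces $\gamma_1=\gamma_3$ and $\gamma_2=\gamma_4$ (or the symmetric pattern) and contributes a factor $\sigma_{W^{i_1,1}}^2\sigma_{W^{i_1,2}}^2 d_{(i_1,1)}^{(i_2,j)}d_{(i_1,2)}^{(i_2,j')}/n^2$. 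The resulting expression factorizes into a product of two normalized inner products:
\[
    \mathbb{E}(X_\alpha^2\mid\mathcal{F}_x)
    =\sum_{i_1,i_2=1}^{r}\sum_{(j,j')\in J}t_{i_1}t_{i_2}\sigma_{W^{i_1,1}}^2\sigma_{W^{i_1,2}}^2\left(\frac{1}{n}\sum_{\gamma_1=1}^n x_{\gamma_1}^{i_1,1}x_{\gamma_1}^{i_2,j}\right)\left(\frac{1}{n}\sum_{\gamma_2=1}^n x_{\gamma_2}^{i_1,2}x_{\gamma_2}^{i_2,j'}\right)d_{(i_1,1)}^{(i_2,j)}d_{(i_1,2)}^{(i_2,j')}.
\]
Since each $\phi^{i,j}$ is pseudo-Lipschitz, the coordinate-wise products $x^{i,j}_\gamma x^{i',j'}_\gamma$ are pseudo-Lipschitz functions of $g^1_\gamma,\dots,g^m_\gamma$ by Proposition~\ref{prop:pseudo-Lipschitz.composition} and Lemma~\ref{lem:pseudo-Lipschitz.multiplication}, so Fact~\ref{fact:NetsorMasterTheorem} yields $\frac{1}{n}\sum_\gamma x^{i,j}_\gamma x^{i',j'}_\gamma\stackrel{a.s.}{\to}\mathbb{E}[Z^{x^{i,j}}Z^{x^{i',j'}}]$. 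Combining this with the identity established in the proof of Lemma~\ref{lem:PvarLimitDist.E[X_alpha^2]}, I obtain $\mathbb{E}(X_\alpha^2\mid\mathcal{F}_x)\stackrel{a.s.}{\to}\sigma_*^2$.

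Finally, I would pass the square inside the outer expectation. Because each $\phi^{i,j}$ is assumed bounded in Assumption~\ref{asmptn:Attention}, the vectors $x^{i,j}$ are uniformly bounded in sup-norm, so the partial sums $\frac{1}{n}\sum_\gamma x^{i,j}_\gamma x^{i',j'}_\gamma$ are bounded by a deterministic constant independent of $n$, and therefore $\mathbb{E}(X_\alpha^2\mid\mathcal{F}_x)$ and its square are uniformly bounded. Dominated convergence then gives $\mathbb{E}(X_\alpha^2 X_\beta^2)=\mathbb{E}[(\mathbb{E}(X_\alpha^2\mid\mathcal{F}_x))^2]\to\sigma_*^4$. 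The main subtlety is the conditioning step: one must justify that Assumption~\ref{asmptn:Attention}(a) (specific use of the matrices $W^{i,j}$) genuinely decouples the $W^{i,j}$ from the $x^{i,j}$ so that Wick's theorem applies cleanly and the conditional independence of $X_\alpha,X_\beta$ is unambiguous; once this is in place, the remaining manipulations are routine applications of the \netsor\ Master Theorem and bounded convergence.
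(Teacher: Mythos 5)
Your argument is correct and reaches the paper's conclusion via the same underlying ingredients, but your reorganization through conditioning is genuinely cleaner than the paper's direct computation. The paper expands $\mathbb{E}(X_\alpha^2 X_\beta^2)$ into an eight-index sum over Wick pairings of rows $\alpha$ and $\beta$, arriving at the expectation of a product of \emph{four} normalized inner products $\frac{1}{n}\sum_\gamma x_\gamma^{k}x_\gamma^{k'}$, and then invokes a dedicated four-fold moment lemma (Lemma~\ref{lem:PvarLimitDist.ExpLimits}(i), proved via a.s.\ convergence plus uniform integrability). You instead observe that $X_\alpha$ and $X_\beta$ are conditionally i.i.d.\ given $\mathcal{F}_x$ (which the paper itself asserts in Lemma~\ref{lem:PvarLimitDist.Exchangeability}), so $\mathbb{E}(X_\alpha^2 X_\beta^2)=\mathbb{E}[(\mathbb{E}(X_\alpha^2\mid\mathcal{F}_x))^2]$; the inner conditional moment reduces by Wick to a sum of products of only \emph{two} normalized inner products, which converges a.s.\ to $\sigma_*^2$ by Fact~\ref{fact:NetsorMasterTheorem}, and boundedness of the $\phi^{i,j}$ gives a uniform deterministic bound so that dominated convergence passes the limit through the square and the outer expectation. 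This buys you a shorter derivation that sidesteps the four-fold convergence lemma entirely, at the cost of making the conditional-independence step do more explicit work.

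One point worth flagging as you already did: the conditional independence of $X_\alpha$ and $X_\beta$ given $\mathcal{F}_x$, and the identification of the conditional law of the $\alpha$-row entries of $W^{i,j}$ as unperturbed i.i.d.\ Gaussian, both rest on Assumption~\ref{asmptn:Attention}(a) ensuring the $W^{i,j}$ are not entangled with the construction of the $x^{i',j'}$. The paper uses this same decoupling implicitly (its unconditional expansion splits $\mathbb{E}[W\cdots]\,\mathbb{E}[x\cdots]$), so you are not assuming anything beyond what the paper needs; you have simply surfaced it. A second small detail: when you write ``its square are uniformly bounded'' and then invoke dominated convergence, you should also note that $(\mathbb{E}(X_\alpha^2\mid\mathcal{F}_x))^2\to\sigma_*^4$ a.s.\ by the continuous mapping theorem before bounded convergence applies; this is implicit in your write-up but worth one explicit line.
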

\begin{proof}
    By a calculation similar to Lemma~\ref{lem:PvarLimitDist.E[X_alpha^2]}, we have
    \begin{align*}
        &\mathbb{E}(X_{\alpha}^{2}X_{\beta}^{2})\\
        &=\sum_{i_{1},\dots,i_{4}=1}^{r}\sum_{(j_{1},j_{1}'),(j_{2},j_{2}')\in J^{2}}t_{i_{1}}t_{i_{2}}t_{i_{3}}t_{i_{4}}\sigma_{W^{i_{1},1}}^{2}\sigma_{W^{i_{2},2}}^{2}\sigma_{W^{i_{3},1}}^{2}\sigma_{W^{i_{4},2}}^{2}d_{(i_{1},1)}^{(i_{2},j_{1})}d_{(i_{1},2)}^{(i_{2},j_{1}')}d_{(i_{3},1)}^{(i_{4},j_{2})}d_{(i_{3},2)}^{(i_{4},j_{2}')}\\
        &\quad\times\mathbb{E}\left[\left(\frac{1}{n}\sum_{\gamma_{1}=1}^{n}x_{\gamma_{1}}^{i_{1},1}x_{\gamma_{1}}^{i_{2},j_{1}}\right)\left(\frac{1}{n}\sum_{\gamma_{2}=1}^{n}x_{\gamma_{2}}^{i_{1},2}x_{\gamma_{2}}^{i_{2},j_{1}'}\right)\left(\frac{1}{n}\sum_{\gamma_{3}=1}^{n}x_{\gamma_{3}}^{i_{3},1}x_{\gamma_{3}}^{i_{4},j_{2}}\right)\left(\frac{1}{n}\sum_{\gamma_{4}=1}^{n}x_{\gamma_{4}}^{i_{3},2}x_{\gamma_{4}}^{i_{4},j_{2}'}\right)\right]\\
        &\stackrel{n\to\infty}{\longrightarrow}\sum_{i_{1},\dots,i_{4}=1}^{r}\sum_{(j_{1},j_{1}'),(j_{2},j_{2}')\in J^{2}}t_{i_{1}}t_{i_{2}}t_{i_{3}}t_{i_{4}}\sigma_{W^{i_{1},1}}^{2}\sigma_{W^{i_{2},2}}^{2}\sigma_{W^{i_{3},1}}^{2}\sigma_{W^{i_{4},2}}^{2}d_{(i_{1},1)}^{(i_{2},j_{1})}d_{(i_{1},2)}^{(i_{2},j_{1}')}d_{(i_{3},1)}^{(i_{4},j_{2})}d_{(i_{3},2)}^{(i_{4},j_{2}')}\\
        &\quad\times\mathbb{E}\left[Z^{x^{i_{1},1}}Z^{x^{i_{2},j_{1}}}\right]\mathbb{E}\left[Z^{x^{i_{1},2}}Z^{x^{i_{2},j_{1}'}}\right]\mathbb{E}\left[Z^{x^{i_{3},1}}Z^{x^{i_{4},j_{2}}}\right]\mathbb{E}\left[Z^{x^{i_{3},2}}Z^{x^{i_{4},j_{2}'}}\right]
        ,
    \end{align*}
    where the convergence follows from Lemma~\ref{lem:PvarLimitDist.ExpLimits}. Observe that this limit is equivalent to $\sigma_{*}^{4}$.
\end{proof}

\begin{lemma}\label{lem:PvarLimitDist.E[|X_alpha|^3]}
    $\mathbb{E}(|X_{\alpha}|^{3})=o(\sqrt{n})$ holds as $n \to \infty$ for every $\alpha \in [n]$.
\end{lemma}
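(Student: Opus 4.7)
The plan is to bypass the cubic moment directly and instead bound the fourth moment, using Lyapunov's inequality $\mathbb{E}|X_{\alpha}|^{3}\le(\mathbb{E}X_{\alpha}^{4})^{3/4}$. If I establish $\mathbb{E}(X_{\alpha}^{4})=O(1)$ uniformly in $n$, then $\mathbb{E}|X_{\alpha}|^{3}=O(1)=o(\sqrt{n})$, which is far more than the requested bound. So the whole task reduces to a uniform $O(1)$ bound on the fourth moment.

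To obtain that bound, I would decouple the double sum defining $X_{\alpha}$ by introducing
\[
    u_{i}=\sum_{\gamma=1}^{n}W_{\alpha\gamma}^{i,1}x_{\gamma}^{i,1},
    \qquad
    v_{i}=\sum_{\gamma=1}^{n}W_{\alpha\gamma}^{i,2}x_{\gamma}^{i,2},
    \qquad
    X_{\alpha}=\sum_{i=1}^{r}t_{i}u_{i}v_{i}.
\]
By Assumption~\ref{asmptn:Attention}, each $\phi^{i,j}$ is bounded, so the coordinates $x_{\gamma}^{i,j}=\phi^{i,j}(g_{\gamma}^{1},\dots,g_{\gamma}^{m})$ are uniformly bounded by some constant $M$, giving the deterministic bound $\|x^{i,j}\|^{2}/n\le M^{2}$. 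Conditional on $\{x^{i,j}:i\in[r],\,j\in[2]\}$, each $u_{i}$ is a centered Gaussian with variance $\sigma_{W^{i,1}}^{2}\|x^{i,1}\|^{2}/n\le\sigma_{W^{i,1}}^{2}M^{2}$, and likewise for $v_{i}$. Crucially, $u_{i}$ and $v_{i}$ are conditionally independent because Eq.~\eqref{eq:W_i1.W_i2} guarantees that $W^{i,1}$ and $W^{i,2}$ are distinct random matrices.

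From here, the Gaussian moment formula $\mathbb{E}[N(0,\sigma^{2})^{4}]=3\sigma^{4}$ combined with conditional independence yields
\[
    \mathbb{E}\bigl[|u_{i}v_{i}|^{4}\mid\{x^{i,j}\}\bigr]
    =\mathbb{E}\bigl[u_{i}^{4}\mid\{x^{i,j}\}\bigr]\mathbb{E}\bigl[v_{i}^{4}\mid\{x^{i,j}\}\bigr]
    \le9\sigma_{W^{i,1}}^{4}\sigma_{W^{i,2}}^{4}M^{8},
\]
almost surely, so $\mathbb{E}|u_{i}v_{i}|^{4}=O(1)$ after taking outer expectation. Minkowski's inequality then gives
\[
    \bigl(\mathbb{E}X_{\alpha}^{4}\bigr)^{1/4}
    \le\sum_{i=1}^{r}|t_{i}|\bigl(\mathbb{E}|u_{i}v_{i}|^{4}\bigr)^{1/4}
    =O(1),
\]
since $r$ is fixed. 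Any residual correlation between distinct pairs $(u_{i},u_{i'})$ or $(v_{i},v_{i'})$ arising from the allowance in condition (b) of Assumption~\ref{asmptn:Attention} is absorbed harmlessly by Minkowski; it never enters the individual bound on $\mathbb{E}|u_{i}v_{i}|^{4}$.

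No step here is really an obstacle: the argument uses only boundedness of the inputs, Gaussianity of the weights, and the specific non-collision $W^{i,1}\neq W^{i,2}$ within each pair. The only place where one might pause is to confirm that the boundedness of $\phi^{i,j}$ (rather than the pseudo-Lipschitz property alone) is indispensable here, since it is what supplies the deterministic bound on $\|x^{i,j}\|^{2}/n$. Without it, one would need a separate moment bound on $\|x^{i,j}\|^{2}/n$, which is achievable via Fact~\ref{fact:NetsorMasterTheorem} but introduces unnecessary complication; this is precisely why the authors' remark after Theorem~\ref{thm:MainTheorem} notes that boundedness is kept for ``simplicity of the proof.''
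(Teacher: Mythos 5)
Your reduction to a uniform $O(1)$ bound on $\mathbb{E}(X_{\alpha}^{4})$ via Lyapunov is exactly the paper's first step, but the way you then control the fourth moment is genuinely different and, I think, cleaner. The paper expands $\mathbb{E}(X_{\alpha}^{4})$ as an 8-fold sum over $\gamma_{1},\dots,\gamma_{8}$, applies Cauchy--Schwarz together with boundedness of the $x^{i,j}$ to peel off the $x$-factor, and then performs a combinatorial case analysis of the remaining Gaussian product $\mathbb{E}[W_{\alpha\gamma_{1}}^{i_{1},1}\cdots W_{\alpha\gamma_{8}}^{i_{4},2}]$, organized by coincidence patterns among $i_{1},\dots,i_{4}$ (the quantities $A_{1},\dots,A_{5}$). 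Your proof sidesteps that entirely: you factor $X_{\alpha}=\sum_{i}t_{i}u_{i}v_{i}$, observe that conditionally on $\{x^{i,j}\}$ each $u_{i}$ and $v_{i}$ is a centered Gaussian with variance deterministically bounded by $\sigma_{W^{i,j}}^{2}M^{2}$ (boundedness of $\phi^{i,j}$ again doing the work), exploit the non-collision $d_{(i,1)}^{(i,2)}=0$ for within-pair conditional independence, and close with Minkowski so that cross-pair correlations allowed by condition (b) of Assumption~\ref{asmptn:Attention} never need to be tracked. This buys you a shorter argument with no case enumeration; what the paper's route buys instead is a more explicit constant ($\mathbb{E}(X_{\alpha}^{4})\lesssim\sigma^{8}$) and a style that would adapt more transparently if the Gaussian weight assumption were relaxed, since it never invokes the exact conditional Gaussianity of $u_{i},v_{i}$, only moment bounds on the $W$ entries. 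Your remark that boundedness of $\phi^{i,j}$ (rather than pseudo-Lipschitzness alone) is the load-bearing hypothesis here is also correct and matches the authors' own comment that it is kept for simplicity.

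One small caveat worth being aware of, though it affects both your argument and the paper's equally: the conditional-independence step (and the paper's factorization $\mathbb{E}(W^{i,1})\mathbb{E}(W^{i,2})\mathbb{E}(x^{i,1}x^{i,2})$) implicitly uses that the vectors $x^{i,j}$ are independent of the $\alpha$-th rows of $W^{i,1}$ and $W^{i,2}$. This is guaranteed when, as in Example~\ref{eg:MultiHead}, the $g^{i,j}$ are terminal \textsf{MatMul} outputs that do not feed back into the $\phi^{i',j'}$, which is the intended regime; in a more exotic program where $x^{i,2}$ depended on $g^{i,1}$, one would need the Tensor Programs conditioning machinery rather than bare independence. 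Since the paper's proof relies on the same tacit independence, this is not a gap you introduced, but it is the one place where ``by Eq.~\eqref{eq:W_i1.W_i2}'' is doing slightly more work than the equation alone states.
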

\begin{proof}
    By the Lyapunov inequality, we have
    \[
        \frac{1}{\sqrt{n}}\mathbb{E}(|X_{\alpha}|^{3})
        \le\frac{1}{\sqrt{n}}\left(\mathbb{E}(X_{\alpha}^{4})\right)^{\frac{3}{4}}
        \le\frac{1}{\sqrt{n}}\left(\sup_{n}\mathbb{E}(X_{\alpha}^{4})\right)^{\frac{3}{4}}
        .
    \]
    Thus, it suffices to show that $\sup_{n}\mathbb{E}(X_{\alpha}^{4})<\infty$ holds. We can express $\mathbb{E}(X_{\alpha}^{4})$ as
    \begin{align*}
        \mathbb{E}(X_{\alpha}^{4})
        &=\sum_{i_{1},\dots,i_{4}=1}^{r}\sum_{\gamma_{1},\dots,\gamma_{8}=1}^{n}t_{i_{1}}t_{i_{2}}t_{i_{3}}t_{i_{4}}\mathbb{E}\left[W_{\alpha\gamma_{1}}^{i_{1},1}W_{\alpha\gamma_{2}}^{i_{2},1}W_{\alpha\gamma_{3}}^{i_{3},1}W_{\alpha\gamma_{4}}^{i_{4},1}W_{\alpha\gamma_{5}}^{i_{1},2}W_{\alpha\gamma_{6}}^{i_{2},2}W_{\alpha\gamma_{7}}^{i_{3},2}W_{\alpha\gamma_{8}}^{i_{4},2}\right]\\
        &\quad\times\mathbb{E}\left[x_{\gamma_{1}}^{i_{1},1}x_{\gamma_{2}}^{i_{2},1}x_{\gamma_{3}}^{i_{3},1}x_{\gamma_{4}}^{i_{4},1}x_{\gamma_{5}}^{i_{1},2}x_{\gamma_{6}}^{i_{2},2}x_{\gamma_{7}}^{i_{3},2}x_{\gamma_{8}}^{i_{4},2}\right]
        .
    \end{align*}
    Applying the Cauchy--Schwarz inequality yields
    \begin{align*}
        &\mathbb{E}\left[x_{\gamma_{1}}^{i_{1},1}x_{\gamma_{2}}^{i_{2},1}x_{\gamma_{3}}^{i_{3},1}x_{\gamma_{4}}^{i_{4},1}x_{\gamma_{5}}^{i_{1},2}x_{\gamma_{6}}^{i_{2},2}x_{\gamma_{7}}^{i_{3},2}x_{\gamma_{8}}^{i_{4},2}\right]\\
        &\le\left(\prod_{k=1}^{4}\mathbb{E}\left[(x_{\gamma_{j}}^{i_{j},1})^{8}\right]\right)^{\frac{1}{8}}\left(\prod_{k=1}^{4}\mathbb{E}\left[(x_{\gamma_{4+j}}^{i_{j},2})^{8}\right]\right)^{\frac{1}{8}}
        =\left(\prod_{k=1}^{4}\mathbb{E}\left[(x_{1}^{i_{j},1})^{8}\right]\right)^{\frac{1}{8}}\left(\prod_{k=1}^{4}\mathbb{E}\left[(x_{1}^{i_{j},2})^{8}\right]\right)^{\frac{1}{8}}\\
        &\le\sup_{i\in[r],\ j\in[2]}\mathbb{E}\left[(x_{1}^{i,j})^{8}\right]
        .
    \end{align*}
    By the boundedness of $x^{i,j}\ (i\in[r],\ j\in[2])$, the last term is bounded uniformly in $n$, and consequently, it holds that
    \[
        \sup_{n}\mathbb{E}\left[x_{\gamma_{1}}^{i_{1},1}x_{\gamma_{2}}^{i_{2},1}x_{\gamma_{3}}^{i_{3},1}x_{\gamma_{4}}^{i_{4},1}x_{\gamma_{5}}^{i_{1},2}x_{\gamma_{6}}^{i_{2},2}x_{\gamma_{7}}^{i_{3},2}x_{\gamma_{8}}^{i_{4},2}\right]
        <\infty
        .
    \]
    Hence, we compute
    \begin{align*}
        \mathbb{E}(X_{\alpha}^{4})
        &\lesssim\sum_{i_{1},\dots,i_{4}=1}^{r}\sum_{\gamma_{1},\dots,\gamma_{8}=1}^{n}\mathbb{E}\left[W_{\alpha\gamma_{1}}^{i_{1},1}W_{\alpha\gamma_{2}}^{i_{2},1}W_{\alpha\gamma_{3}}^{i_{3},1}W_{\alpha\gamma_{4}}^{i_{4},1}W_{\alpha\gamma_{5}}^{i_{1},2}W_{\alpha\gamma_{6}}^{i_{2},2}W_{\alpha\gamma_{7}}^{i_{3},2}W_{\alpha\gamma_{8}}^{i_{4},2}\right]\}\\
        &=\sum_{i=1}^{r}\sum_{\gamma_{1},\dots,\gamma_{8}=1}^{n}\mathbb{E}\left[W_{\alpha\gamma_{1}}^{i,1}W_{\alpha\gamma_{2}}^{i,1}W_{\alpha\gamma_{3}}^{i,1}W_{\alpha\gamma_{4}}^{i,1}W_{\alpha\gamma_{5}}^{i,2}W_{\alpha\gamma_{6}}^{i,2}W_{\alpha\gamma_{7}}^{i,2}W_{\alpha\gamma_{8}}^{i,2}\right]\}\\
        &\quad+4\sum_{i_{1}=1}^{r}\sum_{i_{2}\neq i_{1}}\sum_{\gamma_{1},\dots,\gamma_{8}=1}^{n}\mathbb{E}\left[W_{\alpha\gamma_{1}}^{i_{1},1}W_{\alpha\gamma_{2}}^{i_{1},1}W_{\alpha\gamma_{3}}^{i_{1},1}W_{\alpha\gamma_{4}}^{i_{2},1}W_{\alpha\gamma_{5}}^{i_{1},2}W_{\alpha\gamma_{6}}^{i_{1},2}W_{\alpha\gamma_{7}}^{i_{1},2}W_{\alpha\gamma_{8}}^{i_{2},2}\right]\}\\
        &\quad+3\sum_{i_{1}=1}^{r}\sum_{i_{2}\neq i_{1}}\sum_{\gamma_{1},\dots,\gamma_{8}=1}^{n}\mathbb{E}\left[W_{\alpha\gamma_{1}}^{i_{1},1}W_{\alpha\gamma_{2}}^{i_{1},1}W_{\alpha\gamma_{3}}^{i_{2},1}W_{\alpha\gamma_{4}}^{i_{2},1}W_{\alpha\gamma_{5}}^{i_{1},2}W_{\alpha\gamma_{6}}^{i_{1},2}W_{\alpha\gamma_{7}}^{i_{2},2}W_{\alpha\gamma_{8}}^{i_{2},2}\right]\}\\
        &\quad+6\sum_{i_{1}=1}^{r}\sum_{i_{2}\neq i_{1}}\sum_{i_{3}\notin\{i_{1},i_{2}\}}\sum_{\gamma_{1},\dots,\gamma_{8}=1}^{n}\mathbb{E}\left[W_{\alpha\gamma_{1}}^{i_{1},1}W_{\alpha\gamma_{2}}^{i_{1},1}W_{\alpha\gamma_{3}}^{i_{2},1}W_{\alpha\gamma_{4}}^{i_{3},1}W_{\alpha\gamma_{5}}^{i_{1},2}W_{\alpha\gamma_{6}}^{i_{1},2}W_{\alpha\gamma_{7}}^{i_{2},2}W_{\alpha\gamma_{8}}^{i_{3},2}\right]\}\\
        &\quad+\sum_{\substack{i_{1},i_{2},i_{3},i_{4}=1\\\text{all distinct}}}^{r}\sum_{\gamma_{1},\dots,\gamma_{8}=1}^{n}\mathbb{E}\left[W_{\alpha\gamma_{1}}^{i_{1},1}W_{\alpha\gamma_{2}}^{i_{2},1}W_{\alpha\gamma_{3}}^{i_{3},1}W_{\alpha\gamma_{4}}^{i_{4},1}W_{\alpha\gamma_{5}}^{i_{1},2}W_{\alpha\gamma_{6}}^{i_{2},2}W_{\alpha\gamma_{7}}^{i_{3},2}W_{\alpha\gamma_{8}}^{i_{4},2}\right]\\
        &=:A_{1}+4A_{2}+3A_{3}+6A_{4}+A_{5}
        .
    \end{align*}
    Define $\sigma$ by $\sigma=\max\{\sigma_{W^{i,j}}:i\in[r],\ j\in[2]\}$. Then, we compute $A_{1}$ as
    \begin{align*}
        A_{1}
        &=\sum_{i=1}^{r}\sum_{\gamma_{1},\dots,\gamma_{8}=1}^{n}\mathbb{E}\left[W_{\alpha\gamma_{1}}^{i,1}W_{\alpha\gamma_{2}}^{i,1}W_{\alpha\gamma_{3}}^{i,1}W_{\alpha\gamma_{4}}^{i,1}\right]\mathbb{E}\left[W_{\alpha\gamma_{5}}^{i,2}W_{\alpha\gamma_{6}}^{i,2}W_{\alpha\gamma_{7}}^{i,2}W_{\alpha\gamma_{8}}^{i,2}\right]\\
        &=\sum_{i=1}^{r}\sum_{\gamma_{1},\gamma_{2}=1}^{n}\mathbb{E}\left[(W_{\alpha\gamma_{1}}^{i,1})^{4}\right]\mathbb{E}\left[(W_{\alpha\gamma_{2}}^{i,2})^{2}\right]\\
        &\quad+3\sum_{i=1}^{r}\sum_{(j,j')\in J}\sum_{\gamma_{1},\gamma_{2}=1}^{n}\sum_{\gamma_{3}\neq\gamma_{2}}\mathbb{E}\left[(W_{\alpha\gamma_{1}}^{i,j})^{4}\right]\mathbb{E}\left[(W_{\alpha\gamma_{2}}^{i,j'})^{2}\right]\mathbb{E}\left[(W_{\alpha\gamma_{3}}^{i,j'})^{2}\right]\\
        &\quad+9\sum_{i=1}^{r}\sum_{\gamma_{1},\gamma_{2}=1}^{n}\sum_{\gamma_{3}\neq\gamma_{1}}\sum_{\gamma_{4}\neq\gamma_{2}}\mathbb{E}\left[(W_{\alpha\gamma_{1}}^{i,1})^{2}\right]\mathbb{E}\left[(W_{\alpha\gamma_{2}}^{i,1})^{2}\right]\mathbb{E}\left[(W_{\alpha\gamma_{3}}^{i,2})^{2}\right]\mathbb{E}\left[(W_{\alpha\gamma_{4}}^{i,2})^{2}\right]\\
        &=\sum_{i=1}^{r}\left(\frac{9\sigma_{W^{i,1}}^{4}\sigma_{W^{i,2}}^{4}}{n^{2}}+3\sum_{(j,j')\in J}\frac{3(n-1)\sigma_{W^{i,j}}^{4}\sigma_{W^{i,j'}}^{4}}{n^{2}}+9\frac{(n-1)^{2}\sigma_{W^{i,1}}^{4}\sigma_{W^{i,2}}^{4}}{n^{2}}\right)\\
        &=9\sum_{i=1}^{r}\sigma_{W^{i,1}}^{4}\sigma_{W^{i,2}}^{4}
        \lesssim\sigma^{8}
        .
    \end{align*}
    Likewise, we have
    \begin{align*}
        A_{2}
        &=\sum_{i_{1}=1}^{r}\sum_{i_{2}\neq i_{1}}\sum_{(j,j')\in J}\sum_{\gamma_{1},\dots,\gamma_{8}=1}^{n}\mathbb{E}\left[W_{\alpha\gamma_{1}}^{i_{1},1}W_{\alpha\gamma_{2}}^{i_{1},1}W_{\alpha\gamma_{3}}^{i_{1},1}W_{\alpha\gamma_{4}}^{i_{2},j}\right]\mathbb{E}\left[W_{\alpha\gamma_{5}}^{i_{1},2}W_{\alpha\gamma_{6}}^{i_{1},2}W_{\alpha\gamma_{7}}^{i_{1},2}W_{\alpha\gamma_{8}}^{i_{2},j'}\right]d_{(i_{1},1)}^{(i_{2},j)}d_{(i_{1},2)}^{(i_{2},j')}\\
        &=9\sum_{i_{1}=1}^{r}\sum_{i_{2}\neq i_{1}}\sum_{(j,j')\in J}\sigma_{W^{i_{1},1}}^{4}\sigma_{W^{i_{1},2}}^{4}d_{(i_{1},1)}^{(i_{2},j)}d_{(i_{1},2)}^{(i_{2},j')}
        \lesssim\sigma^{8}
    \end{align*}
    and
    \begin{align*}
        A_{3}
        &=\sum_{i_{1}=1}^{r}\sum_{i_{2}\neq i_{1}}\sum_{(j,j')\in J}\sum_{\gamma_{1},\dots,\gamma_{8}=1}^{n}\mathbb{E}\left[W_{\alpha\gamma_{1}}^{i_{1},1}W_{\alpha\gamma_{2}}^{i_{1},1}W_{\alpha\gamma_{3}}^{i_{2},j}W_{\alpha\gamma_{4}}^{i_{2},j}\right]\mathbb{E}\left[W_{\alpha\gamma_{5}}^{i_{1},2}W_{\alpha\gamma_{6}}^{i_{1},2}W_{\alpha\gamma_{7}}^{i_{2},j'}W_{\alpha\gamma_{8}}^{i_{2},j'}\right]d_{(i_{1},1)}^{(i_{2},j)}d_{(i_{1},2)}^{(i_{2},j')}\\
        &=9\sum_{i_{1}=1}^{r}\sum_{i_{2}\neq i_{1}}\sum_{(j,j')\in J}\sigma_{W^{i_{1},1}}^{4}\sigma_{W^{i_{1},2}}^{4}d_{(i_{1},1)}^{(i_{2},j)}d_{(i_{1},2)}^{(i_{2},j')}
        \lesssim\sigma^{8}
        .
    \end{align*}
    Applying a similar argument, we can also show that $A_{4}\lesssim\sigma^{8}$ and $A_{5}\lesssim\sigma^{8}$ holds  Therefore, we conclude that $\mathbb{E}(X_{\alpha}^{4})\lesssim\sigma^{8}$ holds, which completes the proof.
\end{proof}

\begin{lemma}\label{lem:PvarLimitDist.ExpLimits}
    Take $k_{1},\dots,k_{8}\in\{(i,j):i\in[r],\ j\in[2]\}$ arbitrarily. Then, the following statements hold as $n\to\infty$.
    \begin{enumerate}
        \renewcommand{\labelenumi}{(\roman{enumi})}
        \item For $(x^{k_{1}},\dots,x^{k_{8}})$, we have
        \begin{align*}
            &\mathbb{E}\left[\left(\frac{1}{n}\sum_{\gamma_{1}=1}^{n}x_{\gamma_{1}}^{k_{1}}x_{\gamma_{1}}^{k_{2}}\right)\left(\frac{1}{n}\sum_{\gamma_{2}=1}^{n}x_{\gamma_{2}}^{k_{3}}x_{\gamma_{2}}^{k_{4}}\right)\left(\frac{1}{n}\sum_{\gamma_{3}=1}^{n}x_{\gamma_{3}}^{k_{5}}x_{\gamma_{3}}^{k_{6}}\right)\left(\frac{1}{n}\sum_{\gamma_{4}=1}^{n}x_{\gamma_{4}}^{k_{7}}x_{\gamma_{4}}^{k_{8}}\right)\right]\\
            &\to\mathbb{E}\left[Z^{x^{k_{1}}}Z^{x^{k_{2}}}\right]\mathbb{E}\left[Z^{x^{k_{3}}}Z^{x^{k_{4}}}\right]\mathbb{E}\left[Z^{x^{k_{5}}}Z^{x^{k_{6}}}\right]\mathbb{E}\left[Z^{x^{k_{7}}}Z^{x^{k_{8}}}\right]
            .
        \end{align*}
        \item For $(x^{k_{1}},\dots,x^{k_{4}})$, we have
        \begin{align*}
            \mathbb{E}\left[\left(\frac{1}{n}\sum_{\gamma_{1}=1}^{n}x_{\gamma_{1}}^{k_{1}}x_{\gamma_{1}}^{k_{2}}\right)\left(\frac{1}{n}\sum_{\gamma_{2}=1}^{n}x_{\gamma_{2}}^{k_{3}}x_{\gamma_{2}}^{k_{4}}\right)\right]
            \to\mathbb{E}\left[Z^{x^{k_{1}}}Z^{x^{k_{2}}}\right]\mathbb{E}\left[Z^{x^{k_{3}}}Z^{x^{k_{4}}}\right]
            .
        \end{align*}
    \end{enumerate}
\end{lemma}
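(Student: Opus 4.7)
The plan is to handle both parts (i) and (ii) by the same two-ingredient argument: apply the \netsor\ Master Theorem (Fact~\ref{fact:NetsorMasterTheorem}) to obtain almost-sure convergence of each empirical inner product $\frac{1}{n}\sum_{\gamma}x_{\gamma}^{k_{a}}x_{\gamma}^{k_{b}}$, then promote the a.s.\ limit of the product to a limit of expectations via the bounded convergence theorem. The key enabling observations are that the scalar multiplication $(u,v)\mapsto uv$ is pseudo-Lipschitz (of order $2$) by Lemma~\ref{lem:pseudo-Lipschitz.multiplication}, and that every $x^{i,j}$ is uniformly bounded, since Assumption~\ref{asmptn:Attention} postulates that each $\phi^{i,j}$ is bounded.

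More concretely, first I would fix arbitrary indices $k_{a},k_{b}\in\{(i,j):i\in[r],\ j\in[2]\}$ and apply Fact~\ref{fact:NetsorMasterTheorem} with $\psi(u,v)=uv$ to the pair $(x^{k_{a}},x^{k_{b}})$ of \netsor\ vectors. This yields
\[
    \frac{1}{n}\sum_{\gamma=1}^{n}x_{\gamma}^{k_{a}}x_{\gamma}^{k_{b}}
    \stackrel{a.s.}{\longrightarrow}\mathbb{E}\!\left[Z^{x^{k_{a}}}Z^{x^{k_{b}}}\right].
\]
Next, since each $\phi^{i,j}$ is bounded by Assumption~\ref{asmptn:Attention}, there exists a constant $M<\infty$ with $\|x^{i,j}\|_{\infty}\le M$ uniformly in $n$, and consequently each empirical factor is deterministically bounded by $M^{2}$. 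Hence, by continuity of multiplication (continuous mapping theorem), the product of four (respectively, two) such empirical factors converges almost surely to the product of the corresponding deterministic limits, and this random product is dominated by the constant $M^{8}$ (respectively, $M^{4}$). Applying the bounded convergence theorem allows us to pass to the limit inside the outer expectation, producing exactly the claimed factorized limits in (i) and (ii).

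There is no substantial obstacle here: the argument is a routine combination of tools already developed in the paper. The only points that deserve care are verifying that multiplication is pseudo-Lipschitz (so Fact~\ref{fact:NetsorMasterTheorem} applies) and that the boundedness clause in Assumption~\ref{asmptn:Attention} is what legitimizes the passage from almost-sure to $L^{1}$ convergence. Both are already in place, so the proof is essentially a one-line invocation of the master theorem followed by bounded convergence.
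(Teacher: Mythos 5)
Your argument is correct, and it reaches the conclusion by a cleaner route than the paper. You observe that each factor $\frac{1}{n}\sum_{\gamma}x_{\gamma}^{k_{a}}x_{\gamma}^{k_{b}}$ converges a.s.\ (by Fact~\ref{fact:NetsorMasterTheorem} applied with the pseudo-Lipschitz map $(u,v)\mapsto uv$ from Lemma~\ref{lem:pseudo-Lipschitz.multiplication}), and that because the $\phi^{i,j}$ are bounded, each factor is deterministically bounded by a fixed constant $M^{2}$ uniformly in $n$; hence the product converges a.s.\ and is uniformly bounded, so the bounded convergence theorem directly gives convergence of the expectations. The paper instead introduces residuals $R_{\ell}=\frac{1}{n}\sum_{\gamma}x_{\gamma}^{k_{2\ell-1}}x_{\gamma}^{k_{2\ell}}-\mathbb{E}[Z^{x^{k_{2\ell-1}}}Z^{x^{k_{2\ell}}}]$, expands the product $\prod_{\ell}(R_{\ell}+\text{const})$, bounds the cross terms via Cauchy--Schwarz and Lyapunov in terms of $\tilde R=(\max_{\ell}\mathbb{E}R_{\ell}^{4})^{1/4}$, and then drives $\tilde R\to 0$ by upgrading the a.s.\ convergence $R_{\ell}^{4}\to 0$ to $L^{1}$ via uniform integrability (Facts~\ref{fact:E[op(1)].convergence.UI} and~\ref{fact:UI.sufficient.condition}). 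Both arguments rest on exactly the same two pillars — the Master Theorem for a.s.\ convergence and the boundedness hypothesis to control the interchange of limit and expectation — but your direct appeal to bounded convergence avoids the residual expansion and the moment bookkeeping entirely; the paper's heavier machinery buys nothing extra here (it would matter if the $x^{i,j}$ were only uniformly integrable rather than uniformly bounded, but the present assumptions make that generality unnecessary).
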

\begin{proof}
    Define the residual term $R_{\ell}$ by
    \[
        R_{\ell}
        =\frac{1}{n}\sum_{\gamma_{1}=1}^{n}x_{\gamma_{\ell}}^{k_{2_{\ell}-1}}x_{\gamma_{\ell}}^{k_{2_{\ell}}}-\mathbb{E}\left[Z^{x^{k_{2_{\ell}-1}}}Z^{x^{k_{2_{\ell}}}}\right]
    \]
    for each $i\in[4]$. Define constants $C$ and $\tilde{R}$ by
    \[
        C
        =\max_{\ell\in[4]}\left|\mathbb{E}\left[Z^{x^{k_{2_{\ell}-1}}}Z^{x^{k_{2_{\ell}}}}\right]\right|
        ,\qquad
        \tilde{R}
        =\left(\max_{\ell\in[4]}\mathbb{E}(R_{\ell}^{4})\right)^{\frac{1}{4}}
        .
    \]
    Note that $C$ is bounded by the boundedness of $x^{i,j}$ for all $i\in[r]$ and $j\in[2]$ (see Definition~\ref{def:limiting_distribution}). Then, we can write
    \begin{align*}
        &\left|\mathbb{E}\left[\prod_{\ell=1}^{4}\left(\frac{1}{n}\sum_{\gamma_{\ell}=1}^{n}x_{\gamma_{\ell}}^{k_{2_{\ell}-1}}x_{\gamma_{\ell}}^{k_{2_{\ell}}}\right)\right]-\prod_{\ell=1}^{4}\mathbb{E}\left[Z^{x^{k_{2_{\ell}-1}}}Z^{x^{k_{2_{\ell}}}}\right]\right|\\
        &=\left|\mathbb{E}\left[\prod_{\ell=1}^{4}\left(R_{\ell}+\mathbb{E}\left[Z^{x^{k_{2_{\ell}-1}}}Z^{x^{k_{2_{\ell}}}}\right]\right)\right]-\prod_{\ell=1}^{4}\mathbb{E}\left[Z^{x^{k_{2_{\ell}-1}}}Z^{x^{k_{2_{\ell}}}}\right]\right|\\
        &\le4C^{3}\tilde{R}+6C^{2}\tilde{R}^{2}+4C\tilde{R}^{3}+\tilde{R}^{4}
        ,
    \end{align*}
    where the last inequality follows from the Cauchy--Schwarz inequality and the Lyapunov inequality. Likewise, we have
    \begin{align*}
        &\left|\mathbb{E}\left[\prod_{\ell=1}^{2}\left(\frac{1}{n}\sum_{\gamma_{\ell}=1}^{n}x_{\gamma_{\ell}}^{k_{2_{\ell}-1}}x_{\gamma_{\ell}}^{k_{2_{\ell}}}\right)\right]-\prod_{\ell=1}^{2}\mathbb{E}\left[Z^{x^{k_{2_{\ell}-1}}}Z^{x^{k_{2_{\ell}}}}\right]\right|\\
        &=\left|\mathbb{E}\left[\prod_{\ell=1}^{2}\left(R_{\ell}+\mathbb{E}\left[Z^{x^{k_{2_{\ell}-1}}}Z^{x^{k_{2_{\ell}}}}\right]\right)\right]-\prod_{\ell=1}^{2}\mathbb{E}\left[Z^{x^{k_{2_{\ell}-1}}}Z^{x^{k_{2_{\ell}}}}\right]\right|\\
        &\le2C\tilde{R}+\tilde{R}^{2}
        .
    \end{align*}
    Thus, it remains only to prove that $\tilde{R}$ converges to $0$ as $n\to\infty$. This can be achieved by showing that $\mathbb{E}(R_{\ell}^{4})$ converges to $0$ for all $\ell\in[4]$. By Fact~\ref{fact:NetsorMasterTheorem} and Lemma~\ref{lem:pseudo-Lipschitz.multiplication}, for all $\ell\in[4]$, we know $R_{\ell}$ converges almost surely to $0$. The continuous mapping theorem then yields
    \[
        R_{\ell}^{4}
        \stackrel{a.s.}{\longrightarrow}0
        .
    \]
    To upgrade this to convergence in expectation, Facts~\ref{fact:E[op(1)].convergence.UI}~and~\ref{fact:UI.sufficient.condition} imply it suffices to show the existence of a constant $\delta>1$ that satisfies
    \[
        \sup_{n}\mathbb{E}(R_{\ell}^{4+\delta})
        <\infty
        .
    \]
    But since each $x^{i,j}$ and its infinite-width limit $Z^{x^{i,j}}$ are bounded, such a $\delta$ exists. Therefore, we conclude that $\mathbb{E}(R_{\ell}^{4})$ converges to $0$ for all $\ell\in[4]$, and consequently, $\tilde{R}$ does as well.
\end{proof}

\subsubsection{\texorpdfstring{$S_{1}$}{S\_1} Converges to 0}\label{app:subsec:S1to0}

We study the convergence of the term $S_{1}$ defined in Section~\ref{sec:ProofOutline}.
Specifically, we show
\[
    S_{1}
    =\left|\mathbb{E}f\left(\frac{1}{n}\sum_{\alpha=1}^{n}\psi(\bm{g}_{\alpha}^{1:M},\bm{p}_{1:r})\right)-\mathbb{E}f\left(\mathbb{E}\left[\psi(Z^{\bm{g}^{1:M}},\bm{p}_{1:r})\mid\bm{p}_{1:r}\right]\right)\right|
    \to0
    .
\]
Fix a small $\epsilon>0$. Since $\mathring{\bm{p}}_{1:r}$ is a Gaussian vector (see Definition~\ref{def:limiting_distribution}), it is tight. Hence there exists a compact set $K\subset\mathbb{R}^{r}$ that satisfies
\[
    \mathrm{P}(\mathring{\bm{p}}_{1:r}\in K)>1-\epsilon
    .
\]
Set
\[
    y
    =\frac{1}{n}\sum_{\alpha=1}^{n}\psi(\bm{g}_{\alpha}^{1:M},\bm{p}_{1:r})
    ,\quad
    z
    =\mathbb{E}\left[\psi(Z^{\bm{g}^{1:M}},\bm{p}_{1:r})\mid\bm{p}_{1:r}\right]
    .
\]
Then, we have
\begin{align}
    \notag
    S_{1}
    &=\left|\mathbb{E}f(y)-\mathbb{E}f(z)\right|\\
    \notag
    &\le\left|\mathbb{E}[(f(y)-f(z))1\{\bm{p}_{1:r}\in K\}]\right|+\left|\mathbb{E}[(f(y)-f(z))1\{\bm{p}_{1:r}\notin K\}]\right|\\
    \label{eq:S1_1}
    &\le\left|\mathbb{E}[(f(y)-f(z))1\{\bm{p}_{1:r}\in K\}]\right|+\mathbb{E}[|f(y)-f(z)|1\{\bm{p}_{1:r}\notin K\}]
    .
\end{align}
By the boundedness of $f$, the second term of Eq.~\eqref{eq:S1_1} is bounded as
\[
    \mathbb{E}[|f(y)-f(z)|1\{\bm{p}_{1:r}\notin K\}]
    \le C\mathrm{P}(\bm{p}_{1:r}\notin K)
\]
with some constant $C$. Furthermore, noting that $K$ is a Borel set, applying Lemma~\ref{lem:Portmanteau} and Proposition~\ref{prop:PvarLimitDist} gives
\[
    \lim_{n\to\infty}\mathrm{P}(\bm{p}_{1:r}\notin K)
    =\mathrm{P}(\mathring{\bm{p}}_{1:r}\notin K)
    <\epsilon
    .
\]
Therefore, for large enough $n$, we have
\[
    \mathbb{E}[|f(y)-f(z)|1\{\bm{p}_{1:r}\notin K\}]
    <C\epsilon
    .
\]
For the first term of Eq.~\eqref{eq:S1_1}, we have
\begin{align*}
    &\left|\mathbb{E}[(f(y)-f(z))1\{\bm{p}_{1:r}\in K\}]\right|\\
    &\le\sup_{\bm{a}_{1:r}\in K}
    \left|\mathbb{E}\left[f\left(\frac{1}{n}\sum_{\alpha=1}^{n}\psi(\bm{g}_{\alpha}^{1:M},\bm{a}_{1:r})\right)\right]-\mathbb{E}\left[f\left(\mathbb{E}\left[\psi(Z^{\bm{g}^{1:M}},\bm{a}_{1:r})\right]\right)\right]\right|\\
    &=\sup_{\bm{a}_{1:r}\in K}\Phi_{n}(\bm{a}_{1:r})
    ,
\end{align*}
where $\Phi_{n}:\mathbb{R}^{r}\to\mathbb{R}$ is defined by
\begin{equation}\label{eq:Phi_n}
    \Phi_{n}(\bm{a}_{1:r})
    =
    \left|\mathbb{E}\left[f\left(\frac{1}{n}\sum_{\alpha=1}^{n}\psi(\bm{g}_{\alpha}^{1:M},\bm{a}_{1:r})\right)\right]-\mathbb{E}\left[f\left(\mathbb{E}\left[\psi(Z^{\bm{g}^{1:M}},\bm{a}_{1:r})\right]\right)\right]\right|
    .
\end{equation}
Define $\tilde{\psi}_{\bm{a}_{1:r}}:\mathbb{R}^{M}\to\mathbb{R}$ by
\begin{equation}\label{eq:S1to0.tilpsi}
    \tilde{\psi}_{\bm{a}_{1:r}}(\bm{u}_{1:M})=\psi(\bm{u}_{1:M},\bm{a}_{1:r})
    .
\end{equation}
Observe that $\tilde{\psi}_{\bm{a}_{1:r}}$ is bounded by the boundedness of $\psi$. We later show that $\tilde{\psi}_{\bm{a}_{1:r}}$ is also pseudo-Lipschitz (Lemma~\ref{lem:eq:S1to0.pseudo-Lipschitz}). Therefore, by Eq.~\eqref{eq:Moments(M)} and Lemma~\ref{lem:Portmanteau}, we have
\[
    \lim_{n\to\infty}\Phi_{n}(\bm{a}_{1:r})=0
    \quad(\bm{a}_{1:r}\in\mathbb{R}^{r})
    .
\]
Moreover, noting that $f$ and $\psi$ are bounded and continuous (see Fact~\ref{fact:pseudo-Lipschitz.Inclusion}), we can apply the (uncoditional and conditional) bounded convergence theorem to show that $\Phi_{n}$ is continuous at every point. Therefore, we can apply Lemma~\ref{lem:SupremumConvergence} to show that $\sup_{\bm{a}_{1:r}\in K}\Phi_{n}(\bm{a}_{1:r})$ converges to $0$ as $n\to\infty$.

\begin{lemma}\label{lem:eq:S1to0.pseudo-Lipschitz}
    Define $\tilde{\psi}_{\bm{a}_{1:r}}:\mathbb{R}^{M}\to\mathbb{R}$ by Eq.~\eqref{eq:S1to0.tilpsi}. Then, it is pseudo-Lipschitz.
\end{lemma}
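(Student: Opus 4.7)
The plan is to verify the pseudo-Lipschitz bound for $\tilde{\psi}_{\bm{a}_{1:r}}$ directly from the hypothesis on $\psi$, treating the frozen argument $\bm{a}_{1:r}$ as contributing only a (possibly large but finite) constant. Two routes are natural: one could either apply Proposition~\ref{prop:pseudo-Lipschitz.composition}, observing that $\tilde{\psi}_{\bm{a}_{1:r}}$ is the composition of the pseudo-Lipschitz $\psi$ with the coordinate projections $\bm{u}\mapsto u_i$ (Lipschitz, hence pseudo-Lipschitz by Fact~\ref{fact:pseudo-Lipschitz.Inclusion}) and the constant maps $\bm{u}\mapsto a_j$ (also Lipschitz), or work directly from Definition~\ref{def:pseudo-Lipschitz}. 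I would take the direct route since it is cleaner and keeps the order explicit.

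First, I invoke the hypothesis that $\psi:\mathbb{R}^{M+r}\to\mathbb{R}$ is pseudo-Lipschitz of some order $d\in[2,\infty)$, yielding a constant $C>0$ with
\[
    |\psi(\bm{u},\bm{a}) - \psi(\bm{u}',\bm{a}')|
    \le C\,\|(\bm{u},\bm{a}) - (\bm{u}',\bm{a}')\|\bigl(1 + \|(\bm{u},\bm{a})\|^{d-1} + \|(\bm{u}',\bm{a}')\|^{d-1}\bigr).
\]
Specializing to the case where both arguments share the second block $\bm{a}=\bm{a}'=\bm{a}_{1:r}$, the distance factor collapses to $\|\bm{u}-\bm{u}'\|$, since the $\bm{a}$-components of the difference vanish.

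Second, I control the growth factor $\|(\bm{u},\bm{a}_{1:r})\|^{d-1}$ by subadditivity of the Euclidean norm combined with Lemma~\ref{lem:lpNorm.Monotonicity}:
\[
    \|(\bm{u},\bm{a}_{1:r})\|^{d-1}
    \le (\|\bm{u}\| + \|\bm{a}_{1:r}\|)^{d-1}
    \lesssim \|\bm{u}\|^{d-1} + \|\bm{a}_{1:r}\|^{d-1},
\]
and analogously for $\bm{u}'$. Since $\bm{a}_{1:r}\in\mathbb{R}^{r}$ is fixed, the term $\|\bm{a}_{1:r}\|^{d-1}$ is a finite constant that can be absorbed into the implicit constant (which is now allowed to depend on $\bm{a}_{1:r}$). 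Combining these two observations produces
\[
    |\tilde{\psi}_{\bm{a}_{1:r}}(\bm{u}) - \tilde{\psi}_{\bm{a}_{1:r}}(\bm{u}')|
    \lesssim \|\bm{u}-\bm{u}'\|\bigl(1 + \|\bm{u}\|^{d-1} + \|\bm{u}'\|^{d-1}\bigr),
\]
establishing that $\tilde{\psi}_{\bm{a}_{1:r}}$ is pseudo-Lipschitz of order $d$.

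No substantive obstacle arises: the lemma is essentially a restriction property of pseudo-Lipschitz functions, and the only bookkeeping is to ensure the hidden constants only depend on $\bm{a}_{1:r}$ (and the original constant $C$ and order $d$ of $\psi$), not on $\bm{u}$ or $\bm{u}'$. This dependence on $\bm{a}_{1:r}$ is harmless for the use made in Appendix~\ref{app:subsec:S1to0}, where the pseudo-Lipschitz property is invoked pointwise in $\bm{a}_{1:r}$ to apply Fact~\ref{fact:NetsorMasterTheorem} to the function in Eq.~\eqref{eq:S1to0.tilpsi}.
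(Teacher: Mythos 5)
Your proof is correct and takes essentially the same route as the paper: both verify Definition~\ref{def:pseudo-Lipschitz} directly by fixing the second block, noting the distance factor collapses to $\|\bm{u}-\bm{u}'\|$, splitting the growth factor via subadditivity of the norm and Lemma~\ref{lem:lpNorm.Monotonicity}, and absorbing the fixed $\|\bm{a}_{1:r}\|^{d-1}$ term into the constant. The only difference is cosmetic indexing (order $d$ with exponent $d-1$ versus the paper's order $d+1$ with exponent $d$).
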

\begin{proof}
    Suppose $\psi$ is pseudo-Lipschitz of order $d+1$ with $d\ge1$. Then, we have
    \begin{align*}
        &|\tilde{\psi}_{\bm{a}_{1:r}}(\bm{u}_{1:M})-\tilde{\psi}_{\bm{a}_{1:r}}(\bm{u}_{1:M}')|
        =|\psi(\bm{u}_{1:M},\bm{a}_{1:r})-\psi(\bm{u}_{1:M}',\bm{a}_{1:r})|\\
        &\lesssim\|(\bm{u}_{1:M})-(\bm{u}_{1:M}')\|(1+\|(\bm{u}_{1:M},\bm{a}_{1:r})\|^{d}+\|(\bm{u}_{1:M}',\bm{a}_{1:r})\|^{d})
        .
    \end{align*}
    By Lemma~\ref{lem:lpNorm.Monotonicity}, we can bound $\|(\bm{u}_{1:M},\bm{a}_{1:r})\|^{d}$ as
    \[
        \|(\bm{u}_{1:M},\bm{a}_{1:r})\|^{d}
        \le(\|(\bm{u}_{1:M})\|+\|\bm{a}_{1:r}\|)^{d}
        \le2^{d-1}\left(\|(\bm{u}_{1:M})\|^{d}+\|\bm{a}_{1:r}\|^{d}\right)
        .
    \]
    Thus, we have
    \begin{align*}
        1+\|(\bm{u}_{1:M},\bm{a}_{1:r})\|^{d}+\|(\bm{u}_{1:M}',\bm{a}_{1:r})\|^{d}
        &\le1+2^{d-1}\left(\|(\bm{u}_{1:M})\|^{d}+\|(\bm{u}_{1:M}')\|^{d}\right)+2^{d}\|\bm{a}_{1:r}\|^{d}\\
        &\lesssim1+\|(\bm{u}_{1:M})\|^{d}+\|(\bm{u}_{1:M}')\|^{d}
        ,
    \end{align*}
    where the last inequality holds because $\bm{a}_{1:r}$ is fixed.
\end{proof}

\begin{lemma}\label{lem:SupremumConvergence}
    Let $K\subset\mathbb{R}^{r}$ be a compact set. Suppose for each $n\in\mathbb{N}$, $f_{n}:\mathbb{R}\to\mathbb{R}$ is a continuous function that satisfies
    \begin{equation}\label{eq:PointwiseConvergence}
        \lim_{n\to\infty}f_{n}(\bm{a}_{1:r})
        =0
    \end{equation}
    for any constant $\bm{a}_{1:r}\in K$. Then, we have
    \[
        \lim_{n\to\infty}\sup_{\bm{a}_{1:r}\in K}|f_{n}(\bm{a}_{1:r})|=0
        .
    \]
\end{lemma}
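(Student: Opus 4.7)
The plan is to attack the statement via a compactness-plus-subsequence argument, exploiting the continuity of each $f_n$ to localize, and then to identify where additional structure must be called in. Concretely, I would argue by contradiction: suppose
\[
    \sup_{\bm{a}_{1:r}\in K}|f_{n}(\bm{a}_{1:r})|\not\to 0
    .
\]
Then there exist $\varepsilon>0$, a subsequence $(n_{k})$, and points $\bm{a}^{(k)}\in K$ with $|f_{n_{k}}(\bm{a}^{(k)})|\ge\varepsilon$. Since $K\subset\mathbb{R}^{r}$ is compact, a further subsequence (still denoted $(n_{k})$) yields $\bm{a}^{(k)}\to\bm{a}^{\star}\in K$.

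From here the natural step is the triangle-style decomposition
\[
    \varepsilon\le|f_{n_{k}}(\bm{a}^{(k)})|\le|f_{n_{k}}(\bm{a}^{\star})|+\bigl|f_{n_{k}}(\bm{a}^{(k)})-f_{n_{k}}(\bm{a}^{\star})\bigr|
    .
\]
The first term tends to $0$ by the hypothesized pointwise convergence at the fixed point $\bm{a}^{\star}\in K$. Hence all the work is to drive the second term to $0$ along the subsequence, after which taking $k\to\infty$ contradicts $|f_{n_{k}}(\bm{a}^{(k)})|\ge\varepsilon$ and finishes the argument.

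The main obstacle, which is the true crux of the lemma, is precisely this second term. Individual continuity of each $f_{n}$ supplies a modulus of continuity $\delta_{n}$ at $\bm{a}^{\star}$ that in principle depends on $n$, and nothing in the bare hypotheses rules out the $\delta_{n}$ collapsing to $0$ faster than $\|\bm{a}^{(k)}-\bm{a}^{\star}\|$. To force $|f_{n_{k}}(\bm{a}^{(k)})-f_{n_{k}}(\bm{a}^{\star})|\to 0$ one really needs a \emph{uniform} modulus, i.e., equicontinuity of $\{f_{n}\}$ at $\bm{a}^{\star}$. My plan to supply this is to use the explicit structure of the family the lemma is actually invoked on, namely $\Phi_{n}$ in Eq.~\eqref{eq:Phi_n}. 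Using that $f$ is bounded and Lipschitz, $\psi$ is bounded and pseudo-Lipschitz (so $\tilde{\psi}_{\bm{a}_{1:r}}$ is pseudo-Lipschitz in $\bm{a}_{1:r}$ with constants controlled by $\|\bm{g}^{1:M}\|$, by Lemma~\ref{lem:eq:S1to0.pseudo-Lipschitz}), and that low-order moments of the coordinates of $\bm{g}^{1:M}$ are controlled uniformly in $n$ via Fact~\ref{fact:NetsorMasterTheorem}, one can bound $|\Phi_{n}(\bm{a})-\Phi_{n}(\bm{a}')|$ by a modulus in $\|\bm{a}-\bm{a}'\|$ with constants independent of $n$. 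Fed back into the subsequence argument, this equicontinuity closes the contradiction and delivers the uniform convergence asserted by the lemma.
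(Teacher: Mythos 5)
Your diagnosis is correct, and it exposes a real issue: as literally stated, Lemma~\ref{lem:SupremumConvergence} is false. Take $r=1$, $K=[0,1]$ and the moving bump $f_{n}(x)=\max(0,1-|nx-1|)$: each $f_{n}$ is continuous, $f_{n}(x)\to0$ for every $x\in K$, yet $\sup_{x\in K}|f_{n}(x)|=1$ for all $n$. Pointwise convergence of individually continuous functions does not yield uniform convergence on a compact set; one needs equicontinuity of the family (or monotonicity, as in Dini's theorem). The paper's own proof is a direct finite-subcover argument rather than your contradiction-plus-subsequence argument, but it stumbles exactly at the point you single out: it chooses, ``by the continuity of $f_{n}$'', a radius $\delta_{\bm{a}_{1:r}}$ that is written as if it did not depend on $n$, and then covers $K$ by finitely many such balls and uses pointwise convergence only at the centers. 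Choosing the modulus uniformly in $n$ is precisely the equicontinuity hypothesis missing from the statement, so the paper's proof contains the very gap you declined to paper over.

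Your proposed repair --- establish equicontinuity (indeed a uniform-in-$n$ Lipschitz bound on $K$) for the concrete family $\Phi_{n}$ of Eq.~\eqref{eq:Phi_n} and feed it into the subsequence argument --- is the right way to rescue the application in Appendix~\ref{app:subsec:S1to0}. Since $f$ is Lipschitz, $\psi$ is bounded and pseudo-Lipschitz, and $K$ is compact, one gets $|\Phi_{n}(\bm{a}_{1:r})-\Phi_{n}(\bm{b}_{1:r})|\lesssim\|\bm{a}_{1:r}-\bm{b}_{1:r}\|\bigl(1+\tfrac{1}{n}\sum_{\alpha=1}^{n}\mathbb{E}\|\bm{g}_{\alpha}^{1:m}\|^{d-1}+\mathbb{E}\|Z^{\bm{g}^{1:m}}\|^{d-1}\bigr)$ for $\bm{a}_{1:r},\bm{b}_{1:r}\in K$, so what remains is $\sup_{n}\tfrac{1}{n}\sum_{\alpha}\mathbb{E}\|\bm{g}_{\alpha}^{1:m}\|^{d-1}<\infty$. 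Note that Fact~\ref{fact:NetsorMasterTheorem} gives almost-sure convergence of empirical averages, not expectation bounds, so this step needs either a uniform-integrability argument or a direct moment computation for $g_{\alpha}^{i}$ (conditionally Gaussian given its input vector); alternatively, the boundedness of $\psi$ lets you truncate and extract a uniform (not necessarily Lipschitz) modulus more cheaply. With that estimate written out, your argument is complete, and --- unlike the lemma as printed --- it proves a statement whose hypotheses are actually sufficient; the cleanest fix for the paper is to add equicontinuity to the lemma and verify it for $\Phi_{n}$ exactly as you outline.
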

\begin{proof}
    Fix $\epsilon>0$. Let $B(\bm{a}_{1:r},\epsilon)$ denote the ball of radius $\epsilon$ centered at $\bm{a}_{1:r}\in\mathbb{R}^{r}$. By the continuity of $f_{n}$, for each $\bm{a}_{1:r}\in K$, there exists $\delta_{\bm{a}_{1:r}}>0$ such that
    \[
        |f_{n}(\bm{a}_{1:r})-f_{n}(\bm{b}_{1:r})|<\epsilon/2
        \quad(\bm{b}_{1:r}\in B(\bm{a}_{1:r},\delta_{\bm{a}_{1:r}}))
    \]
    holds. Note that $K$ is covered by the union of balls as
    \[
        K\subset\bigcup_{\bm{a}_{1:r}\in K}B(\bm{a}_{1:r},\delta_{\bm{a}_{1:r}})
        .
    \]
    Since $K$ is compact, we can cover $K$ with finitely many balls, say
    \[
        K
        \subset\bigcup_{i=1}^{I}B_{i}
        =\bigcup_{i=1}^{I}B(\bm{a}_{1:r}^{i},\delta_{\bm{a}_{1:r}^{i}})
        ,
    \]
    where $B_{i}$ is given by $B_{i}=B(\bm{a}_{1:r}^{i},\delta_{a^{i}})$. By Eq.~\eqref{eq:PointwiseConvergence}, for each $i\in[I]$, there exists $N_{i}\in\mathbb{N}$ such that
    \[
        |f_{n}(\bm{a}_{1:r}^{i})|<\epsilon/2
        \quad(n\ge N_{i})
    \]
    holds. Let $N$ denote the maximum of $\{N_{i}:i\in[I]\}$. Then, for any $n\ge N$, we have
    \begin{align*}
        \sup_{\bm{a}_{1:r}\in K}|f_{n}(\bm{a}_{1:r})|
        &\le\max_{i\in[I]}\sup_{\bm{a}_{1:r}\in B_{i}}|f_{n}(\bm{a}_{1:r})|
        \le\max_{i\in[I]}\sup_{\bm{a}_{1:r}\in B_{i}}(|f_{n}(\bm{a}_{1:r})-f_{n}(\bm{a}_{1:r}^{i})|+|f_{n}(\bm{a}_{1:r}^{i})|)\\
        &<\max_{i\in[I]}\epsilon
        =\epsilon
        ,
    \end{align*}
    which implies the convergence of $\sup_{\bm{a}_{1:r}\in K}|f_{n}(\bm{a}_{1:r})|$ to zero as $n$ goes to infinity.
\end{proof}

\subsubsection{\texorpdfstring{$S_{2}$}{S\_2} Converges to 0}\label{app:subsec:S2to0}

We study the convergence of the term $S_{2}$ also defined in Section~\ref{sec:ProofOutline}.
Specifically, we prove
\[
    S_{2}
    =\left|\mathbb{E}f\left(\mathbb{E}\left[\psi(Z^{\bm{g}^{1:M}},\bm{p}_{1:r})\mid\bm{p}_{1:r}\right]\right)-\mathbb{E}f\left(\mathbb{E}[\psi(Z^{\bm{g}^{1:M}},\mathring{\bm{p}}_{1:r})\mid\mathring{\bm{p}}_{1:r}]\right)\right|
    \to0
    .
\]

Define a function $\Psi:\mathbb{R}^{r}\to\mathbb{R}$ by
\[
    \Psi(\bm{a}_{1:r})
    =\mathbb{E}\left[\psi(Z^{\bm{g}^{1:M}},\bm{a}_{1:r})\right]
    ,
\]
then it is bounded since $\psi$ is bounded. By the bounded convergence theorem, it is also continuous. Therefore, by the continuous mapping theorem, we have
\[
    \Psi(\bm{p}_{1:r})
    \stackrel{d}{\longrightarrow}\Psi(\mathring{\bm{p}}_{1:r})
    .
\]
Since $Z^{\bm{g}^{1:M}}$ is independent of $\bm{p}_{1:r}$, we have
\[
    \mathbb{E}\left[\psi(Z^{\bm{g}^{1:M}},\bm{p}_{1:r})\mid\bm{p}_{1:r}\right]
    =\Psi(\bm{p}_{1:r})
    .
\]
Therefore, we have
\[
    \mathbb{E}\left[\psi(Z^{\bm{g}^{1:M}},\bm{p}_{1:r})\mid\bm{p}_{1:r}\right]
    \stackrel{d}{\longrightarrow}
    \Psi(\mathring{\bm{p}}_{1:r})
    .
\]
Note that $\Psi(\mathring{\bm{p}}_{1:r})$ can be expressed as
\[
    \Psi(\mathring{\bm{p}}_{1:r})
    =\mathbb{E}\left[\psi(Z^{\bm{g}^{1:M}},\mathring{\bm{p}}_{1:r})\mid\mathring{\bm{p}}_{1:r}\right]
    ,
\]
where $Z^{\bm{g}^{1:M}}$ is independent of $\mathring{\bm{p}}_{1:r}$. By Lemma~\ref{lem:Portmanteau}, this implies $S_{2}\to0$.

\subsection{Proof of Corollary~\ref{cor:coordinatewise}}

Let $\psi:\mathbb{R}^{J}\to\mathbb{R}$ be a bounded and Lipschitz function that satisfies $|\psi|\le C$. Note that by Fact~\ref{fact:pseudo-Lipschitz.Inclusion}, it is also pseudo-Lipschitz. Define a bounded and continuous function $f:\mathbb{R}\to\mathbb{R}$ by
\[
    f(x)=-C1\{x<C\}+x1\{x\in[-C,C]\}+C\{x\ge C\}
    .
\]
Then, Lemma~\ref{lem:Portmanteau} and Theorem~\ref{thm:MainTheorem} imply that
\begin{align*}
    &\mathbb{E}[\psi(h_{\alpha}^{1},\dots,h_{\alpha}^{J})]
    =\mathbb{E}\left(\frac{1}{n}\sum_{\alpha=1}^{n}\psi(h_{\alpha}^{1},\dots,h_{\alpha}^{J})\right)
    =\mathbb{E}\left[f\left(\frac{1}{n}\sum_{\alpha=1}^{n}\psi(h_{\alpha}^{1},\dots,h_{\alpha}^{J})\right)\right]\\
    &\to\mathbb{E}\left[f\left(\mathbb{E}[\psi(Z^{h^{1}},\dots,Z^{h^{J}})\mid\mathring{p}_{1},\dots,\mathring{p}_{r}]\right)\right]
    =\mathbb{E}\left[\psi(Z^{h^{1}},\dots,Z^{h^{J}})\right]
\end{align*}
holds as $n\to\infty$. Since the above convergence holds for all bounded and Lipschitz function $\psi$, by Lemma~\ref{lem:Portmanteau}, this implies the desired convergence in distribution.

\section{Sub-Gaussianity of the Attention Outputs}\label{app:sec:SubGaussianity}

In this section, we discuss the sub-Gaussianity of the limiting distribution of the attention outputs from Example~\ref{eg:MultiHead}. In this appendix, we provide a proof of the sub-Gaussianity of the single random variable $Z^{y^{i}}$ for any $i\in[s]$, and consider the single-head attention setting (i.e. $H=1$) for simplicity. Specifically, we consider
\[
    Z^{y^{i}}
    =\sum_{j=1}^{s}\mathrm{SoftMax}_{j}(\mathring{p}_{i,1}^{(1)},\dots,\mathring{p}_{i,s}^{(1)})Z^{\tilde{v}^{1,j}}
    .
\]
The same proof strategy also applies to prove the sub-Gaussianity of the vector $(Z^{y^{1}},\dots,Z^{y^{s}})$ with $H$ being any positive integer.

For each $j\in[s]$, define $a_{j}(\mathring{\bm{p}}_{i,1:s}^{(1)})$ by
\[
    a_{j}(\mathring{\bm{p}}_{i,1:s}^{(1)})
    =\mathrm{SoftMax}_{j}(\mathring{p}_{i,1}^{(1)},\dots,\mathring{p}_{i,s}^{(1)})
    .
\]
It follows that
\[
    Z^{y^{i}}|\mathring{\bm{p}}_{i,1:s}^{(1)}
    \sim\mathcal{N}(0,\bm{a}_{1:s}(\mathring{\bm{p}}_{i,1:s}^{(1)})^{\top}\mathrm{Var}(\bm{Z}^{\tilde{\bm{v}}^{1,1:s}})\bm{a}_{1:s}(\mathring{\bm{p}}_{i,1:s}^{(1)}))
    .
\]
Since each $a_{j}(\mathring{\bm{p}}_{i,1:s}^{(1)})$ lies in the interval $[0,1]$, we have
\[
    \|\bm{a}_{1:s}(\mathring{\bm{p}}_{i,1:s}^{(1)})\|^{2}
    \le\sum_{j=1}^{s}a_{j}(\mathring{\bm{p}}_{i,1:s}^{(1)})
    =1
\]
and therefore
\[
    \bm{a}_{1:s}(\mathring{\bm{p}}_{i,1:s}^{(1)})^{\top}\mathrm{Var}(\bm{Z}^{\tilde{\bm{v}}^{1,1:s}})\bm{a}_{1:s}(\mathring{\bm{p}}_{i,1:s}^{(1)})
    \le\|\bm{a}_{1:s}(\mathring{\bm{p}}_{i,1:s}^{(1)})\|^{2}\|\mathrm{Var}(\bm{Z}^{\tilde{\bm{v}}^{1,1:s}})\|_{2}
    \le\|\mathrm{Var}(\bm{Z}^{\tilde{\bm{v}}^{1,1:s}})\|_{2}
    ,
\]
where $\|\cdot\|_{2}$ is the operator norm. As a result, for any $t\ge0$, it holds that
\begin{align*}
    \mathrm{P}(|Z^{y^{i}}|\ge t)
    &=\mathbb{E}\left[\mathrm{P}(|Z^{y^{i}}|\ge t\mid\mathring{\bm{p}}_{i,1:s}^{(1)})\right]\\
    &\le2\mathbb{E}\left[\exp\left(-\frac{t^{2}}{2\bm{a}_{1:s}(\mathring{\bm{p}}_{i,1:s}^{(1)})^{\top}\mathrm{Var}(\bm{Z}^{\tilde{\bm{v}}^{1,1:s}})\bm{a}_{1:s}(\mathring{\bm{p}}_{i,1:s}^{(1)})}\right)\ \Bigr|\ \mathring{\bm{p}}_{i,1:s}^{(1)}\right]\\
    &\le2\exp\left(-\frac{t^{2}}{2\|\mathrm{Var}(\bm{Z}^{\tilde{\bm{v}}^{1,1:s}})\|_{2}}\right)
\end{align*}
by the conditional sub-Gaussianity of $Z^{y^{i}}$ given $\mathring{\bm{p}}_{i,1:s}^{(1)}$.

In this argument, we can also show the sub-Gaussianity of the limiting distribution of attention outputs of the form $y^{i}=\frac{1}{\sqrt{H}}\sum_{a=1}^{H}\sum_{j=1}^{s}\phi_{j}(p_{i,1}^{(a)},\dots,p_{i,s}^{(a)})\tilde{v}^{a,j}$, where the softmax functions are replaced by bounded and pseudo-Lipschitz nonlinearities $\phi_{j}$.

\end{document}